\newcommand{\be}{\bm{e}}
\newcommand{\bx}{\bm{x}}
\newcommand{\by}{\bm{y}}
\newcommand{\bv}{\bm{v}}
\newcommand{\bz}{\bm{z}}
\newcommand{\bw}{\bm{w}}
\DeclareMathOperator*{\argmin}{\arg\!\min}
\newcommand{\Reals}{\mathbb{R}}
\newcommand{\defined}{\triangleq}
\newcommand{\ExpVal}[2]{\mathbb{E}\left[ #2 \right]}
\newcommand{\ba}{\bm{a}}
\newcommand{\bb}{\bm{b}}
\newcommand{\Varr}[2]{\mathsf{Var}_{#1}\left(#2\right)}
\newcommand{\Var}[1]{\mathsf{Var}\left(#1\right)}
\newcommand{\EE}[1]{\ExpVal{}{#1}}
\newcommand{\indicator}[1]{\mathbb{I}_{#1}}
\newcommand{\mmae}{\mathsf{mmae}}
\newcommand{\median}{\mathsf{median}}
\newcommand{\rU}{\textnormal{U}}
\newcommand{\rV}{\textnormal{V}}
\newcommand{\rW}{\textnormal{W}}
\newcommand{\rX}{\textnormal{X}}
\newcommand{\rY}{\textnormal{Y}}
\newcommand{\rZ}{\textnormal{Z}}
\newcommand{\rS}{\textnormal{S}}
\newcommand{\rN}{\textnormal{N}}
\newcommand{\dif}{\textrm{d}}
\newcommand{\TV}{\textnormal{D}_{\scalebox{.6}{\textnormal TV}}}
\newcommand{\chisquare}{\textnormal{D}_{\chi^2}}
\newcommand{\EEE}[2]{\mathbb{E}_{#1}\left[ #2 \right]}
\newcommand{\KL}{\textnormal{D}_{\scalebox{.6}{\textnormal KL}}}
\definecolor{longhorn}{rgb}{0.8, 0.33, 0.0}
\newtheorem{assumption}{Assumption}
\begin{document}

\title{Generalization Bounds for Noisy Iterative Algorithms \\Using Properties of Additive Noise Channels}

\author{\name Hao Wang \email {hao\_wang@g.harvard.edu}\\
       \addr Harvard University
       \AND
       \name Rui Gao \email rui.gao@mccombs.utexas.edu \\
       \addr The University of Texas at Austin
       \AND
       \name Flavio P. Calmon \email flavio@seas.harvard.edu\\
       \addr Harvard University
       }

\editor{Gabor Lugosi}

\maketitle

\begin{abstract}%
Machine learning models trained by different optimization algorithms under different data distributions can exhibit distinct generalization behaviors. In this paper, we analyze the generalization of models trained by noisy iterative algorithms. We derive distribution-dependent generalization bounds by connecting noisy iterative algorithms to additive noise channels found in communication and information theory. Our generalization bounds shed light on several applications, including differentially private stochastic gradient descent (DP-SGD), federated learning, and stochastic gradient Langevin dynamics (SGLD). We demonstrate our bounds through numerical experiments, showing that they can help understand recent empirical observations of the generalization phenomena of neural networks.
\end{abstract}
\begin{keywords}
Information theory, algorithmic generalization bound, differential privacy, stochastic gradient Langevin dynamics, federated learning.
\end{keywords}

\section{Introduction}

Many learning algorithms aim to solve the following (possibly non-convex) optimization problem:
\begin{align}
    \min_{\bw \in \mathcal{W}}~L_{\mu}(\bw),\quad \text{where } L_{\mu}(\bw) \defined \EE{\ell(\bw, \rZ)} = \int_{\mathcal{Z}}\ell(\bw, \bz) \dif \mu(\bz),
\end{align}
where $\bw\in \mathcal{W} \subseteq \Reals^d$ is the model parameter (e.g., weights of a neural network) to optimize; $\mu$ is the underlying data distribution that generates $\rZ$; and $\ell:\mathcal{W}\times\mathcal{Z}\to \Reals^+$ is the loss function (e.g., 0-1 loss). In the context of supervised learning, $\rZ$ is often composed by a feature vector $\rX$ and its corresponding label $\rY$. Since the data distribution $\mu$ is unknown, $L_{\mu}(\bw)$ cannot be computed directly. In practice, a data set $\rS\defined (\rZ_1,\cdots,\rZ_n)$ containing $n$ i.i.d. points $\rZ_i\sim \mu$ is used to minimize an empirical risk:
\begin{align}
\label{eq::emp_risk_opt}
    \min_{\bw\in\mathcal{W}}~L_{\rS}(\bw), \quad \text{where } L_{\rS}(\bw) \defined \frac{1}{n}\sum_{i=1}^n \ell(\bw,\rZ_i).
\end{align}

We consider the following (projected) noisy iterative algorithm for solving the empirical risk optimization in \eqref{eq::emp_risk_opt}. The parameter $\bw$ is initialized with a random point $\rW_0 \in \mathcal{W}$ and updated using the following rule:
\begin{align}
\label{eq::rec_update_parameter_Extend}
    \rW_t = \mathsf{Proj}_{\mathcal{W}} \left(\rW_{t-1} - \eta_t \cdot g(\rW_{t-1}, \{\rZ_{i}\}_{i \in \mathcal{B}_t}) + m_t \cdot \rN_t \right),
\end{align}
where $\eta_t$ is the learning rate; $\rN_t$ is an additive noise drawn independently from a distribution $P_{\rN}$; $m_t$ is the magnitude of the noise; $\mathcal{B}_t \subseteq [n]$ contains the indices of the data points used at the current iteration and $b_t \defined |\mathcal{B}_t|$; $g$ is the direction for updating the parameter (e.g., gradient of the loss function); and 
\begin{align}
\label{eq::update_minibatch}
    g(\rW_{t-1}, \{\rZ_{i}\}_{i \in \mathcal{B}_t}) \defined \frac{1}{b_t}\sum_{i\in \mathcal{B}_t} g(\rW_{t-1}, \rZ_i).
\end{align}
At the end of each iteration, the parameter is projected onto the domain $\mathcal{W}$, i.e., $\mathsf{Proj}_{\mathcal{W}}(\bw) \defined \argmin_{\bw'\in \mathcal{W}}\|\bw' - \bw\|$. The recursion in \eqref{eq::rec_update_parameter_Extend} is run $T$ iterations and the final output is a random variable $\rW_T$. 

The goal of this paper is to provide an upper bound for the \emph{expected generalization gap}: 
\begin{align}
\label{eq::exp_gen_gap}
    \EE{L_\mu(\rW_T)-L_{\rS}(\rW_T)},
\end{align}
where the expectation is taken over the randomness of the training data set $\rS$ and of the noisy iterative algorithm.

Noisy iterative algorithms are used in different practical settings due to their many attractive properties \citep[see e.g.,][]{li2016preconditioned,zhang2017hitting,raginsky2017non,xu2018global}. 
For example, differentially private SGD (DP-SGD) algorithm \citep[see e.g.,][]{song2013stochastic,abadi2016deep}, one kind of noisy iterative algorithm, is often used to train machine learning models while protecting user privacy \citep{dwork2006calibrating}. Recently, it has been implemented in open-source libraries, including Opacus \citep{FB2020Opacus} and TensorFlow Privacy \citep{G2019TFPrivacy}. 
The additive noise in iterative algorithms may also mitigate overfitting for deep neural networks (DNNs) \citep{neelakantan2015adding}. 
From a theoretical perspective, noisy iterative algorithms can escape local minima \citep{kleinberg2018alternative} or saddle points \citep{ge2015escaping} and generalize well \citep{pensia2018generalization}.

We derive generalization bounds for noisy iterative algorithms. Although these bounds may be vacuous, when calculated numerically, they maintain a high correlation with the generalization gap. As a result, they shed light on many empirical observations of neural networks that are not explained by uniform notions of hypothesis class complexity \citep{vapnik1971uniform,valiant1984theory}.
For example, a neural network trained using true labels exhibits better generalization ability than a network trained using corrupted labels even when the network architecture is fixed and perfect training accuracy is achieved \citep{zhang2016understanding}. Distribution-independent bounds is unable to capture this phenomenon because they are invariant to both true data and corrupted data.\footnote{Note that there are generalization bounds \citep[see e.g.,][]{bartlett1998sample,bartlett2017spectrally,koltchinskii2002empirical} that implicitly depend on data distribution through, e.g., margin and/or weight matrices' norm. Since different training data result in distinct weight matrices, these bounds may capture some generalization phenomena, such as label corruption.} In contrast, our bounds capture this empirical observation, exhibiting a lower value on networks trained on true labels compared to ones trained on corrupted labels (Figure~\ref{Fig::label_corruption}). 
Another example is that a wider network often has a more favourable generalization capability \citep{neyshabur2014search}. This may seem counter-intuitive at first glance since one may expect that wider networks have a higher VC-dimension and, consequently, would have a higher generalize gap. Our bounds capture this behaviour and are decreasing with respect to the neural network width (Figure~\ref{Fig::Hidden_Units}).

We present three generalization bounds for the noisy iterative algorithms (Section~\ref{sec::Gen_Bound}). These bounds rely on different kinds of $f$-divergence but are proved in a uniform manner by exploring properties of additive noise channels (Section~\ref{sec::prop_add_channel}). Among them, the KL-divergence bound can deal with sampling with replacement; the total variation bound is often the tightest one; and the $\chi^2$-divergence bound requires the mildest assumption. We apply our results to applications, including DP-SGD, federated learning, and SGLD (Section~\ref{sec::application}). Under these applications, our generalization bounds can be significantly simplified and estimated from the training data. Finally, we demonstrate our bounds through numerical experiments (Section~\ref{sec::experiments}), showing that they can predict the behavior of the true generalization~gap.

Our generalization bounds incorporate a time-decaying factor. This decay factor tightens the bounds by enabling the impact of early iterations to reduce with time. Our analysis is motivated by a line of recent works \citep{feldman2018privacy,balle2019privacy,asoodeh2020privacy} which observed that data points used in the early iterations enjoy stronger differential privacy guarantees than those occurring late. Accordingly, we prove that if a data point is used at an early iteration, its contribution to our generalization bounds is decreasing with time due to the cumulative effect of the additive noise.

The proof techniques of this paper are based on fundamental tools from information theory. We first use an information-theoretic framework, proposed by \citet{russo2016controlling} and \citet{xu2017information} and further tightened by \citet{bu2020tightening}, for deriving algorithmic generalization bounds. This framework relates the generalization gap in \eqref{eq::exp_gen_gap} with the $f$-information\footnote{The $f$-information (see \eqref{eq::defn_f_inf} for its definition) includes a family of measures, such as mutual information, which quantify the dependence between two random variables.} $I_f(\rW_T;\rZ_i)$ between the algorithmic output $\rW_T$ and each individual data point $\rZ_i$. However, estimating this $f$-information from data is intractable since the underlying distribution is unknown. Given this major challenge, we connect the noisy iterative algorithms with additive noise channels, a fundamental model used in data transmission. As a result, we further upper bound the $f$-information by a quantity that can be estimated from data by developing new properties of additive noise channels. Moreover, we incorporate a time-decaying factor into our bounds. This factor is established by strong data processing inequalities \citep{dobrushin1956central,cohen1998comparisons} and has an intuitive interpretation: the dependence between algorithmic output $\rW_T$ and the data points used in the early iterations is decreasing with time due to external additive noise (i.e., $I_f(\rW_T;\rZ_i)$ is decreasing with $T$ for a fixed $\rZ_i$).

\subsection{Related Works}

There are significant recent works which adopt the information-theoretic framework \citep{xu2017information} for analyzing the generalization capability of noisy iterative algorithms. Among them, \citet{pensia2018generalization} initially derived a generalization bound in Corollary~1 and their bound was extended in Proposition~3 of \citet{bu2020tightening} for the SGLD algorithm. Although the framework in \citet{pensia2018generalization} can be applied to a broad class of noisy iterative algorithms, their bound in Corollary~1 and Proposition~3 in \citet{bu2020tightening} rely on the Lipschitz constant of the loss function, which makes them independent of the data distribution. Distribution-independent bounds can be potentially loose since the Lipschitz constant may be large and may not capture some empirical observations (e.g., label corruption \citep{zhang2016understanding}). Specifically, this Lipschitz constant only relies on the architecture of the network instead of the weight matrices or the data distribution so it is the same for a network trained from corrupted data and a network trained from true data.

To obtain a distribution-dependent bound, \citet{negrea2019information} improved the analysis in \citet{pensia2018generalization} by replacing the Lipschitz constant with a gradient prediction residual when analyzing the SGLD algorithm.
Their follow-up work \citep{haghifam2020sharpened} investigated the Langevin dynamics algorithm (i.e., full batch SGLD), which was later extended by \citet{rodriguez2020random} to SGLD, and observed a time-decaying phenomenon in their experiments. Specifically, \citep{haghifam2020sharpened} incorporated a quantity, namely the squared error probability of the hypothesis test, into their bound in Theorem~4.2 and this quantity decays with the number of iterations. This seems to suggest that earlier iterations have a larger impact on their generalization bound. In contrast, our decay factor indicates that the impact of earlier iterations is reducing with the total number of iterations. Furthermore, the bound in their Theorem~4.2 requires a bounded loss function while our $\chi^2$-based generalization bound only needs the variance of the loss function to be bounded. 
More broadly, \citet{neu2021information} investigated the generalization properties of SGD. However, the generalization bound in their Proposition~3 suffers from a weaker order $O(1/\sqrt{n})$ when the analysis is applied to the SGLD algorithm.

In addition to the works discussed above, there is a line of papers on deriving SGLD generalization bounds \citep{mou2018generalization,li2019generalization} through other proof techniques. 
Among them, \citet{mou2018generalization} introduced two generalization bounds. The first one \citep[Theorem~1 of][]{mou2018generalization}, a stability-based bound, achieves $O(1/n)$ rate in terms of the sample size $n$ but relies on the Lipschitz constant of the loss function which makes it distribution-independent. 
The second one \citep[Theorem~2 of][]{mou2018generalization}, a PAC-Bayes bound, replaces the Lipschitz constant by an expected-squared gradient norm but suffers from a slower rate $O(1/\sqrt{n})$. In contrast, our SGLD bound in Proposition~\ref{prop::gen_bound_SGLD} has order $O(1/n)$ and tightens the expected-squared gradient norm by the variance of gradients. The PAC-Bayes bound in \citet{mou2018generalization} also incorporates an explicit time-decaying factor. However, their analysis seems to heavily rely on the Gaussian noise. In contrast, our generalization bounds include a decay factor for a broad class of noisy iterative algorithms.
A follow-up work by \citet{li2019generalization} combined the algorithmic stability approach with PAC-Bayesian theory and presented a bound which achieves order $O(1/n)$. However, their bound requires the scale of the learning rate to be upper bounded by the inverse Lipschitz constant of the loss function, which could result in negligible learning rates in practice. In contrast, we do not need any assumptions on the learning rate.

A standard approach \citep[see e.g.,][]{he2021tighter} of deriving a generalization bound for the DP-SGD algorithm follows two steps: (i) prove that DP-SGD satisfies the $(\epsilon,\delta)$-DP guarantees \citep{song2013stochastic,wu2017bolt,feldman2018privacy,balle2019privacy,asoodeh2020privacy}; (ii) derive/apply a generalization bound that holds for \emph{any} $(\epsilon,\delta)$-DP algorithm \citep{dwork2015preserving,bassily2021algorithmic,jung2019new}. However, generalization bounds obtained from this procedure are distribution-independent since DP is robust with respect to the data distribution. In contrast, our bounds in Section~\ref{subsec::DP_SGD} are distribution-dependent. We extend our analysis and derive a generalization bound in the setting of federated learning in Section~\ref{subsec::FL}. A previous work by \citet{yagli2020information} also proved a generalization bound for federated learning in their Theorem~3 but their bound involves a mutual information which could be hard to estimate from data.

The conference version \citep{wang21analyzing} of this work investigates the generalization of the SGLD and DP-SGD algorithms. In this paper, we study a broader class of noisy iterative algorithms, including SGLD and DP-SGD as two examples. This extension requires a significant improvement of the proof techniques presented in \citet{wang21analyzing}. Specifically, we introduce a unified framework for deriving generalization bounds through $f$-divergence while the analysis in the prior work \citep{wang21analyzing} is tailored to the KL-divergence. In the context of the DP-SGD algorithm, we prove that the total variation distance leads to a tighter generalization bound and the $\chi^2$-divergence leads to a bound requiring milder assumptions compared with the results in \citet{wang21analyzing}. Finally, we derive a new generalization bound in the context of federated learning as an application of our framework.

\section{Preliminaries}
\label{sec::Prelim}

\subsection{Notations}

For a positive integer $n$, we define the set $[n] \defined \{1,\cdots,n\}$. We denote by $\|\cdot\|_1$ and $\|\cdot\|_2$ the 1-norm and 2-norm of a vector, respectively. A random variable $\rX$ is $\sigma$-sub-Gaussian if $\log\EE{\exp{\lambda(\rX - \EE{\rX})}}\leq \sigma^2 \lambda^2/2$ for any $\lambda\in\Reals$. 
For a random vector $\rX = (\rX_1,\cdots,\rX_d)$, we define its variance and minimum mean absolute error (MMAE) as 
\begin{align}
    \Var{\rX} &\defined \inf_{\ba \in \Reals^d} \EE{\|\rX - \ba\|_2^2}, \label{eq::defn_var_rv}\\
    \mmae(\rX) &\defined \inf_{\ba \in \Reals^d} \EE{\|\rX - \ba\|_1}. \label{eq::defn_mmae_rv}
\end{align}
The vector $\ba$ which minimizes \eqref{eq::defn_var_rv} and \eqref{eq::defn_mmae_rv} are
\begin{align}
    \argmin_{\ba \in \Reals^d} \EE{\|\rX - \ba\|_2^2} 
    &= (\EE{\rX_1},\cdots, \EE{\rX_d}), \\
    \argmin_{\ba \in \Reals^d} \EE{\|\rX - \ba\|_1}
    &= (\median(\rX_1),\cdots,\median(\rX_d)),
\end{align}
where $\median(\rX_i)$ is the median of the random variable $\rX_i$. 

In order to measure the difference between two probability distributions, we recall Csisz{\'a}r's $f$-divergence~\citep{csiszar1967information}. Let $f:(0,\infty)\to \Reals$ be a convex function with $f(1)=0$ and $P$, $Q$ be two probability distributions over a set $\mathcal{X}\subseteq \Reals^d$. The $f$-divergence between $P$ and $Q$ is defined as 
\begin{align}
    \textnormal{D}_{f}(P\|Q) \defined \int_{\mathcal{X}} f\left(\tfrac{\dif P}{\dif Q}\right) \dif Q.
\end{align}
Examples of $f$-divergence include KL-divergence ($f(t) = t\log t$), total variation distance ($f(t)=|t-1|/2$), and $\chi^2$-divergence ($f(t) = t^2-1$). 
The $f$-divergence motivates a way of measuring dependence between a pair of random variables $(\rX,\rY)$. Specifically, the $f$-information between $(\rX,\rY)$ is defined as 
\begin{align}
\label{eq::defn_f_inf}
    I_{f}(\rX;\rY) 
    \defined \textnormal{D}_{f}(P_{\rX,\rY}\|P_{\rX}\otimes P_{\rY})
    = \EE{\textnormal{D}_{f}(P_{\rY|\rX}\|P_{\rY})},
\end{align}
where $P_{\rX,\rY}$ is the joint distribution, $P_{\rX}$, $P_{\rY}$ are the marginal distributions, $P_{\rY|\rX}$ is the conditional distribution, and the expectation is taken over $\rX\sim P_{\rX}$. In particular, if the KL-divergence is used in \eqref{eq::defn_f_inf}, the corresponding $f$-information is the well-known mutual information \citep{shannon1948mathematical}.

\subsection{Information-theoretic Generalization Bounds} 

A recent work by \citet{xu2017information} provided a framework for analyzing algorithmic generalization. Specifically, they considered a learning algorithm as a channel (i.e., conditional probability distribution) that takes a training set $\rS$ as input and outputs a parameter $\rW$. Furthermore, they derived an upper bound for the expected generalization gap using the mutual information $I(\rW;\rS)$. This bound was later tightened by \citet{bu2020tightening} using an individual sample mutual information. Next, we recall the generalization bound in \citet{bu2020tightening}. By adapting their proof and leveraging variational representations of $f$-divergence \citep{nguyen2010estimating}, we present another two generalization bounds based on different kinds of $f$-information. Note that these two bounds can also be obtained from Corollary~1 in \citet{rodriguez2021tighter} and applying Jensen's inequality to Eq.~(15) in the same paper, respectively.
\begin{lemma}
\label{lem::Bu20}
Consider a learning algorithm which takes a data set $\rS=(\rZ_1,\cdots, \rZ_n)$ as input and outputs $\rW$.
\begin{itemize}
    \item \citep[Proposition~1 in][]{bu2020tightening} If the loss $\ell(\bw,\rZ)$ is $\sigma$-sub-Gaussian under $\rZ \sim \mu$ for all $\bw \in \mathcal{W}$, then
    \begin{align}
    \label{eq::Bu_gen_bound}
        \left|\EE{L_\mu(\rW)-L_{\rS}(\rW)}\right|
        \leq \frac{\sqrt{2}\sigma}{n}\sum_{i=1}^n \sqrt{I(\rW; \rZ_i)},
    \end{align}
    where $I(\rW; \rZ_i)$ is the mutual information (i.e., $f$-information with $f(t) = t\log t$).
    \item If the loss $\ell(\bw,\rZ)$ is upper bounded by a constant $A>0$, then
    \begin{align}
    \label{eq::T_inf_bound}
        \left|\EE{L_\mu(\rW)-L_{\rS}(\rW)}\right|
        \leq \frac{A}{n}\sum_{i=1}^n {\textnormal T}(\rW; \rZ_i),
    \end{align}
    where ${\textnormal T}(\rW; \rZ_i)$ is the ${\textnormal T}$-information (i.e., $f$-information with $f(t) = |t-1|/2$).
    \item If the variance of the loss function is finite (i.e., $\Var{\ell(\rW; \rZ)} < \infty$), then
    \begin{align}
    \label{eq::chi_inf_bound}
        \left|\EE{L_\mu(\rW)-L_{\rS}(\rW)}\right|
        \leq \frac{\sqrt{\Var{\ell(\rW; \rZ)}}}{n} \sum_{i=1}^n \sqrt{\chi^2(\rW;\rZ_i)},
    \end{align}
    where $\chi^2(\rW;\rZ_i)$ is the $\chi^2$-information (i.e., $f$-information with $f(t) = t^2-1$) and $\rZ$ is a fresh data point which is independent of $\rW$ (i.e., $(\rW,\rZ)\sim P_{\rW}\otimes \mu$).
\end{itemize}
\end{lemma}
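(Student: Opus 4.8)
The plan is to obtain all three bounds from one decomposition of the generalization gap together with the change-of-measure inequality that is naturally attached to each $f$-divergence.

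\emph{Ghost-sample step.} For each $i\in[n]$ introduce $\rZ_i'\sim\mu$, an independent copy of $\rZ_i$ that is also independent of $\rW$. Since $\EE{L_\mu(\rW)}=\frac1n\sum_{i=1}^n\EE{\ell(\rW,\rZ_i')}$ and $\EE{L_\rS(\rW)}=\frac1n\sum_{i=1}^n\EE{\ell(\rW,\rZ_i)}$,
\begin{equation}
\EE{L_\mu(\rW)-L_\rS(\rW)}=\frac1n\sum_{i=1}^n\Big(\EEE{P_\rW\otimes P_{\rZ_i}}{\ell(\rW,\rZ_i)}-\EEE{P_{\rW,\rZ_i}}{\ell(\rW,\rZ_i)}\Big).
\end{equation}
It therefore suffices, for each fixed $i$, to bound the difference between the integral of the fixed function $(\bw,\bz)\mapsto\ell(\bw,\bz)$ against the joint law $P\defined P_{\rW,\rZ_i}$ and against the product of marginals $Q\defined P_\rW\otimes P_{\rZ_i}$, and then sum over $i$ and apply the triangle inequality. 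Note $P_{\rZ_i}=\mu$, so under $Q$ the pair $(\rW,\rZ_i)$ has the same law as $(\rW,\rZ)$ with $\rZ$ a fresh draw from $\mu$; in particular $Q$ is exactly the reference measure under which the sub-Gaussianity / finite-variance hypotheses are stated.

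\emph{Change of measure.} Fix $i$ and abbreviate $\phi=\ell$.
\begin{itemize}
\item \emph{(KL.)} Following \citet{bu2020tightening}: condition on $\rW=\bw$, for which $\phi(\bw,\cdot)$ is $\sigma$-sub-Gaussian under $\mu$; apply the Donsker--Varadhan formula together with the sub-Gaussian bound $\log\EEE{\mu}{e^{\lambda(\phi(\bw,\cdot)-\EEE{\mu}{\phi(\bw,\cdot)})}}\le\lambda^2\sigma^2/2$; optimize $\lambda>0$ to get $|\EEE{\mu}{\phi(\bw,\cdot)}-\EEE{P_{\rZ_i|\rW=\bw}}{\phi(\bw,\cdot)}|\le\sqrt{2\sigma^2\,\KL(P_{\rZ_i|\rW=\bw}\|P_{\rZ_i})}$; finally take $\EEE{\rW}{\cdot}$ and use Jensen to pull it inside the square root, yielding $\sqrt{2\sigma^2\,I(\rW;\rZ_i)}$.
\item \emph{(TV.)} A function with values in $[0,A]$ has oscillation $\le A$: after replacing $\phi$ by $\phi-A/2$ (legitimate since $\int(\dif P-\dif Q)=0$) one gets $|\EEE{P}{\phi}-\EEE{Q}{\phi}|=\big|\int\phi\,(\dif P-\dif Q)\big|\le\tfrac{A}{2}\int|\dif P-\dif Q|=A\cdot\TV(P\|Q)$, and $\TV(P\|Q)$ is the $f$-divergence with $f(t)=|t-1|/2$, i.e.\ $\textnormal{T}(\rW;\rZ_i)$.
\item \emph{($\chi^2$.)} If $P\not\ll Q$ then $\chisquare(P\|Q)=\infty$ and the bound is vacuous; otherwise write $\EEE{P}{\phi}-\EEE{Q}{\phi}=\EEE{Q}{(\phi-c)\big(\tfrac{\dif P}{\dif Q}-1\big)}$ for any constant $c$ (legitimate since $\EEE{Q}{\tfrac{\dif P}{\dif Q}-1}=0$), and apply Cauchy--Schwarz to get $|\EEE{P}{\phi}-\EEE{Q}{\phi}|\le\sqrt{\EEE{Q}{(\phi-c)^2}}\,\sqrt{\chisquare(P\|Q)}$; taking the infimum over $c$ and using $Q=P_\rW\otimes\mu$ bounds the first factor by $\sqrt{\Var{\ell(\rW,\rZ)}}$, which is independent of $i$, while $\chisquare(P\|Q)=\chi^2(\rW;\rZ_i)$.
\end{itemize}

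\emph{Assembly and main obstacle.} Plugging each bound into the ghost-sample identity, summing over $i$, dividing by $n$, and pulling the $i$-independent prefactors ($\sqrt2\sigma$, $A$, $\sqrt{\Var{\ell(\rW,\rZ)}}$) out of the sum gives the three displayed inequalities. There is no substantial obstacle: the content lies entirely in picking the right elementary change-of-measure inequality for each $f$-divergence and bookkeeping the constants — the factor $\tfrac12$ in the definition of total variation, the centering constant $c$ and the Cauchy--Schwarz step for $\chi^2$, and the conditioning-then-Jensen maneuver for KL (needed because the sub-Gaussian hypothesis is only a conditional statement) — together with checking that the hypotheses, stated relative to the fresh-sample law $P_\rW\otimes\mu$, coincide with the reference measure $Q$ produced in the first step. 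One could alternatively route the TV and $\chi^2$ arguments through the Nguyen--Wainwright--Jordan variational lower bound $\textnormal{D}_f(P\|Q)\ge\sup_g\{\EEE{P}{g}-\EEE{Q}{f^*(g)}\}$ for the appropriate $f$, but the direct computations above are shorter.
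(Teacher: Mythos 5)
Your proof is correct and follows essentially the same route as the paper: the same ghost-sample decomposition into per-index differences $\EEE{P_\rW\otimes\mu}{\ell}-\EEE{P_{\rW,\rZ_i}}{\ell}$, followed by the change-of-measure inequality attached to each divergence (the paper cites the Donsker--Varadhan/sub-Gaussian argument of \citet{bu2020tightening} for KL and invokes the variational representations of $\TV$ and $\chisquare$ as known facts, whereas you derive the latter two inline via the oscillation bound and the centering-plus-Cauchy--Schwarz step, which are exactly the proofs of those representations). No gaps.
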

\begin{proof}
See Appendix~\ref{append::proof_Bu20}.
\end{proof}
We apply Lemma~\ref{lem::Bu20} to analyze the generalization capability of noisy iterative algorithms. Estimating the $f$-information in Lemma~\ref{lem::Bu20} from data is intractable. Hence, we further upper bound these $f$-information by exploring properties of additive noise channels in the next section. Furthermore, we also incorporate a time-decaying factor into our bound, which is established by strong data processing inequalities, recalled in the upcoming subsection.

Although our analysis is applicable for \emph{any} $f$-information, we focus on the three $f$-information in Lemma~\ref{lem::Bu20} since:
\begin{itemize}[leftmargin=*]
\item Mutual information is often easier to work with due to its many useful properties. For example, the chain rule of mutual information plays an important role for handling sampling with replacement (see Section~\ref{subsec::SGLD}). 

\item ${\textnormal T}$-information often yields a tighter bound than \eqref{eq::Bu_gen_bound} and \eqref{eq::chi_inf_bound}. This can be seen by the following $f$-divergence inequalities \citep[see Eq.~1 and 94 in][]{sason2016f}:
\begin{align*}
    \sqrt{2} {\textnormal T}(\rW;\rZ_i) \leq \sqrt{I(\rW;\rZ_i)} \leq \sqrt{\log(1+\chi^2(\rW;\rZ_i))} \leq \sqrt{\chi^2(\rW;\rZ_i)}.
\end{align*}
Furthermore, the ${\textnormal T}$-information can be used to analyze a broader class of noisy iterative algorithms. For example, when the additive noise is drawn from a distribution with bounded support, the other two $f$-information may lead to an infinite generalization bound while the ${\textnormal T}$-information can still give a non-trivial bound (see the last row in Table~\ref{table:C_delta_exp}).

\item $\chi^2$-information requires the mildest assumptions. Apart from bounded loss functions, it is often hard to verify the sub-Gaussianity of $\ell(\bw,\rZ)$ for all $\bw$. The advantage of \eqref{eq::chi_inf_bound} is that it replaces the sub-Gaussian constant with the variance of the loss function.
\end{itemize}

\begin{remark}
Using $f$-information for bounding generalization gap has appeared in prior literature \citep[see e.g.,][]{alabdulmohsin2015algorithmic,jiao2017dependence,wang2019information,esposito2021generalization,rodriguez2021tighter,aminian2021jensen,jose2020information}. 
More broadly, there are significant recent works \citep[see e.g.,][]{raginsky2016information,asadi2018chaining,lopez2018generalization,steinke2020reasoning,hellstrom2020generalization,yagli2020information,hafez2020conditioning,jose2021information,zhou2021individually} on deriving new information-theoretic generalization bounds and applying them to different applications. 
The reason we adopt Lemma~\ref{lem::Bu20} for analyzing noisy iterative algorithms is that it enables us to incorporate a time-decaying factor into our bounds.
\end{remark}

\subsection{Strong Data Processing Inequalities} 

In order to characterize the time-decaying phenomenon, we use an information-theoretic tool: strong data processing inequalities \citep{dobrushin1956central,cohen1998comparisons}.
We start with recalling the data processing inequality.
\begin{lemma}
\label{lem::DPI}
If a Markov chain $\rU \to \rX \to \rY$ holds, then 
\begin{align}
    I_f(\rU; \rY) \leq I_f(\rU; \rX).
\end{align}
\end{lemma}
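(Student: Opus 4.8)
The plan is to reduce the claim to the most elementary strong data processing inequality --- the ordinary data processing inequality for $\textnormal{D}_f$ under a fixed Markov kernel --- and then use the Markov property $P_{\rY|\rX,\rU}=P_{\rY|\rX}$ to identify the distributions appearing on the two sides.

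First I would prove the auxiliary fact that $f$-divergence contracts under any channel: for a transition kernel $K(\cdot\mid\cdot)$ from $\calX$ to $\calY$ and distributions $P,Q$ on $\calX$, the push-forwards satisfy $\textnormal{D}_f(KP\,\|\,KQ)\le \textnormal{D}_f(P\,\|\,Q)$. Working with densities against a common dominating measure, one writes the likelihood ratio of the push-forwards as an average of the original likelihood ratio under the Bayes-reversed kernel,
\begin{align*}
\frac{\dif(KP)}{\dif(KQ)}(y) \;=\; \mathbb{E}_{\nu_y}\!\left[\frac{\dif P}{\dif Q}\right], \qquad \nu_y(\dif x)\;\propto\; K(y\mid x)\,\dif Q(x),
\end{align*}
and then applies Jensen's inequality to the convex $f$, namely $f\!\left(\mathbb{E}_{\nu_y}[\dif P/\dif Q]\right)\le \mathbb{E}_{\nu_y}[f(\dif P/\dif Q)]$. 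Integrating this against $\dif(KQ)(y)$ and using Fubini --- the mixture of $\nu_y$ over $y\sim KQ$ is exactly $Q$ --- collapses the right-hand side to $\int f(\dif P/\dif Q)\,\dif Q=\textnormal{D}_f(P\,\|\,Q)$, which proves the fact. Equivalently, this step is immediate from the Nguyen--Wainwright--Jordan variational representation of $\textnormal{D}_f$ (the same representation already invoked for Lemma~\ref{lem::Bu20}) by restricting the supremum over test functions to those that factor through $K$.

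Next I would apply this fact with $K=P_{\rY|\rX}$. By the expectation form in \eqref{eq::defn_f_inf}, $I_f(\rU;\rY)=\EE{\textnormal{D}_f(P_{\rY|\rU}\,\|\,P_\rY)}$ with the expectation over $\rU\sim P_\rU$. Since $\rU\to\rX\to\rY$, we have $P_{\rY|\rX,\rU}=P_{\rY|\rX}$, so for each value $u$ the conditional $P_{\rY|\rU=u}$ is the push-forward of $P_{\rX|\rU=u}$ through $K$, while $P_\rY$ is the push-forward of $P_\rX$ through $K$. The auxiliary fact then yields $\textnormal{D}_f(P_{\rY|\rU=u}\,\|\,P_\rY)\le \textnormal{D}_f(P_{\rX|\rU=u}\,\|\,P_\rX)$ for every $u$; taking expectation over $\rU$ and recognizing the right-hand side as $I_f(\rU;\rX)$ via \eqref{eq::defn_f_inf} completes the proof. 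The only real friction I anticipate is measure-theoretic bookkeeping: justifying the density and Fubini manipulations when $P\not\ll Q$ (handled by the standard convention $f(a/0)=a\lim_{t\to\infty}f(t)/t$ together with $KP\ll KQ$ being inherited on the relevant support) and making the reverse-kernel construction precise in a general, non-discrete setting. All of this is routine; the conceptual content is entirely Jensen's inequality.
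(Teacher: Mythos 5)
Your proof is correct. The paper states Lemma~\ref{lem::DPI} without proof, recalling it as a standard fact (it is also subsumed by \eqref{eq::sdpi_mi_cont_coeff} with $\eta_f\le 1$), and your two-step argument --- contraction of $\textnormal{D}_f$ under a channel via Jensen's inequality with the Bayes-reversed kernel, then applying it to $P_{\rY|\rU=u}$ versus $P_\rY$ using $P_{\rY|\rX,\rU}=P_{\rY|\rX}$ and averaging over $u$ --- is the standard derivation and is sound.
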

The data processing inequality states that no post-processing of $\rX$ can increase the information about $\rU$. Under certain conditions, the data processing inequality can be sharpened, which leads to a strong data processing inequality, often cast in terms of a contraction coefficient. Next, we recall the contraction coefficients of $f$-divergences and show their connection with strong data processing inequalities. 

For a given transition probability kernel $P_{\rY|\rX}:\mathcal{X}\to \mathcal{P}(\mathcal{Y})$ where $\mathcal{P}(\mathcal{Y})$ is the set of all distributions on $\mathcal{Y}$, let $P_{\rY|\rX}\circ P$ be the distribution on $\mathcal{Y}$ induced by the push-forward of the distribution $P$ (i.e., the distribution of $\rY$ when the distribution of $\rX$ is $P$). The contraction coefficient of $P_{\rY|\rX}$ for $\textnormal{D}_{f}$ is defined as
\begin{align*}
    \eta_f(P_{\rY|\rX}) \defined \sup_{P,Q:P\neq Q} \frac{\textnormal{D}_{f}(P_{\rY|\rX}\circ P\|P_{\rY|\rX}\circ Q)}{\textnormal{D}_{f}(P\|Q)} \in [0,1].
\end{align*}
In particular, when the total variation distance is used, the corresponding contraction coefficient $\eta_{\scalebox{.6}{\textnormal TV}}(P_{\rY|\rX})$ is known as the Dobrushin's coefficient \citep{dobrushin1956central}, which owns an equivalent expression:
\begin{align}
\label{eq::Dob_coe_eq}
    \eta_{\scalebox{.6}{\textnormal TV}}(P_{\rY|\rX}) 
    = \sup_{\bx, \bx'\in \mathcal{X}} \TV(P_{\rY|\rX=\bx} \| P_{\rY|\rX=\bx'}).
\end{align}
Note that the Dobrushin's coefficient upper bounds all other contraction coefficients \citep{cohen1998comparisons}:
\begin{align*}
    \eta_{f}(P_{\rY|\rX}) \leq \eta_{\scalebox{.6}{\textnormal TV}}(P_{\rY|\rX}).
\end{align*}
Furthermore, for any Markov chain $\rU\to \rX \to \rY$, the contraction coefficients satisfy \citep[see Theorem~5.2 in][for a proof]{raginsky2016strong}
\begin{align}
\label{eq::sdpi_mi_cont_coeff}
    I_f(\rU;\rY) 
    \leq \eta_f(P_{\rY|\rX}) \cdot I_f(\rU;\rX).
\end{align}
When $\eta_f(P_{\rY|\rX}) < 1$, the strict inequality $I_f(\rU;\rY) < I_f(\rU;\rX)$ improves the data processing inequality and, hence, is referred to as a strong data processing inequality. We refer the reader to \citet{polyanskiy2016dissipation} and \citet{raginsky2016strong} for a more comprehensive review on strong data processing inequalities and \citet{du2017strong} for non-linear strong data processing inequalities in Gaussian channels.

\section{Properties of Additive Noise Channels}
\label{sec::prop_add_channel}

Additive noise channels have a long history in information theory. Here we show two important properties of additive noise channels which will be used for deriving the generalization bounds in the next section. The first property (Lemma~\ref{lem::sdpi_diam}) leads to a decay factor into our bounds. The second property (Lemma~\ref{lem::gen_HWI}) produces computable generalization bounds.

Consider a single use of an additive noise channel. Let $(\rX,\rY)$ be a pair of random variables related by $\rY = \rX + m \rN$ where $\rX\in \mathcal{X}$; $m > 0$ is a constant; and $\rN$ represents an independent noise. In other words, the conditional distribution of $\rY$ given $\rX$ can be characterized by $P_{\rY|\rX=\bx} = P_{\bx + m\rN}$. 
If $\mathcal{X}$ is a compact set, the contraction coefficients often have a non-trivial upper bound, leading to a strong data processing inequality. This is formalized in the following lemma whose proof follows directly from the definition of the Dobrushin's coefficient in \eqref{eq::Dob_coe_eq} and the fact that the Dobrushin's coefficient is a universal upper bound of all the contraction coefficients. Note that the function $\delta(A,m)$ in \eqref{eq::defn_delta_At} upper bounds the Dobrushin's coefficient of the additive noise channel and has a closed-form expression for many noise distributions (see Table~\ref{table:C_delta_exp} for examples).
\begin{lemma}
\label{lem::sdpi_diam}
Let $\rN$ be a random variable which is independent of $(\rU,\rX)$. For a given norm $\|\cdot\|$ on a compact set $\mathcal{X}\subseteq \Reals^d$ and $m,A>0$, we define 
\begin{align}
\label{eq::defn_delta_At}
    \delta(A,m) &\defined \sup_{\|\bx - \bx'\|\leq A} \TV\left(P_{\bx + m\rN}\| P_{\bx' + m\rN}\right).
\end{align}
Then the Markov chain $\rU\to \rX \to \rX + m\rN$ holds and 
\begin{align}
    I_f(\rU;\rX + m\rN)\leq \delta(\mathsf{diam}(\mathcal{X}),m) \cdot I_f(\rU;\rX),
\end{align}
where $\mathsf{diam}(\mathcal{X}) \defined \sup_{\bx, \bx'\in \mathcal{X}} \|\bx - \bx'\|$ is the diameter of $\mathcal{X}$.
\end{lemma}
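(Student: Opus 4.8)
The plan is to reduce everything to the definition of the Dobrushin coefficient together with the universal-upper-bound property and the strong data processing inequality \eqref{eq::sdpi_mi_cont_coeff}, all of which are stated earlier in the excerpt. First I would verify the Markov chain $\rU\to\rX\to\rX+m\rN$: since $\rN$ is independent of $(\rU,\rX)$, the conditional distribution of $\rX+m\rN$ given $(\rU,\rX)$ depends only on $\rX$ (it equals $P_{\rX+m\rN\mid \rX=\bx}=P_{\bx+m\rN}$), which is exactly the Markov property. Then I would apply \eqref{eq::sdpi_mi_cont_coeff} to this chain with the channel $P_{\rY\mid\rX}$ given by $\bx\mapsto P_{\bx+m\rN}$, obtaining $I_f(\rU;\rX+m\rN)\leq \eta_f(P_{\rY\mid\rX})\cdot I_f(\rU;\rX)$.

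The remaining step is to bound $\eta_f(P_{\rY\mid\rX})$. By the universal bound $\eta_f(P_{\rY\mid\rX})\leq \eta_{\scalebox{.6}{\textnormal TV}}(P_{\rY\mid\rX})$ from \citet{cohen1998comparisons}, it suffices to control the Dobrushin coefficient. Using the equivalent expression \eqref{eq::Dob_coe_eq},
\begin{align*}
    \eta_{\scalebox{.6}{\textnormal TV}}(P_{\rY\mid\rX})
    = \sup_{\bx,\bx'\in\mathcal{X}} \TV\left(P_{\bx+m\rN}\,\|\,P_{\bx'+m\rN}\right).
\end{align*}
Since the supremum ranges over $\bx,\bx'\in\mathcal{X}$, every such pair satisfies $\|\bx-\bx'\|\leq \mathsf{diam}(\mathcal{X})$, so this quantity is at most $\sup_{\|\bx-\bx'\|\leq \mathsf{diam}(\mathcal{X})}\TV(P_{\bx+m\rN}\|P_{\bx'+m\rN})=\delta(\mathsf{diam}(\mathcal{X}),m)$; conversely, since $\mathcal{X}$ is compact the diameter is attained (or approached) and the two expressions in fact coincide, though only the inequality is needed. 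Chaining the three bounds gives $I_f(\rU;\rX+m\rN)\leq \delta(\mathsf{diam}(\mathcal{X}),m)\cdot I_f(\rU;\rX)$.

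There is essentially no hard obstacle here — the lemma is a direct bookkeeping assembly of named results — but the one point requiring a little care is checking the Markov-chain hypothesis so that \eqref{eq::sdpi_mi_cont_coeff} applies: one must use that $\rN$ is independent of the \emph{pair} $(\rU,\rX)$, not merely of $\rX$, since otherwise the conditional law of $\rX+m\rN$ given $(\rU,\rX)$ could depend on $\rU$ through $\rN$. A secondary subtlety is that $\delta(A,m)$ is defined with a closed constraint $\|\bx-\bx'\|\leq A$ while $\mathsf{diam}(\mathcal{X})$ is itself a supremum; compactness of $\mathcal{X}$ ensures the diameter is finite so $\delta(\mathsf{diam}(\mathcal{X}),m)$ is well-defined, and the inclusion of feasible pairs is immediate. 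I would present the proof in three short displayed inequalities (Markov property and SDPI; universal bound; Dobrushin expression and diameter), concluding with the chained inequality.
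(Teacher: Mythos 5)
Your proposal is correct and follows exactly the route the paper indicates: verify the Markov chain from the independence of $\rN$ and $(\rU,\rX)$, apply the contraction-coefficient form of the strong data processing inequality \eqref{eq::sdpi_mi_cont_coeff}, bound $\eta_f$ by the Dobrushin coefficient, and identify the latter via \eqref{eq::Dob_coe_eq} with $\delta(\mathsf{diam}(\mathcal{X}),m)$. The paper gives only this one-line sketch, and your write-up is a faithful (and slightly more careful) elaboration of it.
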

\begin{remark}
In the next section, we use Lemma~\ref{lem::sdpi_diam} to incorporate a time-decaying factor into our generalization bounds. The function $\delta(A,m)$ in \eqref{eq::defn_delta_At} yields a closed-form expression of the decay factor. As mentioned earlier, our analysis is inspired by a line of works on privacy amplification by iterations \citep{feldman2018privacy,asoodeh2020privacy}. For example, \citet{feldman2018privacy} proved that passing two probability distributions through a noisy iterative algorithm would shrink their R{\'e}nyi divergence, leading to amplification for R{\'e}nyi differential privacy. 
\citet{asoodeh2020privacy} reformulated the definition of differential privacy by using the $\mathsf{E}_\gamma$ divergence, which is a certain $f$-divergence. They characterized the contraction coefficient of $\mathsf{E}_\gamma$ by generalizing Dobrushin's coefficient (see Section~3 in their paper). 
In this paper, we introduce a general framework for deriving generalization bounds through $f$-divergences. We leverage Dobrushin's coefficient since it upper bounds all other contraction coefficients for $f$-divergences.
\end{remark}

Computing $f$-information in general is intractable when the underlying distribution is unknown. Hence, we further upper bound the $f$-information in Lemma~\ref{lem::Bu20} by a quantity which is easier to compute. To achieve this goal, we introduce another property of additive noise channels. Specifically, let $\rY = \rX + m\rN$ and $\rY' = \rX' + m\rN$ be the output variables from the same additive noise channel with input variables $\rX$ and $\rX'$, respectively. Then the $f$-divergence in the output space can be upper bounded by the optimal transport cost in the input space.
\begin{lemma}
\label{lem::gen_HWI}
Let $\rN$ be a random variable which is independent of $(\rX,\rX')$. For $\bx, \bx' \in \Reals^d$ and $m > 0$, we define a cost function\footnote{Note that $\mathsf{C}_f(\bx, \bx'; m)$ is not necessarily a metric.}
\begin{align}
\label{eq::cost_func}
    \mathsf{C}_f(\bx, \bx'; m) &\defined \textnormal{D}_{f}\left(P_{\bx + m\rN} \| P_{\bx' + m\rN}\right).
\end{align}
Then for any $m> 0$, we have
\begin{align}
    \textnormal{D}_{f}(P_{\rX+m\rN} \| P_{\rX'+m\rN})
    \leq \mathbb{W}(P_{\rX}, P_{\rX'}; m).
\end{align}
Here $\mathbb{W}(P_{\rX}, P_{\rX'}; m)$ is the optimal transport cost:
\begin{align}
\label{eq::defn_Wass}
    \mathbb{W}(P_{\rX}, P_{\rX'}; m) \defined \inf \EE{\mathsf{C}_f(\rX, \rX'; m)},
\end{align}
where the infimum is taken over all couplings (i.e., joint distributions) of the random variables $\rX$ and $\rX'$ with marginals $P_{\rX}$ and $P_{\rX'}$, respectively. 
\end{lemma}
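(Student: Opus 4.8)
The plan is to exploit the \emph{joint convexity} of the $f$-divergence. Recall that for a convex function $f$ with $f(1)=0$, the perspective map $(p,q)\mapsto q\,f(p/q)$ on $(0,\infty)^2$ is jointly convex, and hence $\textnormal{D}_{f}(\cdot\|\cdot)$ is jointly convex in its pair of arguments; a standard Jensen/approximation argument upgrades this to arbitrary mixtures, so that for any probability measure $\lambda$ on an index set $\Theta$ and any measurable families $\{P_\theta\}_{\theta\in\Theta}$, $\{Q_\theta\}_{\theta\in\Theta}$ of probability distributions,
\begin{align}
\label{eq::joint_convexity}
    \textnormal{D}_{f}\!\left(\int_{\Theta} P_\theta\, d\lambda(\theta)\,\Big\|\,\int_{\Theta} Q_\theta\, d\lambda(\theta)\right)\;\leq\;\int_{\Theta} \textnormal{D}_{f}\!\left(P_\theta\|Q_\theta\right)\,d\lambda(\theta).
\end{align}

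First I would fix an arbitrary coupling $\pi$ of $(\rX,\rX')$, i.e.\ a joint distribution on $\Reals^d\times\Reals^d$ with first marginal $P_{\rX}$ and second marginal $P_{\rX'}$, and arrange that $\rN$ be independent of $(\rX,\rX')$ under it (only the law $P_{\rN}$ is prescribed, so one may simply replace $\pi$ by $\pi\otimes P_{\rN}$). Using this independence together with Fubini's theorem, the channel outputs admit the mixture representations
\begin{align*}
    P_{\rX+m\rN}(\cdot)\;=\;\int P_{\bx+m\rN}(\cdot)\, d\pi(\bx,\bx'),\qquad P_{\rX'+m\rN}(\cdot)\;=\;\int P_{\bx'+m\rN}(\cdot)\, d\pi(\bx,\bx'),
\end{align*}
because marginalizing $\bx'$ (respectively $\bx$) out of $\pi$ returns $P_{\rX}$ (respectively $P_{\rX'}$). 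The crucial observation is that \emph{both} output laws are mixtures indexed by \emph{the same} measure $\pi$; moreover $\bx\mapsto P_{\bx+m\rN}$ is Borel measurable, being a translate of $P_{m\rN}$, so \eqref{eq::joint_convexity} is applicable.

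Next I would invoke \eqref{eq::joint_convexity} with $\Theta=\Reals^d\times\Reals^d$, $\lambda=\pi$, $P_\theta=P_{\bx+m\rN}$ and $Q_\theta=P_{\bx'+m\rN}$, and then use the definition \eqref{eq::cost_func} of the cost function, to get
\begin{align*}
    \textnormal{D}_{f}\!\left(P_{\rX+m\rN}\|P_{\rX'+m\rN}\right)\;\leq\;\int \textnormal{D}_{f}\!\left(P_{\bx+m\rN}\|P_{\bx'+m\rN}\right) d\pi(\bx,\bx')\;=\;\EEE{\pi}{\mathsf{C}_f(\rX,\rX';m)}.
\end{align*}
The left-hand side does not depend on $\pi$, so taking the infimum over all couplings on the right-hand side yields $\textnormal{D}_{f}(P_{\rX+m\rN}\|P_{\rX'+m\rN})\leq \mathbb{W}(P_{\rX},P_{\rX'};m)$ by the definition \eqref{eq::defn_Wass}, which is the claim.

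I do not expect a genuine obstacle here: the heart of the argument is the single convexity step \eqref{eq::joint_convexity}, and everything else is bookkeeping. The only point deserving care is the measure-theoretic part --- justifying the mixture identities via Fubini (this is precisely where the hypothesis that $\rN$ is independent of $(\rX,\rX')$ enters) and confirming that joint convexity propagates from finite convex combinations to general mixtures over $\pi$. The latter can be handled either by monotone/dominated convergence along simple-function approximations of $\pi$, or, more cleanly, through the variational representation of $\textnormal{D}_{f}$ \citep{nguyen2010estimating} as a supremum of functionals of $(P,Q)$ that are affine in the pair, under which the bound against the average is immediate. It is also worth noting that this lemma is the $f$-divergence counterpart of the classical transport-cost comparison for convolutions (an ``HWI''-type bound), which is why it serves to turn the (intractable) $f$-information appearing in Lemma~\ref{lem::Bu20} into a computable optimal-transport quantity.
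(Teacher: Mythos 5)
Your argument is correct and coincides with the paper's own proof: both fix an arbitrary coupling, write $P_{\rX+m\rN}$ and $P_{\rX'+m\rN}$ as mixtures over that same coupling, apply joint convexity of $(P,Q)\mapsto \textnormal{D}_{f}(P\|Q)$ via Jensen's inequality, and then take the infimum over couplings. The additional remarks on measurability and on upgrading finite convex combinations to general mixtures are sound but not needed beyond what the paper already invokes.
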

\begin{proof}
See Appendix~\ref{append::gen_HWI}.
\end{proof}
\begin{table}
\small\centering
\resizebox{0.95\textwidth}{!}{
\renewcommand{\arraystretch}{1.5}
\begin{tabular}{lcccc}
\toprule
\texttt{Noise Type} & $\mathsf{C}_{\scalebox{.6}{\textnormal KL}}(\bx,\bx';m)$ & $\mathsf{C}_{\chi^2}(\bx,\bx';m)$ & $\mathsf{C}_{\scalebox{.6}{\textnormal TV}}(\bx,\bx';m)$ & $\delta(A,m)$\\
\toprule
Gaussian & $\frac{\|\bx - \bx'\|_2^2}{2m^2}$ & $\exp\left( \frac{\|\bx-\bx'\|_2^2}{m^2}\right) - 1$ & {\color{blue} $\frac{\|\bx - \bx'\|_2}{2m}$} & $1-2\bar{\Phi}\left(\frac{A}{2m}\right)$\\
\midrule
Laplace & {\color{blue} $\frac{\|\bx - \bx'\|_1}{m}$} & {\color{blue} $\exp\left(\frac{\|\bx - \bx'\|_1}{m}\right) - 1$} & {\color{blue} $\sqrt{\frac{\|\bx - \bx'\|_1}{2m}}$} & {\color{blue} $1 - \exp\left(-\frac{A}{m}\right)$}\\
\midrule
Uniform on $[-1,1]$ & $\infty\indicator{[x\neq x']}$ & $\infty\indicator{[x\neq x']}$ & $\min\left\{1,\left|\frac{x-x'}{2m}\right|\right\}$ & $\min\left\{1,\frac{A}{2m}\right\}$ \\
\bottomrule 
\end{tabular}
}
\caption{\small{Closed-form expressions (or upper bounds if in blue color) of $\mathsf{C}_f(\bx, \bx'; m)$ (see \eqref{eq::cost_func} for its definition) and $\delta(A,m)$ (see \eqref{eq::defn_delta_At} for its definition). The function $\delta(A,m)$ is equipped with the 2-norm for Gaussian distribution and 1-norm for Laplace distribution. We denote the Gaussian complementary cumulative distribution function (CCDF) by $\bar{\Phi}(x)\defined \int^{\infty}_{x} \frac{1}{\sqrt{2\pi}} \exp(-v^2/2)\dif v$ and define $\infty \cdot 0 = 0$ as convention. The proof is deferred to Appendix~\ref{append::table_C_delta}.}
}
\label{table:C_delta_exp}
\end{table}

Lemmas~\ref{lem::sdpi_diam} and \ref{lem::gen_HWI} show that the functions $\delta(A,m)$ and $\mathsf{C}_f(\bx, \bx'; m)$ can be useful for sharpening the data processing inequality and upper bounding the $f$-information in Lemma~\ref{lem::Bu20}. We demonstrate in Table~\ref{table:C_delta_exp} that these functions can be expressed in closed-form for specific additive noise distributions.

\begin{remark}
Let $\rN$ be drawn from a Gaussian distribution. Substituting the closed-form expression of $\mathsf{C}_{\scalebox{.6}{\textnormal KL}}(\bx,\bx';m)$ from Table~\ref{table:C_delta_exp} into Lemma~\ref{lem::gen_HWI} leads to 
\begin{align}
\label{eq::HWI_conv_Gaussian}
    \KL(P_{\rX + m\rN} \| P_{\rX' + m\rN})\leq \frac{1}{2m^2} \mathbb{W}_2^2(P_{\rX}, P_{\rX'})
\end{align}
where $\mathbb{W}_2(P_{\rX},P_{\rX'})$ is the 2-Wasserstein distance equipped with the $L_2$ cost function: 
\begin{align*}
    \mathbb{W}_2^2(P_{\rX},P_{\rX'}) \defined \inf \EE{\|\rX - \rX'\|_2^2}.
\end{align*}
This inequality serves as a fundamental building block for proving Otto-Villani's HWI inequality \citep{otto2000generalization} in the Gaussian case \citep{raginsky2012concentration,boucheron2013concentration}.
\end{remark}

\section{Generalization Bounds for Noisy Iterative Algorithms}
\label{sec::Gen_Bound}

In this section, we present our main result---generalization bounds for noisy iterative algorithms. 
First, by leveraging strong data processing inequalities, we prove that the amount of information about the data points used in early iterations decays with time. Accordingly, our generalization bounds incorporate a time-decaying factor which enables the impact of early iterations on our bounds to reduce with time. Second, by using properties of additive noise channels developed in the last section, we further upper bound the $f$-information by a quantity which is often easier to estimate. The above two aspects correspond to Lemma~\ref{lem::Noisy_ite_alg_SDPI} and \ref{lem::ub_MI_WT_Zi_general} which are the basis of our main result in Theorem~\ref{thm::gen_bound_general_noisy_alg}. 

Before diving into the analysis, we first discuss assumptions made in this paper.
\begin{assumption}[Sampling w/o Replacement]
\label{assump::index_w/o_rep}
The mini-batch indices $(\mathcal{B}_1, \cdots, \mathcal{B}_T)$ in \eqref{eq::rec_update_parameter_Extend} are specified before the algorithm is run and data are drawn without replacement. 
\end{assumption}
If the mini-batches are selected when the algorithm is run, one can analyze the expected generalization gap by first conditioning on $\mathcal{B} \defined (\mathcal{B}_1, \cdots, \mathcal{B}_T)$ and then taking an expectation over the randomness of $\mathcal{B}$:
\begin{align*}
    \EE{L_\mu(\rW_T)-L_{\rS}(\rW_T)}
    = \EE{\EE{L_\mu(\rW_T)-L_{\rS}(\rW_T)\mid \mathcal{B}}}.
\end{align*}
Our analysis can be extended to the case where data are drawn with replacement (see Proposition~\ref{prop::gen_bound_SGLD}) by using the chain rule for mutual information. 
\begin{assumption}[Bounded Gradient \& Compact Domain]
\label{assump::comp_W_Lip_grad}
The parameter domain $\mathcal{W}$ is compact and $\|g(\bw, \bz)\|\leq K$ for all $\bw,\bz$. We denote the diameter of $\mathcal{W}$ by $D \defined \sup_{\bw, \bw' \in \mathcal{W}} \|\bw - \bw'\|$. 
\end{assumption}
Our generalization bounds rely on the second assumption mildly. In fact, this assumption only affects the time-decaying factor in our bounds which is always upper bounded by $1$. If we remove this assumption, our bounds still hold though the decay factor disappears.

Now we are in a position to derive generalization bounds under the above assumptions. As a consequence of strong data processing inequalities, the following lemma indicates that the information of a data point $\rZ_i$ contained in the algorithmic output $\rW_T$ will reduce with time $T$.
\begin{lemma}
\label{lem::Noisy_ite_alg_SDPI}
Under Assumption~\ref{assump::index_w/o_rep} (Sampling w/o Replacement) and \ref{assump::comp_W_Lip_grad} (Bounded Gradient \& Compact Domain), if a data point $\rZ_i$ is used in the $t$-th iteration, then
\begin{align}
\label{eq::MI_time_decay_prod}
    I_f(\rW_T; \rZ_i)
    \leq I_f(\rW_t; \rZ_i) \cdot \prod_{t'=t+1}^T \delta(D+2\eta_{t'}K, m_{t'}),
\end{align}
where the function $\delta(\cdot,\cdot)$ is defined in \eqref{eq::defn_delta_At}.
\end{lemma}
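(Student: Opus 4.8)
The plan is to establish \eqref{eq::MI_time_decay_prod} by unrolling the recursion \eqref{eq::rec_update_parameter_Extend} one iteration at a time and applying the strong data processing inequality of Lemma~\ref{lem::sdpi_diam} at each step. The key observation is that, for each $t' \in \{t+1, \dots, T\}$, the map from $\rW_{t'-1}$ to $\rW_{t'}$ factors as a deterministic update followed by the addition of the independent noise $m_{t'}\rN$ and a projection onto $\mathcal{W}$. Since $\rZ_i$ is used at iteration $t$ (and, by Assumption~\ref{assump::index_w/o_rep}, sampling is without replacement so $\rZ_i$ is not reused afterward), the pair $(\rZ_i, \rW_{t'-1})$ is independent of the fresh noise $\rN$ drawn at iteration $t'$, and the Markov chain $\rZ_i \to \rW_{t'-1} \to \rW_{t'}$ holds.

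First I would make precise that each one-step transition is an additive-noise channel composed with a $1$-Lipschitz post-processing. Write $\rW_{t'-1}' \defined \rW_{t'-1} - \eta_{t'} g(\rW_{t'-1}, \{\rZ_j\}_{j \in \mathcal{B}_{t'}})$ for the pre-noise iterate. By Assumption~\ref{assump::comp_W_Lip_grad}, $\rW_{t'-1}' \in \mathcal{X}_{t'}$ where $\mathcal{X}_{t'}$ is a compact set with $\mathsf{diam}(\mathcal{X}_{t'}) \le D + 2\eta_{t'}K$: indeed $\|\rW_{t'-1}' - \rW_{t'-1}''\| \le \|\rW_{t'-1} - \rW_{t'-1}''\| + \eta_{t'}(\|g(\rW_{t'-1}, \cdot)\| + \|g(\rW_{t'-1}'', \cdot)\|) \le D + 2\eta_{t'}K$ for two realizations (here I use that $\rW_{t'-1}$ ranges in $\mathcal{W}$ which has diameter $D$). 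Then $\rW_{t'} = \mathsf{Proj}_{\mathcal{W}}(\rW_{t'-1}' + m_{t'}\rN)$. Applying Lemma~\ref{lem::sdpi_diam} to the channel $\rW_{t'-1}' \mapsto \rW_{t'-1}' + m_{t'}\rN$ along the Markov chain $\rZ_i \to \rW_{t'-1}' \to \rW_{t'-1}' + m_{t'}\rN$ gives
\begin{align*}
    I_f(\rZ_i; \rW_{t'-1}' + m_{t'}\rN) \le \delta(D + 2\eta_{t'}K, m_{t'}) \cdot I_f(\rZ_i; \rW_{t'-1}').
\end{align*}
Then I would invoke the ordinary data processing inequality (Lemma~\ref{lem::DPI}) twice: once for the deterministic post-processing $\rW_{t'-1}' + m_{t'}\rN \mapsto \mathsf{Proj}_{\mathcal{W}}(\cdot) = \rW_{t'}$, giving $I_f(\rZ_i; \rW_{t'}) \le I_f(\rZ_i; \rW_{t'-1}' + m_{t'}\rN)$, and once for the deterministic map $\rW_{t'-1} \mapsto \rW_{t'-1}'$ (for fixed $\{\rZ_j\}_{j \in \mathcal{B}_{t'}}$), giving $I_f(\rZ_i; \rW_{t'-1}') \le I_f(\rZ_i; \rW_{t'-1})$ — this last step uses that $\rZ_i \notin \{\rZ_j\}_{j \in \mathcal{B}_{t'}}$ so the map depends only on $\rW_{t'-1}$ and auxiliary randomness independent of $\rZ_i$. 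Chaining these three inequalities yields $I_f(\rZ_i; \rW_{t'}) \le \delta(D + 2\eta_{t'}K, m_{t'}) \cdot I_f(\rZ_i; \rW_{t'-1})$ for each $t' \in \{t+1, \dots, T\}$. Multiplying over $t' = t+1, \dots, T$ telescopes to \eqref{eq::MI_time_decay_prod}. Note that if Assumption~\ref{assump::comp_W_Lip_grad} is dropped, one still has $I_f(\rZ_i; \rW_{t'}) \le I_f(\rZ_i; \rW_{t'-1})$ by plain data processing, consistent with the remark that the bound survives with the decay factor replaced by $1$.

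The main obstacle I anticipate is the bookkeeping around conditioning on the mini-batch indices and handling the other data points $\{\rZ_j\}_{j \ne i}$ as part of the "channel": one must verify carefully that, after conditioning appropriately, the transition $\rW_{t'-1} \to \rW_{t'}$ is indeed a valid Markov kernel in the variables needed for Lemma~\ref{lem::sdpi_diam}, i.e.\ that the noise $\rN$ at step $t'$ is independent of $(\rZ_i, \rW_{t'-1}')$ — which is where Assumption~\ref{assump::index_w/o_rep} (deterministic batches, no replacement) is essential, since it guarantees $\rZ_i$ does not re-enter the dynamics and the fresh noise is genuinely independent. A secondary subtlety is confirming the diameter bound $\mathsf{diam}(\mathcal{X}_{t'}) \le D + 2\eta_{t'}K$ accounts for the worst case over all realizations of the batch data and of $\rW_{t'-1}$; this is where the uniform gradient bound $\|g(\bw,\bz)\| \le K$ and the averaging identity \eqref{eq::update_minibatch} (which preserves the norm bound) come in.
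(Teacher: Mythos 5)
Your proposal is correct and follows essentially the same route as the paper's proof: decompose each iteration into a deterministic gradient step, an additive-noise step, and a projection; bound the diameter of the pre-noise iterate's range by $D+2\eta_{t'}K$ via the triangle inequality; apply Lemma~\ref{lem::sdpi_diam} at the noise step and the ordinary data processing inequality at the deterministic steps; and telescope. The paper writes this as the explicit Markov chain $\rZ_i \to \rU_{t} \to \rV_{t}\to \rW_{t} \to \cdots \to \rU_{T} \to \rV_{T} \to \rW_T$, which is exactly your factorization.
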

\begin{proof}
For the $t$-th iteration, we rewrite the recursion in \eqref{eq::rec_update_parameter_Extend} as
\begin{subequations}
\label{eq::para_update_UVW}
\begin{align}
    &\rU_t = \rW_{t-1} - \eta_t\cdot g(\rW_{t-1}, \{\rZ_{i}\}_{i \in \mathcal{B}_t}) \\
    &\rV_t = \rU_t + m_t \cdot \rN_t \\
    &\rW_t = \mathsf{Proj}_{\mathcal{W}} \left(\rV_t \right).
\end{align}
\end{subequations}
Let $\rZ_i$ be a data point used at the $t$-th iteration. Under Assumption~\ref{assump::index_w/o_rep} (Sampling w/o Replacement), the following Markov chain holds:
\begin{align}
\label{eq::Markov_chain}
    \rZ_i \to \rU_{t} \to \rV_{t}\to \rW_{t} \to \cdots \to \rW_{T-1} \to \rU_{T} \to \rV_{T} \to \rW_T.
\end{align}
Let $\mathcal{U}_T$ be the range of $\rU_T$. By Assumption~\ref{assump::comp_W_Lip_grad} (Bounded Gradient \& Compact Domain) and the triangle inequality,
\begin{align*}
    \mathsf{diam}(\mathcal{U}_T)
    \leq \mathsf{diam}(\mathcal{W}) + 2 \eta_T K 
    = D + 2 \eta_T K.
\end{align*}
Now we leverage the strong data processing inequality in Lemma~\ref{lem::sdpi_diam} and obtain 
\begin{align*}
    I_f(\rW_T; \rZ_i) 
    &\leq I_f(\rV_T; \rZ_i)\\
    &\leq \delta(D+2\eta_T K, m_T) \cdot I_f(\rU_T; \rZ_i)\\
    &\leq \delta(D+2\eta_T K, m_T) \cdot I_f(\rW_{T-1}; \rZ_i),
\end{align*}
where the first and last steps are due to the data processing inequality (Lemma~\ref{lem::DPI}). Applying this procedure recursively leads to the desired conclusion.
\end{proof}

For many types of noise (e.g., Gaussian or Laplace noise), the function $\delta(\cdot,\cdot)$ is \emph{strictly} smaller than $1$ (see Table~\ref{table:C_delta_exp}). In this case, the information about the data points used in early iterations is reducing via the multiplicative factor in \eqref{eq::MI_time_decay_prod}. Furthermore, one can even prove that $I_f(\rW_T;\rZ_i)\to 0$ as $T\to \infty$ if the magnitude of the additive noise in \eqref{eq::rec_update_parameter_Extend} has a lower bound.

Lemma~\ref{lem::Noisy_ite_alg_SDPI} explains how our generalization bounds in Theorem~\ref{thm::gen_bound_general_noisy_alg} incorporate a time-decaying factor. However, it still involves an $f$-information $I_f(\rW_t;\rZ_i)$, which can be hard to compute from data. Next, we further upper bound this $f$-information by using properties of additive noise channels developed in the last section (see Lemma~\ref{lem::gen_HWI}).
\begin{lemma}
\label{lem::ub_MI_WT_Zi_general}
Under Assumption~\ref{assump::index_w/o_rep} (Sampling w/o Replacement), if a data point $\rZ_i$ is used at the $t$-th iteration, then 
\begin{align}
    I_f(\rW_t; \rZ_i) 
    \leq \EE{\mathsf{C}_f\left(g(\rW_{t-1}, \rZ), g(\rW_{t-1},\bar{\rZ}); \frac{m_tb_t}{\eta_t}\right)},
\end{align}
where the function $\mathsf{C}_f(\cdot,\cdot;\cdot)$ is defined in \eqref{eq::cost_func} and the expectation is taken over $(\rW_{t-1},\rZ,\bar{\rZ})\sim P_{\rW_{t-1}}\otimes \mu \otimes \mu$.
\end{lemma}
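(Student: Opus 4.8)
The plan is to first reduce $I_f(\rW_t;\rZ_i)$ to an $f$-divergence in the output space of a single additive-noise channel, and then invoke Lemma~\ref{lem::gen_HWI}. Concretely, fix the iteration $t$ and, as in the proof of Lemma~\ref{lem::Noisy_ite_alg_SDPI}, write the update as $\rU_t = \rW_{t-1} - \eta_t g(\rW_{t-1},\{\rZ_j\}_{j\in\mathcal{B}_t})$, then $\rV_t = \rU_t + m_t\rN$, then $\rW_t = \mathsf{Proj}_{\mathcal{W}}(\rV_t)$. By the data processing inequality (Lemma~\ref{lem::DPI}) applied along $\rZ_i \to \rV_t \to \rW_t$, it suffices to bound $I_f(\rV_t;\rZ_i)$. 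Now $\rV_t = \rU_t + m_t\rN$ where $\rN \indep (\rU_t,\rZ_i)$ — crucially $\rN$ is the \emph{fresh} noise at iteration $t$ and is independent of everything from the past, and under Assumption~\ref{assump::index_w/o_rep} the batch $\mathcal{B}_t$ is deterministic. So $\rV_t$ is exactly the output of an additive noise channel with input $\rU_t$ and noise scale $m_t$.

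The next step is to rescale so that the "input perturbation" is controlled by a single data point. Separate the $i$-th term out of the minibatch average: write $\rU_t = \rA_t - \frac{\eta_t}{b_t} g(\rW_{t-1},\rZ_i)$ where $\rA_t \defined \rW_{t-1} - \frac{\eta_t}{b_t}\sum_{j\in\mathcal{B}_t, j\neq i} g(\rW_{t-1},\rZ_j)$ depends only on $\rW_{t-1}$ and the other data points, hence is independent of $\rZ_i$ (again using Assumption~\ref{assump::index_w/o_rep}). Then $\rV_t = \rA_t - \frac{\eta_t}{b_t}g(\rW_{t-1},\rZ_i) + m_t\rN$. Dividing through by $\eta_t/b_t$ is a bijective (hence $f$-information preserving) transformation, so $I_f(\rV_t;\rZ_i) = I_f\bigl(\frac{b_t}{\eta_t}\rA_t - g(\rW_{t-1},\rZ_i) + \frac{m_tb_t}{\eta_t}\rN \;;\; \rZ_i\bigr)$. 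The point is that this last random variable is the output of an additive noise channel with noise scale $\frac{m_tb_t}{\eta_t}$ whose "signal" is $\frac{b_t}{\eta_t}\rA_t - g(\rW_{t-1},\rZ_i)$.

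To apply Lemma~\ref{lem::gen_HWI} I need to identify two input random variables $\rX,\rX'$ with a common conditional channel. Use the variational identity $I_f(\rV_t;\rZ_i) = \EE_{\rZ_i}\bigl[\textnormal{D}_f(P_{\rV_t|\rZ_i}\| P_{\rV_t})\bigr]$ together with joint convexity of $f$-divergence: $P_{\rV_t} = \EE_{\bar\rZ}[P_{\rV_t|\rZ_i=\bar\rZ}]$ for an independent copy $\bar\rZ\sim\mu$, so $\textnormal{D}_f(P_{\rV_t|\rZ_i=\bz}\|P_{\rV_t}) \le \EE_{\bar\rZ}\bigl[\textnormal{D}_f(P_{\rV_t|\rZ_i=\bz}\|P_{\rV_t|\rZ_i=\bar\rZ})\bigr]$. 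Conditioned on $\rZ_i=\bz$, the variable $\rV_t$ (equivalently its rescaling) is the pushforward through the common channel "add $\frac{m_tb_t}{\eta_t}\rN$" of the distribution of $\frac{b_t}{\eta_t}\rA_t - g(\rW_{t-1},\bz)$; similarly for $\bar\rZ$. Since $\rA_t$ and $\rW_{t-1}$ are independent of $(\rZ_i,\bar\rZ)$, Lemma~\ref{lem::gen_HWI} with the natural coupling (same $(\rW_{t-1},\rA_t)$) gives $\textnormal{D}_f(P_{\rV_t|\rZ_i=\bz}\|P_{\rV_t|\rZ_i=\bar\rz}) \le \EE_{\rW_{t-1}}\bigl[\mathsf{C}_f\bigl(g(\rW_{t-1},\bz), g(\rW_{t-1},\bar\rz); \tfrac{m_tb_t}{\eta_t}\bigr)\bigr]$, where the additive shift by $\frac{b_t}{\eta_t}\rA_t$ cancels inside $\mathsf{C}_f$ because it is common to both channel inputs (using that $\mathsf{C}_f$ depends only on the difference of its first two arguments for the noise types of interest — or more robustly, redo the one-step argument of Lemma~\ref{lem::gen_HWI} directly here). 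Taking expectations over $\bz=\rZ_i$ and $\bar\rz=\bar\rZ$ and collecting everything yields $I_f(\rW_t;\rZ_i)\le \EE\bigl[\mathsf{C}_f(g(\rW_{t-1},\rZ),g(\rW_{t-1},\bar\rZ);\tfrac{m_tb_t}{\eta_t})\bigr]$ with $(\rW_{t-1},\rZ,\bar\rZ)\sim P_{\rW_{t-1}}\otimes\mu\otimes\mu$, as claimed.

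The main obstacle I anticipate is handling the additive shift $\frac{b_t}{\eta_t}\rA_t$ cleanly: one wants the bound to come out in terms of $\mathsf{C}_f(g(\cdot,\rZ),g(\cdot,\bar\rZ);\cdot)$ with \emph{no} dependence on $\rA_t$, which requires either a translation-invariance property of $\mathsf{C}_f$ or — safer — a direct coupling argument that mimics the proof of Lemma~\ref{lem::gen_HWI} while conditioning on $(\rW_{t-1},\rA_t)$ and then peeling off that conditioning by the law of total expectation. A secondary subtlety is making sure all the independence claims ($\rA_t \indep \rZ_i$, $\rN\indep(\rU_t,\rZ_i)$) genuinely follow from Assumption~\ref{assump::index_w/o_rep} (deterministic batches, sampling without replacement, fresh noise), so that the Markov/convexity steps are valid; this is bookkeeping but must be stated carefully.
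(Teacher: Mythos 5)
Your proposal is correct and follows essentially the same route as the paper: data processing to reduce to $I_f(\rV_t;\rZ_i)$, then the additive-channel bound of Lemma~\ref{lem::gen_HWI} with the natural coupling that shares $(\rW_{t-1},\{\rZ_j\}_{j\in\mathcal{B}_t,j\neq i})$ and swaps only the $i$-th gradient for an independent copy. The one obstacle you flag---peeling off the common shift $\rA_t$ and the scale $\eta_t/b_t$---is handled in full generality (for \emph{any} additive noise, not just particular types) by the paper's Lemma~\ref{lem::prop_Ct}, which shows $\mathsf{C}_f(a\bx+\bz,a\bx'+\bz;m)=\mathsf{C}_f(\bx,\bx';m/a)$.
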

\begin{proof}
Recall the definition of $\rU_t$, $\rV_t$ in \eqref{eq::para_update_UVW}. The data processing inequality yields
\begin{align}
    I_f(\rW_t; \rZ_i) 
    &\leq I_f(\rV_{t}; \rZ_i). \label{eq::MI_sdpi_WTZi_VT}
\end{align}
By the definition of $f$-information, we can write 
\begin{align}
\label{eq::ml_KL_Vt_Z}
    I_f(\rV_{t}; \rZ_i) 
    = \EE{\textnormal{D}_{f}(P_{\rV_{t}|\rZ_i} \| P_{\rV_{t}})}
    = \int_{\mathcal{Z}} \textnormal{D}_{f}(P_{\rV_{t}|\rZ_i=\bz} \| P_{\rV_{t}}) \dif \mu(\bz).
\end{align}
Since $\rV_t = \rU_t + m_t \cdot \rN_t$ by its definition, Lemma~\ref{lem::gen_HWI} leads to
\begin{align}
\label{eq::KL_ub_wass_gen}
    \textnormal{D}_{f}\left(P_{\rV_{t}|\rZ_i = \bz} \| P_{\rV_{t}} \right)
    \leq \mathbb{W}(P_{\rU_t| \rZ_i = \bz}, P_{\rU_t}; m_t).
\end{align}
To further upper bound the above optimal transport cost, we construct a special coupling. Let $\rW_{t-1}$ be the output of the noisy iterative algorithm at the $(t-1)$-st iteration. Under Assumption~\ref{assump::index_w/o_rep} (Sampling w/o Replacement), the data point $\rZ_i$ is independent of $\rW_{t-1}$ since it is only used at the $t$-th iteration. Then we introduce two random variables:
\begin{align*}
    &\rU^*_{\bz} 
    \defined \rW_{t-1} - \frac{\eta_{t}}{b_{t}} \Bigg(\sum_{\substack{j\in \mathcal{B}_t,j\neq i}} g(\rW_{t-1}, \rZ_j) + g(\rW_{t-1}, \bz)\Bigg),\\
    &\rU^* 
    \defined \rW_{t-1} - \frac{\eta_{t}}{b_{t}} \sum_{j\in \mathcal{B}_t} g(\rW_{t-1}, \rZ_j).
\end{align*}
Here $\rU^*_{\bz}$ and $\rU^*$ have marginals $P_{\rU_t|\rZ_i = \bz}$ and $P_{\rU_t}$, respectively. By the definition of optimal transport cost in \eqref{eq::defn_Wass}, we have
\begin{align}
\label{eq::Wass_ub_Exp_Cm}
    &\mathbb{W}\left(P_{\rU_{t}|\rZ_i = \bz}, P_{\rU_{t}}; m_t\right)
    \leq \EE{\mathsf{C}_f(\rU_{\bz}^*, \rU^*; m_t)}.
\end{align}
The property of $\mathsf{C}_f(\bx, \by; m)$ in Lemma~\ref{lem::prop_Ct} yields
\begin{align}
    \EE{\mathsf{C}_f(\rU_{\bz}^*, \rU^*; m_t)} 
    &= \EE{\mathsf{C}_f\left(-\frac{\eta_t}{b_t} g(\rW_{t-1}, \bz), -\frac{\eta_t}{b_t} g(\rW_{t-1}, \rZ_i); m_t\right)} \nonumber\\
    &= \EE{\mathsf{C}_f\left(\frac{\eta_t}{b_t} g(\rW_{t-1}, \rZ_i), \frac{\eta_t}{b_t} g(\rW_{t-1},\bz); m_t\right)} \nonumber\\
    &= \EE{\mathsf{C}_f\left(g(\rW_{t-1}, \rZ_i), g(\rW_{t-1}, \bz); \frac{m_tb_t}{\eta_t}\right)}. \label{eq::Exp_Cmt_Cmtbtetat}
\end{align}
We introduce two independent copies $\rZ, \bar{\rZ}$ of $\rZ_i$ such that $(\rW_{t-1}, \rZ, \bar{\rZ}) \sim P_{\rW_{t-1}} \otimes \mu \otimes \mu$. Combining (\ref{eq::ml_KL_Vt_Z}--\ref{eq::Exp_Cmt_Cmtbtetat}) and using Tonelli's theorem lead to
\begin{align}
    I_f(\rV_{t}; \rZ_i)
    &\leq \EE{\mathsf{C}_f\left(g(\rW_{t-1}, \rZ), g(\rW_{t-1},\bar{\rZ}); \frac{m_tb_t}{\eta_t}\right)}. \label{eq::I_Vt_Zi_ub_C_g}
\end{align}
Substituting \eqref{eq::I_Vt_Zi_ub_C_g} into \eqref{eq::MI_sdpi_WTZi_VT} gives the desired conclusion.
\end{proof}

With Lemma~\ref{lem::Bu20}, \ref{lem::Noisy_ite_alg_SDPI}, and \ref{lem::ub_MI_WT_Zi_general} in hand, we now present the main result in this section: three generalization bounds for noisy iterative algorithms under different assumptions.
\begin{theorem}
\label{thm::gen_bound_general_noisy_alg}
Suppose that Assumption~\ref{assump::index_w/o_rep} (Sampling w/o Replacement) and \ref{assump::comp_W_Lip_grad} (Bounded Gradient \& Compact Domain) hold.
\begin{itemize}
\item If the loss $\ell(\bw,\rZ)$ is $\sigma$-sub-Gaussian under $\rZ \sim \mu$ for all $\bw \in \mathcal{W}$, the expected generalization gap $\EE{L_\mu(\rW_T)-L_{\rS}(\rW_T)}$ can be upper bounded by
\begin{align}
\label{eq::KL_bound_ite_alg}
    \frac{\sqrt{2}\sigma}{n} \sum_{t=1}^T b_t \sqrt{ \EE{\mathsf{C}_{\scalebox{.6}{\textnormal KL}}\left(g(\rW_{t-1}, \rZ), g(\rW_{t-1},\bar{\rZ}); \frac{m_tb_t}{\eta_t}\right)} \prod_{t'=t+1}^T \delta(D+2\eta_{t'}K, m_{t'})}.
\end{align}

\item If the loss function is upper bounded by a constant $A>0$, the expected generalization gap $\EE{L_\mu(\rW_T)-L_{\rS}(\rW_T)}$ can be upper bounded by
\begin{align}
\label{eq::TV_bound_ite_alg}
    \frac{A}{n} \sum_{t=1}^T b_t \EE{\mathsf{C}_{\scalebox{.6}{\textnormal TV}}\left(g(\rW_{t-1}, \rZ), g(\rW_{t-1},\bar{\rZ}); \frac{m_tb_t}{\eta_t}\right)} \prod_{t'=t+1}^T \delta(D+2\eta_{t'}K, m_{t'}).
\end{align}

\item If the variance of the loss function is finite (i.e., $\Var{\ell(\rW_T;\rZ)}<\infty$), the expected generalization gap $\EE{L_\mu(\rW_T)-L_{\rS}(\rW_T)}$ can be upper bounded by
\begin{align}
\label{eq::chi_bound_ite_alg}
    \frac{\sigma}{n} \sum_{t=1}^T b_t \sqrt{\EE{\mathsf{C}_{\chi^2}\left(g(\rW_{t-1}, \rZ), g(\rW_{t-1},\bar{\rZ}); \frac{m_tb_t}{\eta_t}\right)} \prod_{t'=t+1}^T \delta(D+2\eta_{t'}K, m_{t'})},
\end{align}
where $\sigma \defined \sqrt{\Var{\ell(\rW_T;\rZ)}}$ with $(\rW_{T},\rZ)\sim P_{\rW_{T}}\otimes \mu$.
\end{itemize}
\end{theorem}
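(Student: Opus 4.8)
The plan is to combine the three ingredients already assembled---Lemma~\ref{lem::Bu20} (the $f$-information generalization bounds), Lemma~\ref{lem::Noisy_ite_alg_SDPI} (the time-decaying strong data processing bound), and Lemma~\ref{lem::ub_MI_WT_Zi_general} (the optimal-transport upper bound on $I_f(\rW_t;\rZ_i)$)---by chaining them together and then regrouping the sum over data points into a sum over iterations. First I would fix an $f$-divergence (treating the three cases $f(t)=t\log t$, $f(t)=|t-1|/2$, $f(t)=t^2-1$ in parallel) and, for a data point $\rZ_i$ used at iteration $t=t(i)$, apply Lemma~\ref{lem::Noisy_ite_alg_SDPI} followed by Lemma~\ref{lem::ub_MI_WT_Zi_general} to obtain the single master estimate
\begin{align*}
    I_f(\rW_T;\rZ_i)
    \leq \EE{\mathsf{C}_f\!\left(g(\rW_{t-1},\rZ), g(\rW_{t-1},\bar\rZ); \tfrac{m_tb_t}{\eta_t}\right)} \prod_{t'=t+1}^{T}\delta(D+2\eta_{t'}K,m_{t'}),
\end{align*}
valid for each $i$, where $t=t(i)$ and the expectation is over $(\rW_{t-1},\rZ,\bar\rZ)\sim P_{\rW_{t-1}}\otimes\mu\otimes\mu$.

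Next I would plug this into each branch of Lemma~\ref{lem::Bu20}. For the sub-Gaussian case, \eqref{eq::Bu_gen_bound} gives $|\EE{L_\mu(\rW_T)-L_\rS(\rW_T)}|\le \tfrac{\sqrt2\sigma}{n}\sum_{i=1}^{n}\sqrt{I(\rW_T;\rZ_i)}$; substituting the master estimate for each $i$ yields a sum of square roots indexed by $i$. The key bookkeeping step is that under Assumption~\ref{assump::index_w/o_rep} the mini-batches are fixed in advance and sampling is without replacement, so the index set $[n]$ is partitioned (on the used indices) by iteration: $\sum_{i=1}^{n}(\cdot)=\sum_{t=1}^{T}\sum_{i\in\mathcal{B}_t}(\cdot)$, and for every $i\in\mathcal{B}_t$ the right-hand side of the master estimate is \emph{the same} quantity depending only on $t$. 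Hence the inner sum over $i\in\mathcal{B}_t$ contributes a factor $b_t=|\mathcal{B}_t|$, collapsing the double sum to $\sum_{t=1}^{T}b_t\sqrt{\EE{\mathsf{C}_{\scalebox{.6}{\textnormal KL}}(\cdots)}\prod_{t'=t+1}^{T}\delta(\cdots)}$, which is exactly \eqref{eq::KL_bound_ite_alg}. For indices never used in any iteration, $I_f(\rW_T;\rZ_i)=0$ by independence, so they drop out. The total variation case is identical but with \eqref{eq::T_inf_bound} and no square root, using $\mathsf{C}_{\scalebox{.6}{\textnormal TV}}$ and noting $\mathsf{C}_{\scalebox{.6}{\textnormal TV}}$ is exactly the cost function appearing in both $\delta$ and the T-information bound, giving \eqref{eq::TV_bound_ite_alg}. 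The $\chi^2$ case uses \eqref{eq::chi_inf_bound} with $\sigma=\sqrt{\Var{\ell(\rW_T;\rZ)}}$ and $\mathsf{C}_{\chi^2}$, producing \eqref{eq::chi_bound_ite_alg}; here one should note that the variance constant in Lemma~\ref{lem::Bu20} is evaluated at $\rW_T$ with a fresh $\rZ$, matching the theorem's definition.

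One subtlety to handle carefully is the monotonicity/consistency of using the same $f$ throughout: Lemma~\ref{lem::Noisy_ite_alg_SDPI} bounds $I_f(\rW_T;\rZ_i)$ in terms of $I_f(\rW_t;\rZ_i)$ with the \emph{Dobrushin} (total variation) coefficient $\delta$ regardless of $f$---this is legitimate because $\delta(\mathsf{diam}(\cdot),\cdot)=\eta_{\scalebox{.6}{\textnormal TV}}\ge\eta_f$, so the same $\delta$ validly bounds every $\eta_f$; I would state this explicitly so the reader sees why all three bounds share the same product $\prod\delta(D+2\eta_{t'}K,m_{t'})$. A second point: Lemma~\ref{lem::ub_MI_WT_Zi_general} only needs Assumption~\ref{assump::index_w/o_rep}, whereas Lemma~\ref{lem::Noisy_ite_alg_SDPI} also needs Assumption~\ref{assump::comp_W_Lip_grad} (compactness, bounded updates) to control $\mathsf{diam}(\mathcal{U}_T)$; since the theorem assumes both, nothing extra is required, but I would remark that dropping Assumption~\ref{assump::comp_W_Lip_grad} only removes the $\delta$-product (replacing it by $1$). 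The main obstacle---though it is conceptual rather than technical---is getting the index accounting exactly right: making sure that "the data point $\rZ_i$ is used only at the $t$-th iteration" (needed for the independence of $\rZ_i$ and $\rW_{t-1}$ in Lemma~\ref{lem::ub_MI_WT_Zi_general}, and for the Markov chain in Lemma~\ref{lem::Noisy_ite_alg_SDPI}) holds for every $i$ under without-replacement sampling, and confirming that each such $i$ indeed lands in exactly one $\mathcal{B}_t$ so the collapse $\sum_{i\in\mathcal{B}_t}1=b_t$ is valid. Everything else is a direct substitution, so after these observations the proof is essentially a three-line chain for each of the three divergences.

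\begin{proof}[Proof sketch to be expanded]
Fix one of the three $f$-divergences. Let $\rZ_i$ be a data point used (under Assumption~\ref{assump::index_w/o_rep}, exactly once) at iteration $t=t(i)$. Chaining Lemma~\ref{lem::Noisy_ite_alg_SDPI} and Lemma~\ref{lem::ub_MI_WT_Zi_general} gives, for every such $i$,
\begin{align*}
    I_f(\rW_T;\rZ_i)
    \leq \EE{\mathsf{C}_f\!\left(g(\rW_{t-1},\rZ), g(\rW_{t-1},\bar\rZ); \tfrac{m_tb_t}{\eta_t}\right)} \prod_{t'=t+1}^{T}\delta(D+2\eta_{t'}K,m_{t'}),
\end{align*}
where the right side depends on $i$ only through $t=t(i)$, and $I_f(\rW_T;\rZ_i)=0$ for unused $i$. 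Substitute into the corresponding bound of Lemma~\ref{lem::Bu20} and regroup $\sum_{i=1}^n=\sum_{t=1}^T\sum_{i\in\mathcal{B}_t}$; since each inner summand is constant in $i$, the inner sum yields the factor $b_t$. This produces \eqref{eq::KL_bound_ite_alg}, \eqref{eq::TV_bound_ite_alg}, \eqref{eq::chi_bound_ite_alg} in the three cases respectively.
\end{proof}
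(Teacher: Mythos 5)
Your proposal is correct and follows essentially the same route as the paper's own proof: chain Lemma~\ref{lem::Noisy_ite_alg_SDPI} with Lemma~\ref{lem::ub_MI_WT_Zi_general} to get a per-data-point bound on $I_f(\rW_T;\rZ_i)$ depending only on the iteration $t(i)$, substitute into the appropriate branch of Lemma~\ref{lem::Bu20}, and collapse $\sum_{i=1}^n = \sum_{t=1}^T\sum_{i\in\mathcal{B}_t}$ into the factor $b_t$. Your added remarks on why the Dobrushin coefficient $\delta$ uniformly dominates $\eta_f$ for all three divergences, and on the without-replacement index accounting, are sound and if anything make the argument slightly more explicit than the paper's version.
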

\begin{proof}
See Appendix~\ref{append::gen_bound_general_noisy_alg}.
\end{proof}
Our generalization bounds involve the information (e.g., step size $\eta_t$, magnitude of noise $m_t$, and batch size $b_t$) at all iterations. Moreover, our bounds depend on the data distribution $\mu$ through the expectation terms. Under Assumption~\ref{assump::index_w/o_rep} (Sampling w/o Replacement), if a data point $\rZ_i$ is used in the $t$-th iteration, it is independent of $\rW_{t-1}$ (i.e., $P_{\rW_{t-1},\rZ_i} = P_{\rW_{t-1}} \otimes \mu$). This is why the expectations are taken over $(\rW_{t-1},\rZ,\bar{\rZ})\sim P_{\rW_{t-1}}\otimes \mu \otimes \mu$. Finally, since the function $\delta$ is often strictly smaller than $1$ (see Table~\ref{table:C_delta_exp} for some examples), the multiplicative factor $\prod_{t'=t+1}^T \delta(D+2\eta_{t'}K, m_{t'})$ enables the impact of early iterations on our bounds to reduce with time $T$.

The generalization bounds in Theorem~\ref{thm::gen_bound_general_noisy_alg} may seem contrived at first glance as they rely on the functions $\delta$ and $\mathsf{C}_f$ defined in \eqref{eq::defn_delta_At} and \eqref{eq::cost_func}. However, in the next section, we will show that these bounds can be significantly simplified when we apply them to real applications. Furthermore, we will also compare the advantage of each bound under these applications.

\section{Applications}
\label{sec::application}

We demonstrate the generalization bounds in Theorem~\ref{thm::gen_bound_general_noisy_alg} through several applications in this section. 

\subsection{Differentially Private Stochastic Gradient Descent (DP-SGD)}
\label{subsec::DP_SGD}

Differentially private stochastic gradient descent (DP-SGD) is a variant of SGD where noise is added to a stochastic gradient estimator in order to ensure privacy of each individual record. We recall an implementation of (projected) DP-SGD \citep[see e.g., Algorithm~1 in][]{feldman2018privacy}. At each iteration, the parameter of the empirical risk is updated using the following rule:
\begin{align}
\label{eq::DP_SGD_alg}
    \rW_t = \mathsf{Proj}_{\mathcal{W}} \left(\rW_{t-1} - \eta \left(g(\rW_{t-1}, \{\rZ_{i}\}_{i \in \mathcal{B}_t}) + \rN_t \right) \right),
\end{align}
where $\rN_t$ is an additive noise drawn independently from a distribution $P_{\rN}$; $\mathcal{B}_t$ contains the indices of the data points used at the current iteration and $b_t \defined |\mathcal{B}_t|$; the function $g$ indicates a direction for updating the parameter. The recursion in \eqref{eq::DP_SGD_alg} is run for $T$ iterations and we assume that data are drawn without replacement. At the end of each iteration, the parameter is projected onto a compact domain $\mathcal{W}$. We denote the diameter of $\mathcal{W}$ by $D$. The output from the DP-SGD algorithm is the last iterate $\rW_T$. Finally, we assume that 
\begin{align}
\label{eq::DP_Lip}
    \sup_{\bw\in\mathcal{W},\bz\in\mathcal{Z}}\|g(\bw,\bz)\| \leq K.
\end{align}
This assumption can be satisfied by gradient clipping and is crucial for guaranteeing differential privacy as it controls the sensitivity of each update. 

The differential privacy guarantees of the DP-SGD algorithm have been extensively studied in the literature \citep[see e.g.,][]{song2013stochastic,wu2017bolt,feldman2018privacy,balle2019privacy,asoodeh2020privacy}. Here we consider a different angle: the generalization of DP-SGD. We derive generalization bounds for the DP-SGD algorithm under Laplace and Gaussian mechanisms by using our Theorem~\ref{thm::gen_bound_general_noisy_alg}.
\begin{proposition}[Laplace mechanism]
\label{prop::DP_SGD_Lap}
Suppose that the additive noise $\rN_t$ in \eqref{eq::DP_SGD_alg} follows a standard multivariate Laplace distribution. Let $\mathcal{W}$ be equipped with the 1-norm and $q \defined 1 - \exp\left(-(D+2\eta K)/\eta\right) \in (0,1)$.
\begin{itemize}
    \item If the loss $\ell(\bw, \rZ)$ is $\sigma$-sub-Gaussian under $\rZ\sim \mu$ for all $\bw \in \mathcal{W}$, then
    \begin{align}
        \EE{L_\mu(\rW_T)-L_{\rS}(\rW_T)}
        \leq \frac{2\sigma}{n} \sum_{t=1}^T \sqrt{b_t\cdot  \mmae\left(g(\rW_{t-1}, \rZ)\right) \cdot q^{T-t}}.
    \end{align}
    
    \item If the loss function is upper bounded by $A>0$, then
    \begin{align}
        \EE{L_\mu(\rW_T)-L_{\rS}(\rW_T)}
        \leq \frac{\sqrt{2}A}{n} \sum_{t=1}^T \sqrt{b_t}\cdot \EE{\sqrt{\left\|g(\rW_{t-1}, \rZ) -\be\right\|_1}} \cdot q^{T-t},
    \end{align}
    where $\be \defined \median\left(g(\rW_{t-1}, \rZ)\right)$.

    \item If the variance of the loss function is bounded (i.e., $\Var{\ell(\rW_T;\rZ)} < \infty$), then
    \begin{align}
    \label{eq::gen_bound_chi_Lap}
        \EE{L_\mu(\rW_T)-L_{\rS}(\rW_T)}
        \leq \frac{\sigma}{n} \sum_{t=1}^T \sqrt{b_t \cdot \EE{\exp\left(2\left\|g(\rW_{t-1}, \rZ) - \be \right\|_1\right) - 1}  \cdot q^{T-t}},
    \end{align}
    where $\sigma = \sqrt{\Var{\ell(\rW_T;\rZ)}}$ and $\be \defined \median\left(g(\rW_{t-1}, \rZ)\right)$.
\end{itemize}
\end{proposition}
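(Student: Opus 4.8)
The plan is to read off all three inequalities as specializations of Theorem~\ref{thm::gen_bound_general_noisy_alg}. First I would note that the DP-SGD recursion~\eqref{eq::DP_SGD_alg} is the instance of~\eqref{eq::rec_update_parameter_Extend} with constant learning rate $\eta_t\equiv\eta$ and constant noise magnitude $m_t\equiv\eta$: distributing the $\eta$ gives $\rW_t=\mathsf{Proj}_{\mathcal{W}}(\rW_{t-1}-\eta\,g(\rW_{t-1},\{\rZ_i\}_{i\in\mathcal{B}_t})-\eta\rN)$, and the sign flip on $\rN$ is harmless because a standard multivariate Laplace law is symmetric while $\mathsf{C}_f$ and $\delta$ depend only on the law of the noise. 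Under this identification $\tfrac{m_t b_t}{\eta_t}=b_t$ and $D+2\eta_{t'}K=D+2\eta K$, so in Theorem~\ref{thm::gen_bound_general_noisy_alg} every cost term is $\mathsf{C}_f(g(\rW_{t-1},\rZ),g(\rW_{t-1},\bar{\rZ});b_t)$ and every decay term is $\delta(D+2\eta K,\eta)$. Plugging in the Laplace rows of Table~\ref{table:C_delta_exp} then gives $\delta(D+2\eta K,\eta)\le 1-\exp(-(D+2\eta K)/\eta)=q$, hence $\prod_{t'=t+1}^{T}\delta(D+2\eta K,\eta)\le q^{T-t}$, together with $\mathsf{C}_{\mathrm{KL}}(\bx,\bx';b_t)=\|\bx-\bx'\|_1/b_t$, $\mathsf{C}_{\mathrm{TV}}(\bx,\bx';b_t)\le\sqrt{\|\bx-\bx'\|_1/(2b_t)}$, and $\mathsf{C}_{\chi^2}(\bx,\bx';b_t)\le\exp(\|\bx-\bx'\|_1/b_t)-1$.

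The second step is to replace each expectation over the two independent copies $g(\rW_{t-1},\rZ),g(\rW_{t-1},\bar{\rZ})$ by a one-sample quantity with a data-independent reference point, using the triangle inequality and the fact that $(\rW_{t-1},\rZ)$ and $(\rW_{t-1},\bar{\rZ})$ share the same law. For the KL bound: for any constant $\ba$, $\EE{\|g(\rW_{t-1},\rZ)-g(\rW_{t-1},\bar{\rZ})\|_1}\le 2\,\EE{\|g(\rW_{t-1},\rZ)-\ba\|_1}$, and minimizing over $\ba$ gives $2\,\mmae(g(\rW_{t-1},\rZ))$; inserted into~\eqref{eq::KL_bound_ite_alg}, carrying the $1/b_t$ and $q^{T-t}$ through the square root turns $b_t\sqrt{\cdot}$ into $\sqrt{2b_t\,\mmae(g(\rW_{t-1},\rZ))\,q^{T-t}}$, and the two leading $\sqrt{2}$ factors merge into the claimed constant $2$. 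For the total-variation bound, $\sqrt{a+b}\le\sqrt a+\sqrt b$ with $\be=\median(g(\rW_{t-1},\rZ))$ gives $\EE{\sqrt{\|g(\rW_{t-1},\rZ)-g(\rW_{t-1},\bar{\rZ})\|_1}}\le 2\,\EE{\sqrt{\|g(\rW_{t-1},\rZ)-\be\|_1}}$; plugged into~\eqref{eq::TV_bound_ite_alg} the constants combine ($b_t\cdot 2/\sqrt{2b_t}=\sqrt{2b_t}$) to yield $\tfrac{\sqrt2 A}{n}\sum_t\sqrt{b_t}\,\EE{\sqrt{\|g(\rW_{t-1},\rZ)-\be\|_1}}\,q^{T-t}$. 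For the $\chi^2$ bound, split $\exp(\|g(\rW_{t-1},\rZ)-g(\rW_{t-1},\bar{\rZ})\|_1/b_t)$ via the triangle inequality into a product of two exponentials, bound the product by half the sum of their squares, and use the equal-law fact to reach $\EE{\mathsf{C}_{\chi^2}(\cdot;b_t)}\le\EE{\exp(2\|g(\rW_{t-1},\rZ)-\be\|_1/b_t)}-1$.

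The one genuinely non-mechanical step is finishing the $\chi^2$ bound: I still need to convert the prefactor $b_t$ of~\eqref{eq::chi_bound_ite_alg} into the $\sqrt{b_t}$ of~\eqref{eq::gen_bound_chi_Lap}, and, unlike the KL cost, the $\chi^2$ cost carries no explicit $1/b_t$. The fix is convexity of $\exp$: for $\lambda=1/b_t\in(0,1]$ (since $b_t\ge1$) and $X=2\|g(\rW_{t-1},\rZ)-\be\|_1\ge0$, $\exp(\lambda X)=\exp(\lambda X+(1-\lambda)\cdot 0)\le\lambda\exp(X)+(1-\lambda)$, i.e.\ $\exp(\lambda X)-1\le\lambda(\exp(X)-1)$; taking expectations, $\EE{\exp(2\|g(\rW_{t-1},\rZ)-\be\|_1/b_t)}-1\le\tfrac1{b_t}\big(\EE{\exp(2\|g(\rW_{t-1},\rZ)-\be\|_1)}-1\big)$. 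Feeding this bound and $\prod\delta\le q^{T-t}$ into~\eqref{eq::chi_bound_ite_alg} turns $b_t\sqrt{\cdot}$ into $\sqrt{b_t(\cdot)}$ and reproduces~\eqref{eq::gen_bound_chi_Lap} exactly. The remaining work is constant-chasing, plus the observation that the reference points (the optimal constant $\ba$ for the MMAE and the coordinatewise median $\be$) are deterministic, so the final bounds are estimable from samples; the hypotheses of the three bullets coincide verbatim with those of Theorem~\ref{thm::gen_bound_general_noisy_alg}, so nothing further is needed there.
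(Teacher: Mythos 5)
Your proposal is correct and follows exactly the route the paper intends: specialize Theorem~\ref{thm::gen_bound_general_noisy_alg} with $\eta_t=m_t=\eta$, plug in the Laplace rows of Table~\ref{table:C_delta_exp}, decouple the two independent copies via the triangle inequality with the reference point $\be$ (plus Cauchy--Schwarz/AM--GM for the $\chi^2$ case), and use $\exp(x/b)-1\le(\exp(x)-1)/b$ to move the batch size inside the square root --- precisely the ``similar analysis'' the appendix carries out in detail only for the Gaussian mechanism. The only nitpick is that $\mathsf{C}_{\scalebox{.6}{\textnormal KL}}(\bx,\bx';b_t)=\|\bx-\bx'\|_1/b_t$ should be ``$\le$'' (the Laplace KL entry in Table~\ref{table:C_delta_exp} is an upper bound), which is in the right direction and changes nothing.
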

\begin{proof}
See Appendix~\ref{append::gen_bound_DP_SGD}.
\end{proof}

\begin{proposition}[Gaussian mechanism]
\label{prop::DP_SGD_Gaussian}
Suppose that the additive noise $\rN_t$ in \eqref{eq::DP_SGD_alg} follows a standard multivariate Gaussian distribution. Let $\mathcal{W}$ be equipped with the 2-norm and $q \defined 1 - 2\bar{\Phi}\left((D+2\eta K)/2\eta\right) \in (0,1)$ with $\bar{\Phi}(\cdot)$ being the Gaussian CCDF.
\begin{itemize}
    \item If the loss $\ell(\bw, \rZ)$ is $\sigma$-sub-Gaussian under $\rZ\sim \mu$ for all $\bw \in \mathcal{W}$, then
    \begin{align}
    \label{eq::gen_bound_KL_Gauss}
        \EE{L_\mu(\rW_T)-L_{\rS}(\rW_T)}
        \leq \frac{2\sigma}{n} \sum_{t=1}^T \sqrt{\Var{g(\rW_{t-1}, \rZ)} \cdot q^{T-t}}.
    \end{align}
    
    \item If the loss function is upper bounded by $A>0$, then
    \begin{align}
    \label{eq::gen_bound_TV_Gauss}
        \EE{L_\mu(\rW_T)-L_{\rS}(\rW_T)}
        \leq \frac{A}{n} \sum_{t=1}^T \EE{\left\|g(\rW_{t-1}, \rZ) - \be\right\|_2} \cdot q^{T-t},
    \end{align}
    where $\be \defined \EE{g(\rW_{t-1}, \rZ)}$.
    
    \item If the variance of the loss function is bounded (i.e., $\Var{\ell(\rW_T;\rZ)} < \infty$), then
    \begin{align}
    \label{eq::gen_bound_chi_Gauss}
        \EE{L_\mu(\rW_T)-L_{\rS}(\rW_T)}
        \leq \frac{\sigma}{n} \sum_{t=1}^T \sqrt{\EE{\exp\left(4\left\|g(\rW_{t-1}, \rZ) - \be \right\|_2^2\right) - 1}  \cdot q^{T-t}},
    \end{align}
    where $\sigma = \sqrt{\Var{\ell(\rW_T;\rZ)}}$ and $\be \defined \EE{g(\rW_{t-1}, \rZ)}$.
\end{itemize}
\end{proposition}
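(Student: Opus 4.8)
The plan is to derive Proposition~\ref{prop::DP_SGD_Gaussian} directly from Theorem~\ref{thm::gen_bound_general_noisy_alg} by recognizing the DP-SGD recursion as an instance of the general noisy update \eqref{eq::rec_update_parameter_Extend} and then reading off the Gaussian row of Table~\ref{table:C_delta_exp}. Matching \eqref{eq::DP_SGD_alg} with \eqref{eq::rec_update_parameter_Extend} and using that a standard multivariate Gaussian is symmetric (so $-\eta\rN$ has the same law as $\eta\rN$), the DP-SGD update is exactly \eqref{eq::rec_update_parameter_Extend} with $\eta_t=\eta$ and $m_t=\eta$; hence $m_tb_t/\eta_t=b_t$, and every per-step decay factor equals $\delta(D+2\eta K,\eta)$, which by the Gaussian entry (with the $2$-norm that $\mathcal{W}$ carries) is $1-2\bar\Phi\!\big(\tfrac{D+2\eta K}{2\eta}\big)=q\in(0,1)$, so $\prod_{t'=t+1}^{T}\delta(D+2\eta_{t'}K,m_{t'})=q^{T-t}$. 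Assumption~\ref{assump::comp_W_Lip_grad} holds with diameter $D$ and the constant $K$ from \eqref{eq::DP_Lip}, so Theorem~\ref{thm::gen_bound_general_noisy_alg} is applicable.

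Write $A\defined g(\rW_{t-1},\rZ)$ and $B\defined g(\rW_{t-1},\bar\rZ)$ with $(\rW_{t-1},\rZ,\bar\rZ)\sim P_{\rW_{t-1}}\otimes\mu\otimes\mu$, and let $\be\defined\EE{g(\rW_{t-1},\rZ)}$ (the minimizer in \eqref{eq::defn_var_rv}). Substituting the Gaussian $\mathsf{C}_f$'s, the three bounds of Theorem~\ref{thm::gen_bound_general_noisy_alg} become (i) $\tfrac{\sqrt2\sigma}{n}\sum_t b_t\sqrt{\EE{\|A-B\|_2^2/(2b_t^2)}\,q^{T-t}}$, (ii) $\tfrac{A}{n}\sum_t b_t\,\EE{\mathsf{C}_{\mathrm{TV}}(A,B;b_t)}\,q^{T-t}$ with $\mathsf{C}_{\mathrm{TV}}(A,B;b_t)\le\|A-B\|_2/(2b_t)$, and (iii) $\tfrac{\sigma}{n}\sum_t b_t\sqrt{\EE{\exp(\|A-B\|_2^2/b_t^2)-1}\,q^{T-t}}$. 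In (i) and (ii) the batch size cancels immediately ($b_t\cdot\tfrac{1}{b_t\sqrt2}$ and $b_t\cdot\tfrac{1}{2b_t}$). In (iii) I would first use the elementary inequality $e^{x/c}-1\le\tfrac1c(e^x-1)$ for $c\ge1,\,x\ge0$ (from convexity of $t\mapsto e^{tx}-1$, which vanishes at $0$, so $t\mapsto(e^{tx}-1)/t$ is nondecreasing) with $c=b_t^2$, turning the summand into $\tfrac1{b_t^2}\EE{\exp(\|A-B\|_2^2)-1}$ and again cancelling $b_t$; this leaves all three bounds written purely through the symmetrized pair $(A,B)$, with no $b_t$ remaining.

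It remains to pass from $(A,B)$ to the centered quantity at $\be$. For (i) and (ii) this is the triangle inequality: $\|A-B\|_2^2\le2\|A-\be\|_2^2+2\|B-\be\|_2^2$ gives $\EE{\|A-B\|_2^2}\le4\Var{g(\rW_{t-1},\rZ)}$, and $\|A-B\|_2\le\|A-\be\|_2+\|B-\be\|_2$ gives $\EE{\|A-B\|_2}\le2\EE{\|g(\rW_{t-1},\rZ)-\be\|_2}$, which yields \eqref{eq::gen_bound_KL_Gauss} and \eqref{eq::gen_bound_TV_Gauss}. For (iii) I would combine $\|A-B\|_2^2\le2\|A-\be\|_2^2+2\|B-\be\|_2^2$ with the conditional independence of $\rZ,\bar\rZ$ given $\rW_{t-1}$, so that $\EE{e^{\|A-B\|_2^2}\mid\rW_{t-1}}\le\big(\EE{e^{2\|A-\be\|_2^2}\mid\rW_{t-1}}\big)^2$, followed by Jensen's inequality $\EE{(\EE{X\mid\rW_{t-1}})^2}\le\EE{X^2}$ to get $\EE{\exp(\|A-B\|_2^2)-1}\le\EE{\exp(4\|g(\rW_{t-1},\rZ)-\be\|_2^2)-1}$, which is \eqref{eq::gen_bound_chi_Gauss}. (Proposition~\ref{prop::DP_SGD_Lap} is proved identically using the Laplace row, with $\be$ the componentwise median and $\EE{\|A-B\|_1}\le2\,\mmae(g(\rW_{t-1},\rZ))$.) The computation is essentially routine; the only delicate point is the bookkeeping of the batch-size factors — using $e^{x/c}-1\le\tfrac1c(e^x-1)$ rather than the weaker $e^{x/c}-1\le e^x-1$ is what makes the $b_t$'s cancel cleanly in the $\chi^2$ bound — together with keeping the outer expectation over $\rW_{t-1}$ in case (iii) so that $\be$ is constant inside the conditional expectations when the product is split.
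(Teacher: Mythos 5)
Your proposal is correct and follows essentially the same route as the paper: instantiate Theorem~\ref{thm::gen_bound_general_noisy_alg} with $\eta_t=\eta$, $m_t=\eta$, read off the Gaussian row of Table~\ref{table:C_delta_exp}, and center at $\be=\EE{g(\rW_{t-1},\rZ)}$ via the triangle inequality. The only (immaterial) difference is in the $\chi^2$ case, where you apply $e^{x/c}-1\le(e^x-1)/c$ before centering and use conditional independence plus Jensen, whereas the paper centers first via an unconditional Cauchy--Schwarz and applies that inequality last; both orderings yield \eqref{eq::gen_bound_chi_Gauss}.
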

\begin{proof}
See Appendix~\ref{append::gen_bound_DP_SGD}.
\end{proof}
Our Theorem~\ref{thm::gen_bound_general_noisy_alg} leads to three generalization bounds for each DP-SGD mechanism. We discuss the advantage of each bound in the following remark by focusing on the Gaussian mechanism.
\begin{remark}
\label{rem::DPSGD_Comp}
We first assume that the loss function is upper bounded by $A$, leading to an $A/2$-sub-Gaussian loss $\ell(\bw,\rZ)$ and $\sqrt{\Var{\ell(\rW_T;\rZ)}} \leq A/2$. Since
\begin{align*}
    \EE{\rX} \leq \sqrt{\EE{\rX^2}} \leq \frac{1}{2} \sqrt{\EE{\exp(4\rX^2) - 1}},
\end{align*}
then for $\be \defined \EE{g(\rW_{t-1}, \rZ)}$
\begin{align*}
    \EE{\left\|g(\rW_{t-1}, \rZ) - \be\right\|_2} 
    \leq \sqrt{\Var{g(\rW_{t-1}, \rZ)}}
    \leq \frac{1}{2} \sqrt{\EE{\exp\left(4\left\|g(\rW_{t-1}, \rZ) - \be \right\|_2^2\right) - 1}}.
\end{align*}
Therefore, we have
\begin{align*}
    \eqref{eq::gen_bound_TV_Gauss} \leq \eqref{eq::gen_bound_KL_Gauss} \leq \eqref{eq::gen_bound_chi_Gauss}.
\end{align*}
In other words, the total-variation bound in \eqref{eq::TV_bound_ite_alg} yields the tightest generalization bound \eqref{eq::gen_bound_TV_Gauss} for the DP-SGD algorithm. On the other hand, the $\chi^2$-divergence bound in \eqref{eq::chi_bound_ite_alg} leads to a bound \eqref{eq::gen_bound_chi_Gauss} that requires the mildest assumption. At this moment, it seems unclear what the advantage of the KL-divergence bound is. Nonetheless, we will show in Section~\ref{subsec::SGLD} that the nice properties of mutual information (e.g., chain rule) help extend our analysis to the general setting where data are drawn with replacement. 
\end{remark}

A standard approach \citep[see e.g.,][]{he2021tighter} for analyzing the generalization of the DP-SGD algorithm often follows two steps: establish $(\epsilon,\delta)$-differential privacy guarantees for the DP-SGD algorithm and prove/apply a generalization bound that holds for \emph{any} $(\epsilon,\delta)$-differentially private algorithms. 
However, generalization bounds obtained in this manner are distribution-independent since differential privacy is robust with respect to the data distribution. As observed in existing literature \citep[see e.g.,][]{zhang2016understanding} and our Figure~\ref{Fig::label_corruption}, machine learning models trained under different data distributions can exhibit completely different generalization behaviors. Our bounds take into account the data distribution through the expectation terms (or mmae, variance).

Our generalization bounds can be estimated from data. Take the bound in \eqref{eq::gen_bound_KL_Gauss} as an example. If sufficient data are available at each iteration, we can estimate the variance term by the population variance of $\{g(\rW_{t-1}, \rZ_{i}) \mid i \in \mathcal{B}_t\}$ since $\rW_{t-1}$ is independent of $\rZ_i$ for $i \in \mathcal{B}_t$. Alternatively, we can draw a hold-out set for estimating the variance term at each iteration.

\subsection{Federated Learning (FL)}
\label{subsec::FL}

Federated learning (FL) \citep{mcmahan2017communication} is a setting where a model is trained across multiple clients (e.g., mobile devices) under the management of a central server while the training data are kept decentralized. We recall the federated averaging algorithm with local-update DP-SGD in Algorithm~\ref{alg:FL_SGLD} and refer the readers to \citet{kairouz2019advances} for a more comprehensive review.
\begin{algorithm}[httb]
\begingroup
\small
\caption{Federated averaging (local DP-SGD).}
\label{alg:FL_SGLD}

\begin{algorithmic}[*]

\State {\bfseries Input:} 

\State \quad Total number of clients $N$ and clients per round $C$

\State \quad Total global updates $T$ and local updates $M$

\State \quad DP-SGD learning rate $\eta$

\vspace{0.25em}

\State \textbf{Initialize:} $\rW_0$ randomly selected from $\mathcal{W}$

\For{$t = 1, \cdots, T$ global steps}

\State Server chooses a subset $\mathcal{S}_t$ of $C$ clients

\State Server sends $\rW_{t-1}$ to all selected clients

\For{each client $k \in \mathcal{S}_t$ in parallel}

\State Initialize $\rW_{t,0}^{k} \gets \rW_{t-1}$

\For{$j = 1, \cdots, M$ local steps}

\State Draw $b$ fresh data points $\{\rZ_i^k\}_{i\in [b]}$ and noise $\rN_{t,j}^{k} \sim N(0, \mathbf{I}_d)$

\State Update the parameter $\rW_{t,j}^{k} \gets \mathsf{Proj}_{\mathcal{W}} \left(\rW_{t,j-1}^{k} - \eta \left(g\left(\rW_{t,j-1}^{k}, \{\rZ_i^k\}_{i\in [b]}\right) + \rN_{t,j}^{k}\right) \right)$

\EndFor

\State Send $\rW_{t,M}^{k}$ back to the server

\EndFor

\State Server aggregates the parameter $\rW_{t} = \frac{1}{C} \sum_{k\in \mathcal{S}_t} \rW_{t,M}^{k}$

\EndFor

\State {\bfseries Output:} $\rW_T$

\end{algorithmic}

\endgroup

\end{algorithm}

It is crucial to be able to \emph{monitor} the performance of the global model on each client. Although the global model could achieve a desirable performance on average, it may fail to achieve high accuracy for each local client. 
This is because in the federated learning setting, data are typically unbalanced (different clients own different number of samples) and not identically distributed (data distribution varies across different clients). 
Since in practice clients may not have an extra hold-out data set to evaluate the performance of the global model, they can instead compute the loss of the model on their training set and compensate the mismatch by the generalization gap (or its upper bound). 
It is worth noting that this approach of monitoring model performance is completely decentralized as the clients do not need to share their data with the server and all the computation can be done locally.
As discussed in Remark~\ref{rem::DPSGD_Comp}, the total variation bound in \eqref{eq::TV_bound_ite_alg} often leads to the tightest generalization bound so we recast it under the setting of FL. 
\begin{proposition}
\label{prop::gen_bound_FL}
Let $\mathcal{T}_k \subset [T]$ contain the indices of global iterations in which the $k$-th client interacts with the server. If the loss function is upper bounded by $A>0$, the expected generalization gap of the $k$-th client has an upper bound:
\begin{align*}
    \EE{L_{\mu_k}(\rW_T)-L_{\rS_k}(\rW_T)}
    \leq
    \frac{A}{n_k} \sum_{t \in \mathcal{T}_k} \sum_{j=1}^M \EE{\|g(\rW_{t,j-1}^{k}, \rZ^k) - \be\|_2} \cdot q^{M(T+1-t)-j},
\end{align*}
where $n_k$ is the number of training data from the $k$-th client, $\be \defined \EE{g(\rW_{t,j-1}^{k}, \rZ^k)}$, and 
\begin{align*}
    q \defined 1-2\bar{\Phi}\left(\frac{\sqrt{C}(D+2\eta K)}{2\eta} \right) \in (0,1)
\end{align*}
with $D$ being the diameter of $\mathcal{W}$, $K \defined \sup_{\bw,\bz}\|g(\bw, \bz)\|_2$, and $\bar{\Phi}(\cdot)$ being the Gaussian CCDF.
\end{proposition}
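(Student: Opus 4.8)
The plan is to reduce the federated-averaging setting to the abstract noisy iterative algorithm of \eqref{eq::rec_update_parameter_Extend} and then apply the total variation bound \eqref{eq::TV_bound_ite_alg} (equivalently, the Gaussian-mechanism total variation bound \eqref{eq::gen_bound_TV_Gauss}) from the client's point of view. First I would fix a client $k$ and observe that, from the perspective of that client's data $\rS_k = (\rZ^k_i)_{i \in [n_k]}$, the whole federated procedure is a noisy iterative algorithm: the state is the pair consisting of the server parameter and whatever local parameters are live, and the only places where $\rS_k$ enters are the $M$ local DP-SGD steps performed during the global rounds $t \in \mathcal{T}_k$. All other operations --- the server aggregation $\rW_t = \tfrac1C\sum_{k' \in \mathcal{S}_t}\rW^{k'}_{t,M}$, the broadcasts, and the local updates of other clients --- are data-processing maps that do not touch $\rS_k$, so a Markov chain of the form $\rZ^k_i \to (\text{local state after the step using }\rZ^k_i) \to \cdots \to \rW_T$ holds, exactly as in the proof of Lemma~\ref{lem::Noisy_ite_alg_SDPI}. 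Then I would invoke Lemma~\ref{lem::Bu20} (the $\textnormal{T}$-information bound \eqref{eq::T_inf_bound}) specialized to the client's empirical risk $L_{\rS_k}$ versus population risk $L_{\mu_k}$, giving $|\EE{L_{\mu_k}(\rW_T)-L_{\rS_k}(\rW_T)}| \le \tfrac{A}{n_k}\sum_{i\in[n_k]}\textnormal{T}(\rW_T;\rZ^k_i)$.

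Next I would bound each $\textnormal{T}(\rW_T;\rZ^k_i)$. A data point $\rZ^k_i$ used at local step $j$ of global round $t$: by Lemma~\ref{lem::ub_MI_WT_Zi_general} applied with noise magnitude $m=1$, batch size $b$, and learning rate $\eta$, the $\textnormal{T}$-information of $\rZ^k_i$ at the output $\rW^k_{t,j}$ of that very step is at most $\EE{\mathsf{C}_{\scalebox{.6}{\textnormal TV}}(g(\rW^k_{t,j-1},\rZ^k),g(\rW^k_{t,j-1},\bar\rZ^k);\eta b/\eta)}$, which using the Gaussian row of Table~\ref{table:C_delta_exp} and the centering step from the proof of Proposition~\ref{prop::DP_SGD_Gaussian} (replacing the pair of independent copies by a single copy and its mean) becomes $\EE{\|g(\rW^k_{t,j-1},\rZ^k)-\be\|_2}$ with $\be = \EE{g(\rW^k_{t,j-1},\rZ^k)}$. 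Then I would apply the strong-data-processing contraction of Lemma~\ref{lem::sdpi_diam}/Lemma~\ref{lem::Noisy_ite_alg_SDPI} once for every subsequent Gaussian-noise injection between that step and the final output $\rW_T$, multiplying by a factor $\delta(\cdot,1)$ each time.

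The subtle point, and the one I expect to be the main obstacle, is getting the exponent of $q$ exactly right, i.e.\ counting the number of genuinely contracting noise injections that lie between local step $(t,j)$ and $\rW_T$, and computing the correct diameter argument inside $\delta$ given that the server output is an \emph{average} of $C$ client parameters. For the count: after local step $(t,j)$ there are $M-j$ further local steps in round $t$ for client $k$, then $M$ local steps in each of the later global rounds $t+1,\dots,T$ --- but one has to be careful whether the contraction from those later-round steps applies to $\rZ^k_i$ at all, since $\rZ^k_i$ may not be touched again; it does, because the output of round $t$'s client-$k$ local run still carries information about $\rZ^k_i$ and is post-processed through all later rounds' noisy updates. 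Adding these up gives $(M-j) + M(T-t) = M(T+1-t)-j$ contractions, matching the stated exponent. For the diameter argument: the server aggregation $\rW_t=\tfrac1C\sum_{k'\in\mathcal{S}_t}\rW^{k'}_{t,M}$ is a convex combination of points each confined to a set of diameter $D+2\eta K$ (parameter domain diameter $D$ plus the pre-projection displacement by the Gaussian-perturbed gradient step), but when one tracks the pre-noise variable $\rU$ whose diameter enters $\delta$, the averaging scales the relevant displacement by $1/C$ while the noise $\rN$ injected at each local step is \emph{not} averaged at the point of injection, so effectively the ``signal'' seen by a single noise injection has diameter on the order of $(D+2\eta K)/C$ relative to a unit-variance noise, which after rescaling as in $\delta(A,m)=1-2\bar\Phi(A/2m)$ yields $q = 1 - 2\bar\Phi\!\big(\tfrac{\sqrt{C}(D+2\eta K)}{2\eta}\big)$ --- I would need to verify this scaling carefully by writing the aggregation and the next-round local update together and identifying the correct additive-noise channel, which is exactly the kind of bookkeeping that is easy to get off by a factor. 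Finally I would assemble the pieces: sum over $i\in[n_k]$ grouped by $(t,j)$, note that exactly $b$ of the $n_k$ points are used at each step $(t,j)$ with $t\in\mathcal{T}_k$ so the $b$ cancels against... actually $n_k = b M |\mathcal{T}_k|$ and the per-point bound is identical within a step, giving $\tfrac{A}{n_k}\sum_{t\in\mathcal{T}_k}\sum_{j=1}^M b\cdot\EE{\|g(\rW^k_{t,j-1},\rZ^k)-\be\|_2}\cdot q^{M(T+1-t)-j}$; I would then absorb the $b$ appropriately (or leave the bound in the stated per-step-summed form), matching the claimed expression.
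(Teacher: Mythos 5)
Your overall strategy coincides with the paper's: view the federated procedure from client $k$'s perspective as a noisy iterative algorithm, apply the ${\textnormal T}$-information bound \eqref{eq::T_inf_bound} to $L_{\mu_k}-L_{\rS_k}$, bound the ${\textnormal T}$-information at the local step where $\rZ_i^k$ enters via the analogue of Lemma~\ref{lem::ub_MI_WT_Zi_general}, and then contract once per subsequent noise injection; your count $(M-j)+M(T-t)=M(T+1-t)-j$ is exactly the paper's.

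The genuine gap is in the derivation of $q$, which is the one ingredient specific to this proposition, and your own text flags that you cannot close it. The mechanism you propose --- the server average scales the ``signal'' by $1/C$ while the noise is not averaged, giving an effective diameter of order $(D+2\eta K)/C$ --- does not yield the stated $q$: tracing it through $\delta(A,m)=1-2\bar{\Phi}(A/2m)$ would put $C$ (or $\sqrt{C}$) in the \emph{denominator} of the argument of $\bar{\Phi}$, i.e.\ a strictly stronger contraction than claimed, whereas the true $q$ is \emph{weaker} than the single-client one. The correct reason for the $\sqrt{C}$ is different: since $\rW_t=\tfrac1C\sum_{k'\in\mathcal{S}_t}\rW^{k'}_{t,M}$ depends on all $C$ selected clients, the Markov chain carrying $\rZ_i^k$ forward must track the \emph{joint} state $\{\rU^{k'}_{t,j}\}_{k'\in\mathcal{S}_t}$, and each local step is then a single additive Gaussian channel on $\Reals^{Cd}$ with per-coordinate noise magnitude $\eta$. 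The Euclidean diameter of the product of $C$ sets each of diameter $D+2\eta K$ is $\sqrt{C}(D+2\eta K)$, and $\delta\bigl(\sqrt{C}(D+2\eta K),\eta\bigr)=1-2\bar{\Phi}\bigl(\sqrt{C}(D+2\eta K)/2\eta\bigr)=q$; the averaging itself is deterministic post-processing and contributes nothing to $\delta$. A second, smaller slip: you drop the $1/b$ when passing from $\EE{\mathsf{C}_{\scalebox{.6}{\textnormal TV}}(\cdot,\cdot;b)}$ to $\EE{\|g-\be\|_2}$ (the Gaussian row of Table~\ref{table:C_delta_exp} gives $\|\bx-\bx'\|_2/(2b)$, so the per-point bound is $\tfrac1b\EE{\|g-\be\|_2}$); that $1/b$ is precisely what cancels the $b$ data points used per local step, and without it your assembled bound carries a spurious factor of $b$ that ``absorb the $b$ appropriately'' does not remove.
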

\begin{proof}
See Appendix~\ref{append::gen_bound_FL}.
\end{proof}
\citet{yagli2020information} introduced a generalization bound in the context of federated learning. However, their bound in Theorem~3 involves a mutual information. Here we replace the mutual information with an expectation term. This improvement allows local clients to compute our bound from their training data reliably.

\subsection{Stochastic Gradient Langevin Dynamics (SGLD)}
\label{subsec::SGLD}

We analyze the generalization gap of the stochastic gradient Langevin dynamics (SGLD) algorithm \citep{gelfand1991recursive,welling2011bayesian}. We start by recalling a standard framework of SGLD. The data set $\rS$ is first divided into $m$ disjoint mini-batches:
\begin{align*}
    \rS=\bigcup_{j=1}^m \rS_j,\quad \text{where } |\rS_j|=b \text{ and } \rS_j \cap \rS_{k} = \emptyset \text{ for } j\neq k.
\end{align*}
We initialize the parameter of the empirical risk with a random point $\rW_0 \in \mathcal{W}$ and update using the following rule:
\begin{align}
\label{eq::SGLD_para_up_rule}
    \rW_t = \rW_{t-1} - \eta_t \nabla_{\bw} \hat{\ell}(\rW_{t-1}, \rS_{B_t}) + \sqrt{\frac{2\eta_t}{\beta_t}} \rN_t,
\end{align}
where $\eta_t$ is the learning rate; $\beta_t$ is the inverse temperature; $\rN_t$ is drawn independently from a standard Gaussian distribution; $B_t \in [m]$ is the mini-batch index; $\hat{\ell}$ is a surrogate loss (e.g., hinge loss); and 
\begin{align}
\label{eq::grad_update_minibatch}
    \nabla_{\bw} \hat{\ell}(\rW_{t-1}, \rS_{B_t}) 
    \defined \frac{1}{b} \sum_{\rZ \in \rS_{B_t}} \nabla_{\bw} \hat{\ell}(\rW_{t-1}, \rZ).
\end{align}

We study a general setting where the output from SGLD can be any function of the parameters across all iterations (i.e., $\rW = f(\rW_1,\cdots,\rW_T)$), including the setting considered before where $\rW = \rW_T$. For example, the output can be an average of all iterates (i.e., Polyak averaging) $\rW = \frac{1}{T}\sum_{t} \rW_t$ or the parameter which achieves the smallest value of the loss function $\rW = \argmin_{\rW_t} L_{\mu}(\rW_t)$.

Alas, Theorem~\ref{thm::gen_bound_general_noisy_alg} cannot be applied directly to the SGLD algorithm because the Markov chain in \eqref{eq::Markov_chain} does not hold any more when data are drawn with replacement. In order to circumvent this issue, we develop a different proof technique by using the chain rule for mutual information. 
\begin{proposition}
\label{prop::gen_bound_SGLD}
If the loss function $\ell(\bw, \rZ)$ is $\sigma$-sub-Gaussian under $\rZ\sim \mu$ for all $\bw \in \mathcal{W}$, then
\begin{align*}
    \EE{L_\mu(\rW)-L_{\rS}(\rW)}
    \leq \frac{\sqrt{2b}\sigma}{2n} \sum_{j=1}^m \sqrt{\sum_{t\in \mathcal{T}_j} \beta_t \eta_t \cdot \Var{\nabla_{\bw} \hat{\ell}(\rW_{t-1}, \rS_j)}},
\end{align*}
where the set $\mathcal{T}_j$ contains the indices of iterations in which the mini-batch $\rS_j$ is used.
\end{proposition}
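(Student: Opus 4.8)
The plan is to bound the mutual information $I(\rW; \rZ_i)$ via the generalization bound in \eqref{eq::Bu_gen_bound} of Lemma~\ref{lem::Bu20}, but reorganize the summation over data points into a summation over mini-batches. Since the data are drawn with replacement in SGLD, each mini-batch $\rS_j$ (of size $b$) appears in several iterations indexed by $\mathcal{T}_j$, and the full dataset consists of $n=mb$ points partitioned into $m$ disjoint mini-batches. The key idea is that the $b$ data points inside a given mini-batch $\rS_j$ always move together, so rather than tracking $I(\rW;\rZ_i)$ for each individual $\rZ_i$, we will track a group-level quantity $I(\rW; \rS_j)$, and the main technical step is to relate $\sum_{i\in\rS_j}\sqrt{I(\rW;\rZ_i)}$ to $\sqrt{b\cdot I(\rW;\rS_j)}$ via Cauchy--Schwarz: $\sum_{i\in\rS_j}\sqrt{I(\rW;\rZ_i)} \le \sqrt{b}\sqrt{\sum_{i\in\rS_j} I(\rW;\rZ_i)} \le \sqrt{b}\sqrt{b\cdot I(\rW;\rS_j)}$, where the last inequality uses that $\rZ_1,\dots,\rZ_n$ are independent so $\sum_{i\in\rS_j} I(\rW;\rZ_i) \le b\, I(\rW;\rS_j)$ — actually more carefully one uses that $I(\rW;\rS_j) \ge \sum_{i\in\rS_j} I(\rW;\rZ_i)/b$ would go the wrong way, so instead I would directly bound $\sum_{i=1}^n \sqrt{I(\rW;\rZ_i)}$ by reindexing and using the superadditivity of mutual information over independent coordinates, $I(\rW; \rS_j) \ge \sum_{i\in \rS_j} I(\rW; \rZ_i)$ is false in general, so the cleaner route is to apply \eqref{eq::Bu_gen_bound} after noting each $\rZ_i$ in $\rS_j$ contributes identically by symmetry of roles within the batch and bound $\sqrt{I(\rW;\rZ_i)} \le \sqrt{I(\rW;\rS_j)}$ by the data-processing inequality (since $\rZ_i$ is a function... no). Let me instead use the standard trick: apply Lemma~\ref{lem::Bu20} with ``super-samples'' being the mini-batches, i.e., treat $(\rS_1,\dots,\rS_m)$ as $m$ i.i.d. draws from $\mu^{\otimes b}$ and $\ell$ averaged over the batch; this gives $\bigl|\EE{L_\mu(\rW)-L_\rS(\rW)}\bigr| \le \frac{\sqrt{2}\sigma_b}{m}\sum_{j=1}^m \sqrt{I(\rW;\rS_j)}$ where $\sigma_b = \sigma/\sqrt{b}$ because averaging $b$ independent $\sigma$-sub-Gaussians yields a $\sigma/\sqrt{b}$-sub-Gaussian, so $\frac{\sqrt{2}\sigma_b}{m} = \frac{\sqrt{2}\sigma}{m\sqrt{b}} = \frac{\sqrt{2b}\,\sigma}{bm} = \frac{\sqrt{2b}\,\sigma}{n}$, which matches the prefactor up to the factor $\tfrac12$.

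Next I would bound $I(\rW; \rS_j)$ using the chain rule for mutual information along the iterations. Writing $\rW = f(\rW_1,\dots,\rW_T)$, the data-processing inequality gives $I(\rW; \rS_j) \le I(\rW_1,\dots,\rW_T; \rS_j)$. Then by the chain rule,
\begin{align*}
I(\rW_1,\dots,\rW_T; \rS_j) = \sum_{t=1}^T I(\rW_t; \rS_j \mid \rW_1,\dots,\rW_{t-1}).
\end{align*}
For iterations $t \notin \mathcal{T}_j$, the update in \eqref{eq::SGLD_para_up_rule} uses a mini-batch $\rS_{B_t}$ disjoint from (and independent of) $\rS_j$, so conditioned on $\rW_{t-1}$ the increment $\rW_t - \rW_{t-1}$ is independent of $\rS_j$, forcing $I(\rW_t;\rS_j\mid \rW_{1:t-1}) = 0$. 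For $t \in \mathcal{T}_j$, I would condition further and use that $\rW_t = \rW_{t-1} - \eta_t \nabla_\bw\hat\ell(\rW_{t-1},\rS_j) + \sqrt{2\eta_t/\beta_t}\,\rN$ is a Gaussian channel with input (the deterministic-given-$\rS_j,\rW_{t-1}$) mean $\rW_{t-1}-\eta_t\nabla_\bw\hat\ell(\rW_{t-1},\rS_j)$ and noise covariance $\tfrac{2\eta_t}{\beta_t}\mathbf I_d$. The Gaussian-channel mutual-information bound (equivalently \eqref{eq::HWI_conv_Gaussian} with $m^2 = 2\eta_t/\beta_t$, or the elementary fact that for a Gaussian channel $I \le \tfrac{1}{2m^2}\Var{\text{input}}$) then yields
\begin{align*}
I(\rW_t;\rS_j\mid \rW_{1:t-1}) \le \frac{\beta_t}{4\eta_t}\cdot \eta_t^2\, \Var{\nabla_\bw\hat\ell(\rW_{t-1},\rS_j) \mid \rW_{1:t-1}} = \frac{\beta_t\eta_t}{4}\Var{\nabla_\bw\hat\ell(\rW_{t-1},\rS_j)},
\end{align*}
where the variance is the conditional one and is then further bounded/identified with the stated (unconditional) variance term; summing over $t\in\mathcal{T}_j$ gives $I(\rW;\rS_j) \le \tfrac14\sum_{t\in\mathcal{T}_j}\beta_t\eta_t\Var{\nabla_\bw\hat\ell(\rW_{t-1},\rS_j)}$. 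Taking the square root produces the factor $\tfrac12$, and combined with the prefactor $\tfrac{\sqrt{2b}\,\sigma}{n}$ from the first step this gives exactly the claimed bound $\tfrac{\sqrt{2b}\,\sigma}{2n}\sum_{j=1}^m\sqrt{\sum_{t\in\mathcal{T}_j}\beta_t\eta_t\Var{\nabla_\bw\hat\ell(\rW_{t-1},\rS_j)}}$.

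I expect the main obstacle to be the bookkeeping around conditioning: making precise that the conditional mutual information $I(\rW_t;\rS_j\mid\rW_{1:t-1})$ is controlled by a Gaussian-channel argument when one still needs to average over $\rW_{1:t-1}$ (equivalently over $\rW_{t-1}$), and then arguing that the resulting conditional-variance term is dominated by (or can be replaced with) the marginal variance $\Var{\nabla_\bw\hat\ell(\rW_{t-1},\rS_j)}$ that appears in the statement. A second subtle point is verifying that $I(\rW_t;\rS_j\mid\rW_{1:t-1})=0$ for $t\notin\mathcal{T}_j$ rigorously — this requires the without-replacement-within-each-step structure plus independence of the fresh draws across iterations, i.e.\ that at iteration $t$ the batch $\rS_{B_t}$ and its associated fresh data are drawn independently of everything used for $\rS_j$. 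Once these conditional-independence facts and the Gaussian-channel inequality \eqref{eq::HWI_conv_Gaussian} are in place, the remaining steps (chain rule, data-processing, Cauchy--Schwarz, sub-Gaussianity of batch-averaged loss) are routine. The full argument is deferred to the appendix.
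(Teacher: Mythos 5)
Your proposal follows essentially the same route as the paper: view each mini-batch $\rS_j$ as a single super-sample whose batch-averaged loss is $\sigma/\sqrt{b}$-sub-Gaussian (giving the prefactor $\sqrt{2b}\,\sigma/n$), bound $I(\rW;\rS_j)$ by a chain rule over iterations, kill the terms with $t\notin\mathcal{T}_j$, and control the terms with $t\in\mathcal{T}_j$ by the Gaussian-channel inequality of Lemma~\ref{lem::ext_MI_GC_ub}; your constants all come out right. The paper packages this as a per-sample lemma (Lemma~\ref{lem::SGLD_general}) that is then applied to the super-samples, but the technical content is identical.

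The one genuine gap is the step $I(\rW_t;\rS_j\mid\rW_1,\dots,\rW_{t-1})=0$ for $t\notin\mathcal{T}_j$, which you flag but do not resolve. Conditioning only on the past iterates is not enough: $\rW_1,\dots,\rW_{t-1}$ is a common function of $\rS_j$ and of the other mini-batches, so conditioning on it can make $\rS_{B_t}$ and $\rS_j$ dependent even though they are marginally independent, and the increment $\rW_t-\rW_{t-1}$ then need not be conditionally independent of $\rS_j$. The paper's fix is to first enlarge the mutual information, $I(\rW^{(T)};\rS_j)\le I(\rW^{(T)},\rS_{-j};\rS_j)$ where $\rS_{-j}$ collects all the other mini-batches, and then apply the chain rule conditioning on both the past iterates and $\rS_{-j}$. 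With this extra conditioning, the update at $t\notin\mathcal{T}_j$ is a deterministic function of the conditioning variables and fresh independent noise, so the conditional mutual information vanishes; and at $t\in\mathcal{T}_j$ the Gaussian-channel bound yields a conditional variance whose expectation is dominated by the unconditional variance $\Var{\nabla_{\bw}\hat{\ell}(\rW_{t-1},\rS_j)}$ via the law of total variance, which also settles your second concern. With that adjustment your argument is complete.
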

\begin{proof}
See Appendix~\ref{append::gen_bound_SGLD}.
\end{proof}

Our bound incorporates the gradient variance which measures a particular kind of ``flatness'' of the loss landscape. We note that a recent work \citep{jiang2019fantastic} has observed empirically that the variance of gradients is predictive of and highly correlated with the generalization gap of neural networks. Here we evidence this connection from a theoretical viewpoint by incorporating the gradient variance into the generalization bound.

Unfortunately, our generalization bound does not incorporate a decay factor anymore.\footnote{We note that the analysis in \citet{mou2018generalization} requires $\rW=\rW_T$. Hence, in the setting we consider (i.e., $\rW$ is a function of $\rW_1,\cdots,\rW_T$), it is unclear if it is possible to include a decay factor in the bound.} To understand why it happens, let us imagine an extreme scenario in which the SGLD algorithm outputs all the iterates (i.e., $\rW = (\rW_1,\cdots,\rW_T)$). For a data point $\rZ_i$ used at the $t$-th iteration, the data processing inequality implies that
\begin{align*}
    I(\rW_1,\cdots,\rW_T;\rZ_i) \geq I(\rW_t;\rZ_i).
\end{align*}
Hence, it is impossible to have $I(\rW_1,\cdots,\rW_T;\rZ_i)\to 0$ as $T\to \infty$ unless $I(\rW_t;\rZ_i) = 0$.

Many existing SGLD generalization bounds \citep[e.g.,][]{mou2018generalization,li2019generalization,pensia2018generalization,negrea2019information} are expressed as a sum of errors associated with each training iteration. In order to compare with these results, we present an analogous bound in the following corollary. This bound is obtained by combining a key lemma for proving Proposition~\ref{prop::gen_bound_SGLD} with Minkowski inequality and Jensen's inequality so it is often much weaker than Proposition~\ref{prop::gen_bound_SGLD}.
\begin{corollary}
\label{cor::SGLD_gen_bound_trajectory}
If the loss function $\ell(\bw, \rZ)$ is $\sigma$-sub-Gaussian under $\rZ\sim \mu$ for all $\bw \in \mathcal{W}$, the expected generalization gap of the SGLD algorithm can be upper bounded by 
\begin{align*}
    \frac{\sqrt{2}\sigma}{2} \min\left\{\frac{1}{n} \sum_{t=1}^T \sqrt{\beta_t \eta_t \cdot \Var{\nabla_{\bw} \hat{\ell}(\rW_{t-1}, \rZ^{\dagger}_{t})}},
    \sqrt{\frac{1}{b n}\sum_{t=1}^T\beta_t \eta_t \cdot \Var{\nabla_{\bw} \hat{\ell}(\rW_{t-1}, \rZ^{\dagger}_{t})}}\right\},
\end{align*}
where $\rZ^{\dagger}_t$ is any data point used in the $t$-th iteration.
\end{corollary}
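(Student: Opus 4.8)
The plan is to recycle the key per-sample estimate that drives the proof of Proposition~\ref{prop::gen_bound_SGLD}, and then weaken it in two different ways. That estimate has the form: for a training point $\rZ_i$ lying in the mini-batch $\rS_j$,
\[
I(\rW;\rZ_i) \;\le\; \frac{1}{4b^2}\sum_{t\in\mathcal{T}_j}\beta_t\eta_t\,\Var{\nabla_{\bw}\hat{\ell}(\rW_{t-1},\rZ^{\dagger}_{t})},
\]
where $\rZ^{\dagger}_{t}$ is any point used at iteration $t$ (the right-hand side is the same for all of them by exchangeability of the $b$ points inside a mini-batch); the factor $1/b^2$ is precisely what separates this per-sample quantity from the per-mini-batch variance $\Var{\nabla_{\bw}\hat{\ell}(\rW_{t-1},\rS_j)}=\tfrac1b\Var{\nabla_{\bw}\hat{\ell}(\rW_{t-1},\rZ^{\dagger}_{t})}$ appearing in Proposition~\ref{prop::gen_bound_SGLD}. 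Plugging this into the KL-information bound of Lemma~\ref{lem::Bu20} leaves us to control $\tfrac{\sqrt{2}\sigma}{n}\sum_{i=1}^{n}\sqrt{I(\rW;\rZ_i)}$, and I would bound this in two ways, one for each branch of the minimum.

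For the first branch I would apply the subadditivity of the square root (a Minkowski-type inequality), $\sqrt{\sum_{t}a_t}\le\sum_{t}\sqrt{a_t}$, to each summand individually: this replaces $\sqrt{I(\rW;\rZ_i)}$ by $\tfrac{1}{2b}\sum_{t\in\mathcal{T}_j}\sqrt{\beta_t\eta_t\Var{\nabla_{\bw}\hat{\ell}(\rW_{t-1},\rZ^{\dagger}_{t})}}$. Summing over the $n$ samples, each mini-batch contributes its $b$ identical terms, so the prefactor $1/b$ cancels; and because $\{\mathcal{T}_j\}_{j=1}^{m}$ partitions $[T]$, the double sum $\sum_{j}\sum_{t\in\mathcal{T}_j}$ collapses to $\sum_{t=1}^{T}$. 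This yields the first argument of the minimum, $\tfrac{\sqrt{2}\sigma}{2}\cdot\tfrac1n\sum_{t=1}^{T}\sqrt{\beta_t\eta_t\Var{\nabla_{\bw}\hat{\ell}(\rW_{t-1},\rZ^{\dagger}_{t})}}$.

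For the second branch I would instead apply Jensen's inequality for the concave map $x\mapsto\sqrt{x}$ (equivalently Cauchy--Schwarz) across \emph{all} $n$ data points at once, $\tfrac1n\sum_{i=1}^{n}\sqrt{x_i}\le\sqrt{\tfrac1n\sum_{i=1}^{n}x_i}$, with $x_i$ the per-sample bound above. Regrouping by mini-batch again produces a factor $b$, so $\tfrac1n\sum_{i}x_i=\tfrac{1}{4bn}\sum_{t=1}^{T}\beta_t\eta_t\Var{\nabla_{\bw}\hat{\ell}(\rW_{t-1},\rZ^{\dagger}_{t})}$; taking the square root and collecting $\sigma$ and the constants gives the second argument, $\tfrac{\sqrt{2}\sigma}{2}\sqrt{\tfrac1{bn}\sum_{t=1}^{T}\beta_t\eta_t\Var{\nabla_{\bw}\hat{\ell}(\rW_{t-1},\rZ^{\dagger}_{t})}}$. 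Both are legitimate upper bounds on $\EE{L_\mu(\rW)-L_{\rS}(\rW)}$, so their minimum gives the claim.

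I do not expect a genuine obstacle here: all the analytic content is in Proposition~\ref{prop::gen_bound_SGLD} (the chain-rule-for-mutual-information argument showing that mini-batches untouched at step $t$ contribute nothing to the step-$t$ increment, together with the Gaussian-channel estimate producing the $\beta_t\eta_t\Var{\cdot}$ term), which is exactly why this corollary is ``much weaker'' --- we are merely pulling the iteration sum out of, or the sample sum into, the square root before aggregating. The only place that needs care is bookkeeping: tracking the powers of $b$ when passing between per-sample and per-mini-batch gradient variances and when collapsing the double sum over mini-batches and their active iterations, and making sure the variances are read in the (conditional) sense that makes $\Var{\nabla_{\bw}\hat{\ell}(\rW_{t-1},\rS_j)}=\tfrac1b\Var{\nabla_{\bw}\hat{\ell}(\rW_{t-1},\rZ^{\dagger}_{t})}$ an equality rather than merely an inequality.
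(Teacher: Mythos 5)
Your proposal is correct and follows essentially the same route as the paper: both start from the per-sample bound $I(\rW;\rZ_i)\leq \tfrac{1}{4b^2}\sum_{t\in\mathcal{T}_j}\beta_t\eta_t\Var{\nabla_{\bw}\hat{\ell}(\rW_{t-1},\rZ^{\dagger}_{t})}$ underlying Lemma~\ref{lem::SGLD_general}, then obtain the first branch via the subadditivity of the square root (Minkowski) and the second via Jensen's inequality, with only the bookkeeping differing (you sum over the $n$ samples, the paper over the $m$ mini-batches, which is the same computation since $n=mb$). The side identity relating per-sample and per-mini-batch variances that you flag as needing care is not actually used in the corollary's derivation, so it poses no obstacle.
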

\begin{proof}
See Appendix~\ref{append::SGLD_gen_bound_trajectory}.
\end{proof}

Our bound is distribution-dependent through the variance of gradients in contrast with Corollary~1 of \citet{pensia2018generalization}, Proposition~3 of \citet{bu2020tightening}, and Theorem~1 of \citet{mou2018generalization}, which rely on the Lipschitz constant: $\sup_{\bw,\bz} \|\nabla_{\bw} \hat{\ell}(\bw,\bz)\|_2$. These bounds fail to explain some generalization phenomena of DNNs, such as label corruption \citep{zhang2016understanding}, because the Lipschitz constant takes a supremum over all possible weight matrices $\bw$ and data points $\bz$. In other words, this Lipschitz constant only relies on the architecture of the network instead of the weight matrices or data distribution. Hence, it is the same for a network trained from corrupted data and a network trained from true data. We remark that the Lipschitz constant used by \citet{pensia2018generalization,bu2020tightening,mou2018generalization} is different from the Lipschitz constant of the function corresponding to a network w.r.t. the input variable. The latter one has been used in the literature \citep[see e.g.,][]{bartlett2017spectrally} for deriving generalization bounds and, to some degree, can capture generalization phenomena, such as label corruption.

The order of our generalization bound in Corollary~\ref{cor::SGLD_gen_bound_trajectory} is $\min\Big(\frac{1}{n} \sum_{t=1}^T \sqrt{\beta \eta_t}, \sqrt{\frac{\beta}{bn} \sum_{t=1}^T \eta_t} \Big)$. It is tighter than Theorem~2 of \citet{mou2018generalization} whose order is $\sqrt{\frac{\beta}{n} \sum_{t=1}^T \eta_t}$. Our bound is applicable regardless of the choice of learning rate while the bound in \citet{li2019generalization} requires the scale of the learning rate to be upper bounded by the reciprocal of the Lipschitz constant. Our Corollary~\ref{cor::SGLD_gen_bound_trajectory} has the same order with \citet{negrea2019information} but we incorporate an additional decay factor when applying our bounds to the DP-SGD algorithm (see Proposition~\ref{prop::DP_SGD_Gaussian}) and numerical experiments suggest that our bound is more favourably correlated with the true generalization gap (see Table~\ref{table:comparison}).

\section{Numerical Experiments}
\label{sec::experiments}

In this section, we demonstrate our generalization bound (Proposition~\ref{prop::gen_bound_SGLD}) through numerical experiments on the MNIST data set \citep{lecun1998gradient}, CIFAR-10 data set \citep{krizhevsky2009learning}, and SVHN data set \citep{netzer2011reading}, showing that it can predict the behavior of the true generalization gap.

\subsection{Corrupted Labels}

\begin{figure*}[h]
\centering
\includegraphics[width=0.32\linewidth]{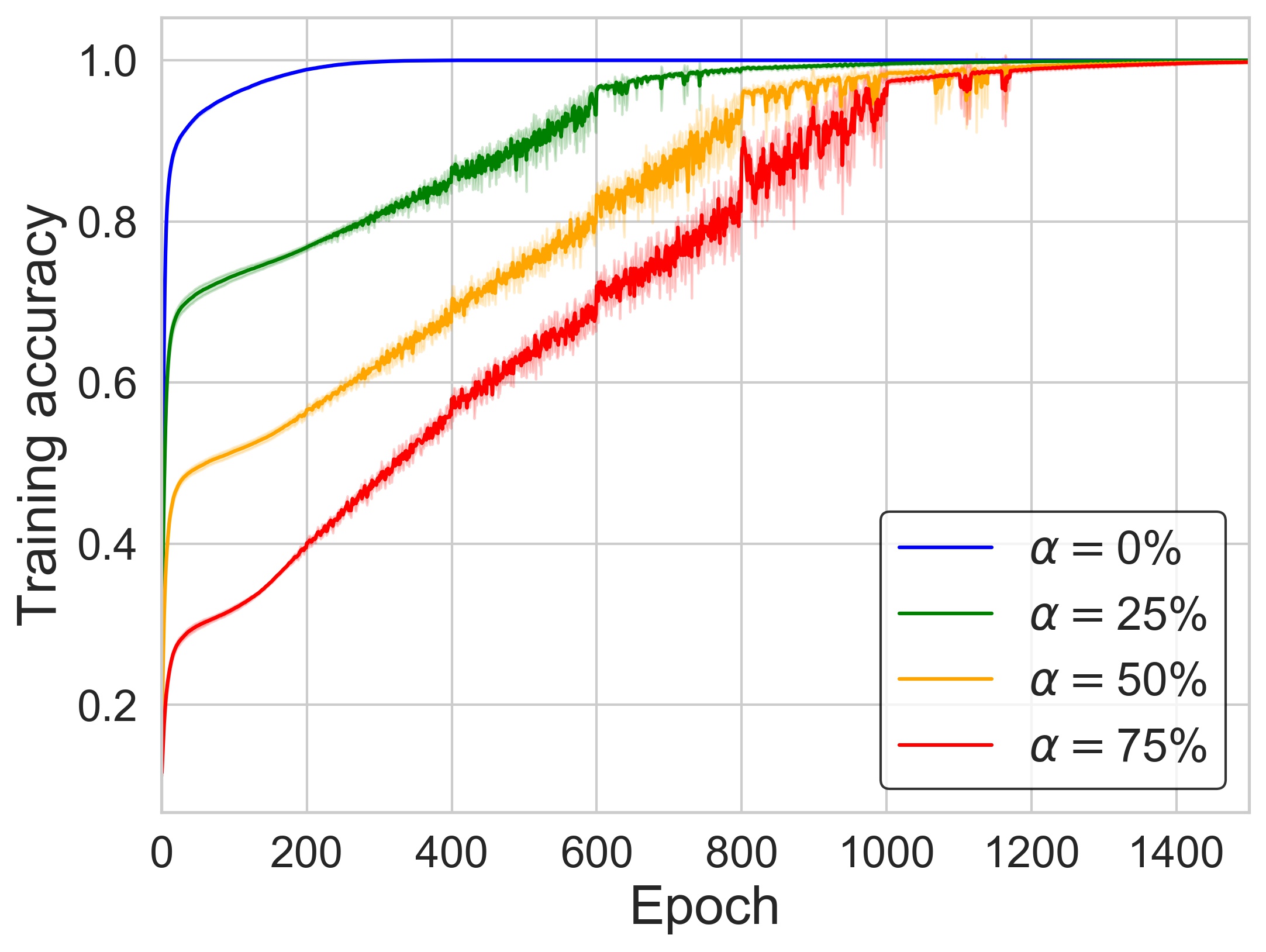}
\includegraphics[width=0.32\linewidth]{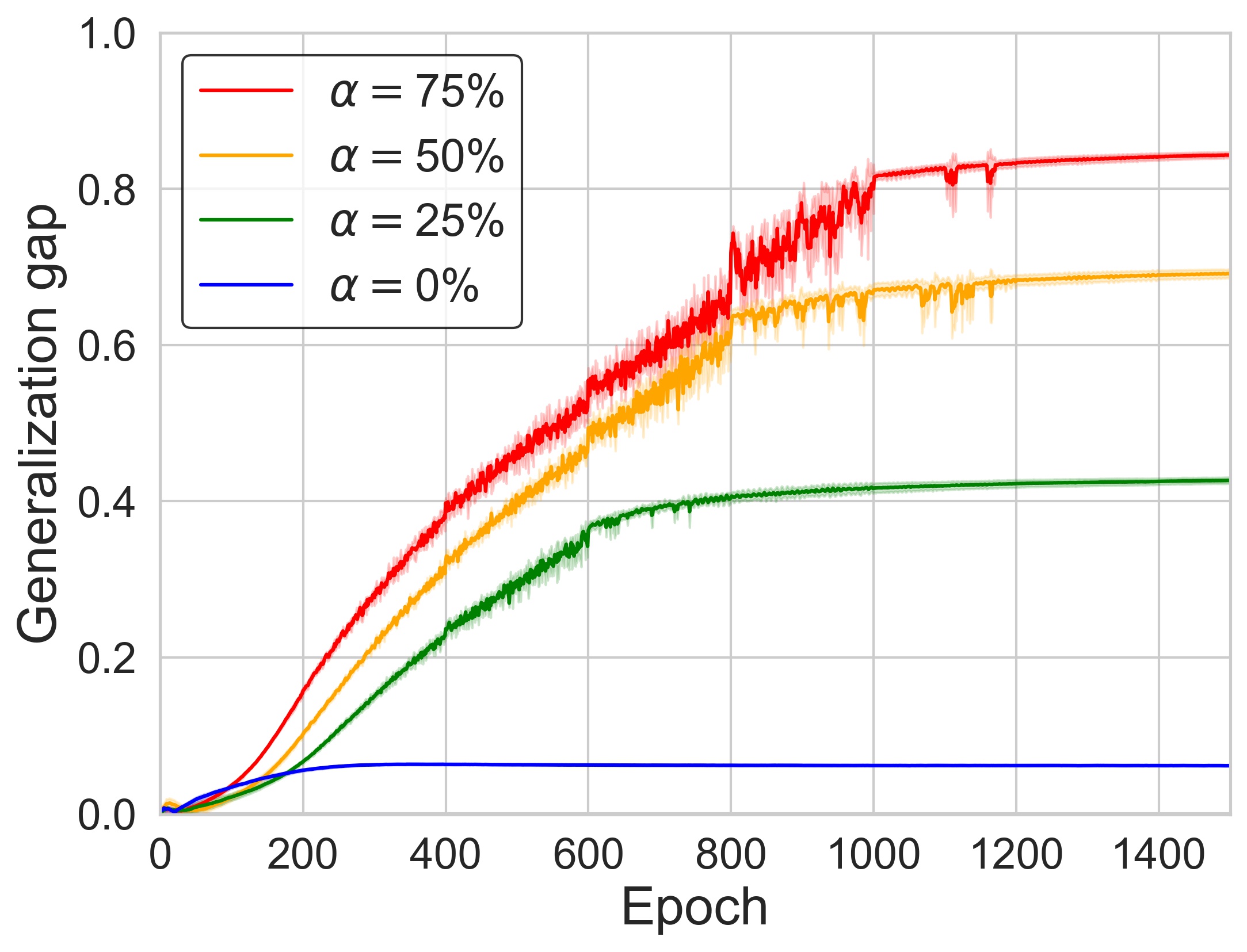}
\includegraphics[width=0.32\linewidth]{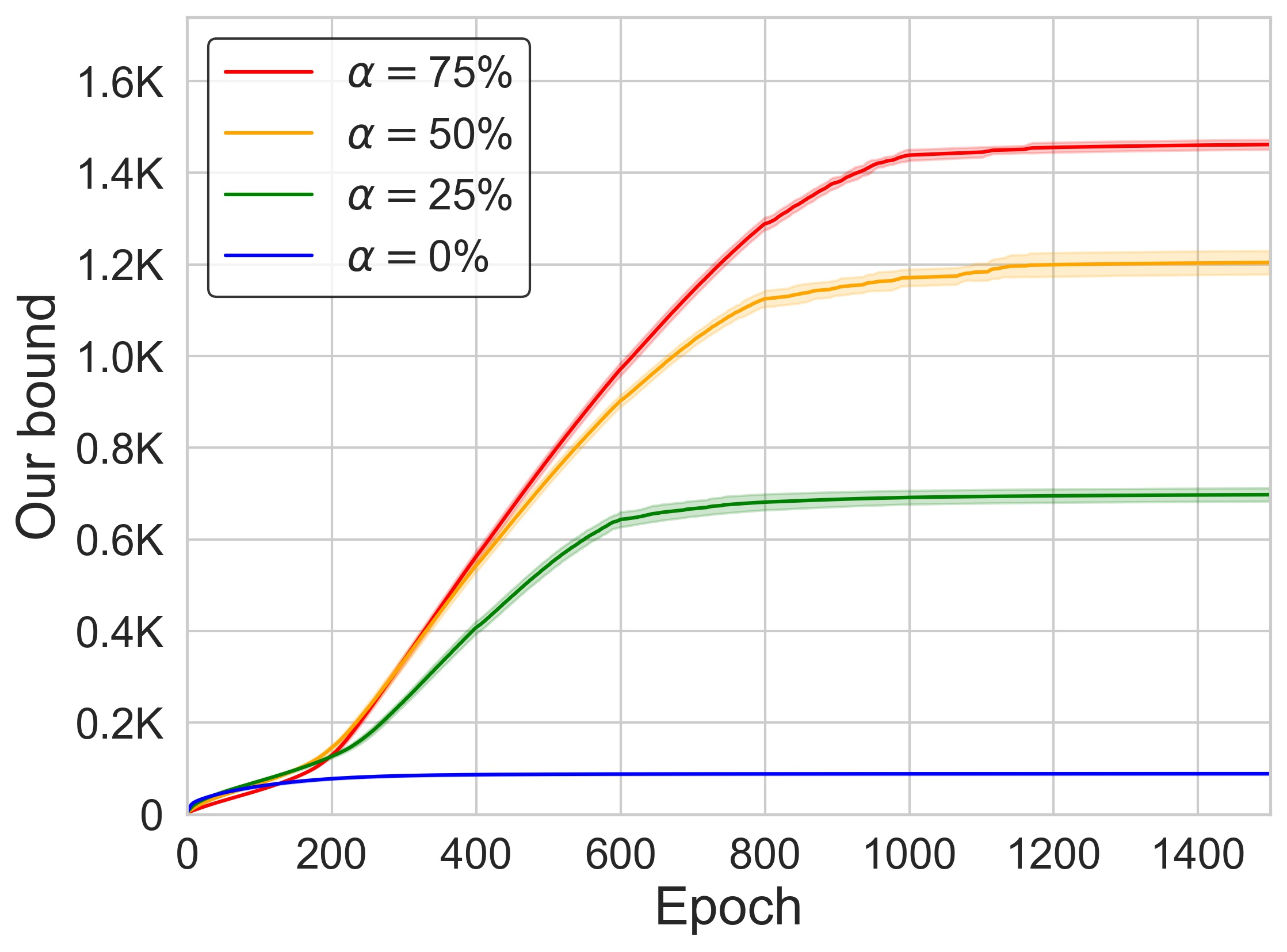}
\includegraphics[width=0.32\linewidth]{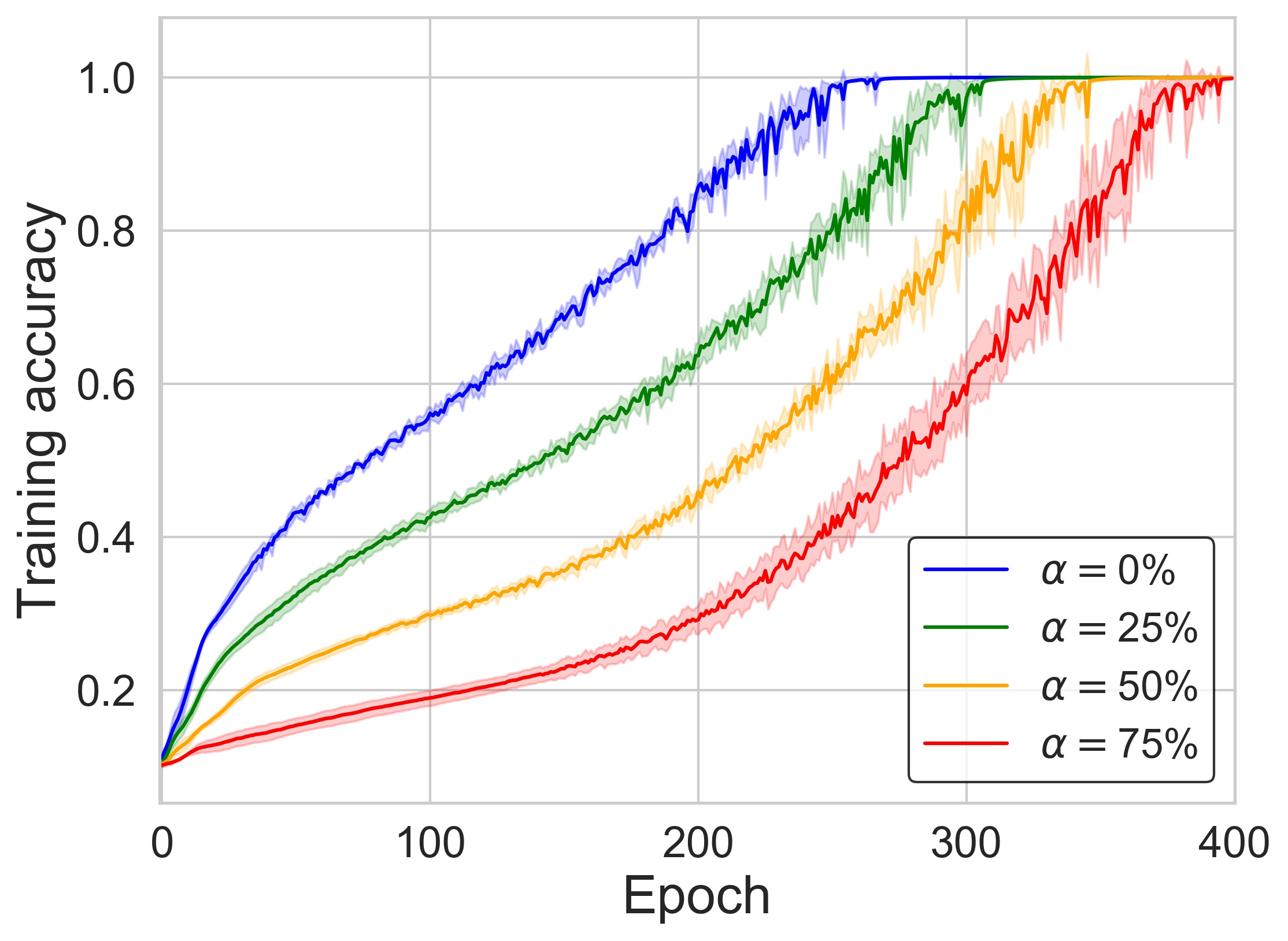}
\includegraphics[width=0.32\linewidth]{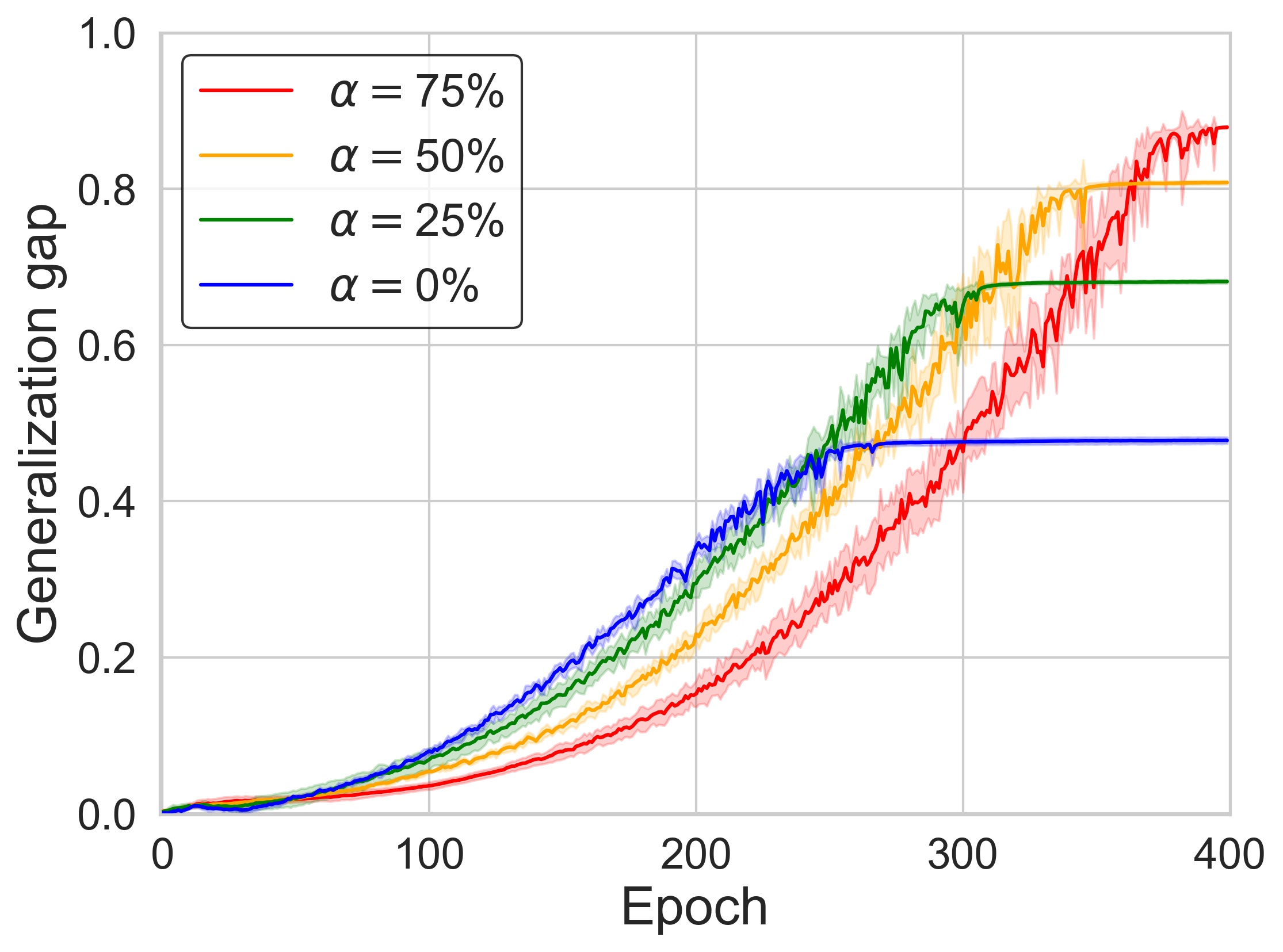}
\includegraphics[width=0.32\linewidth]{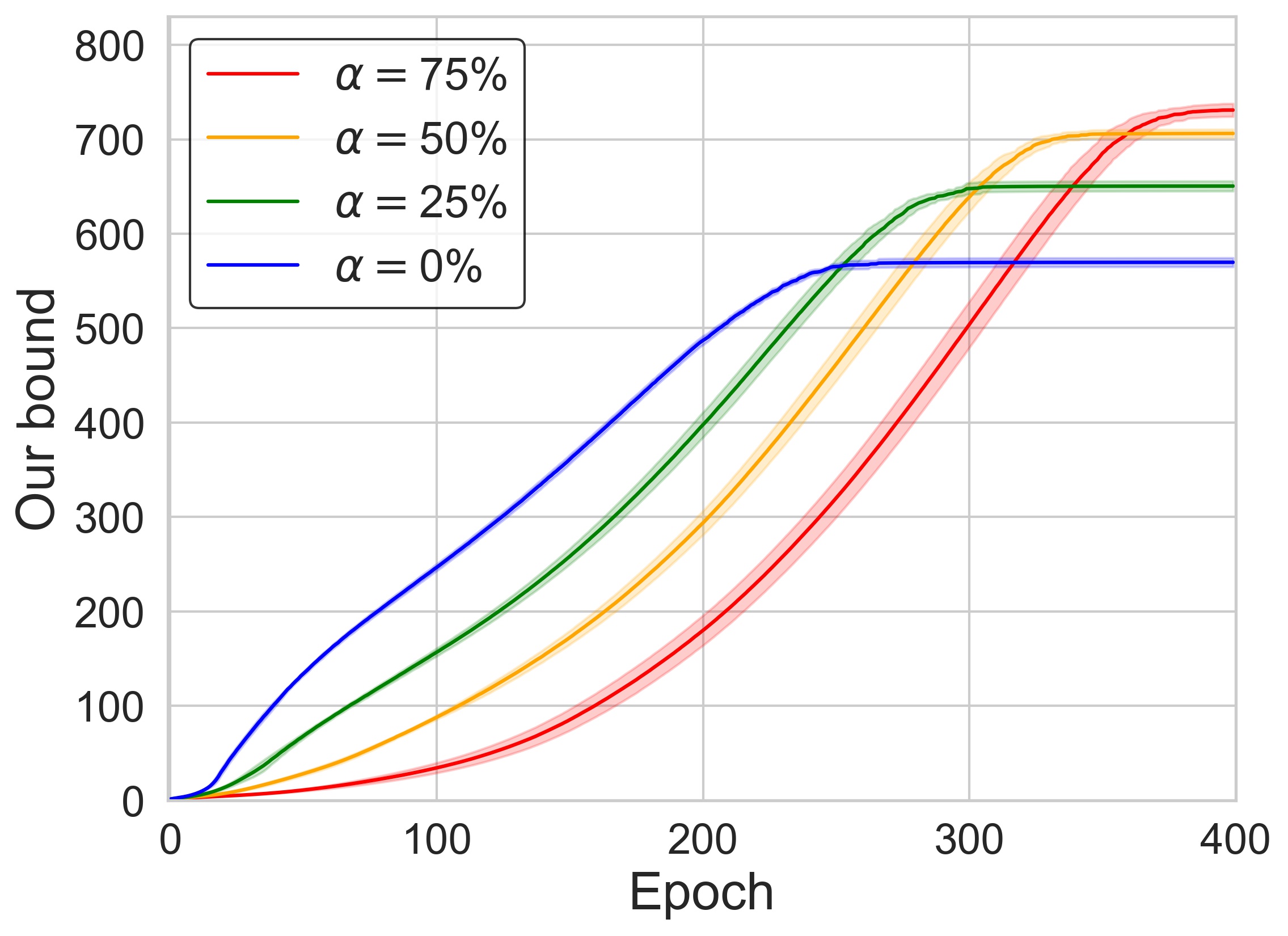}
\includegraphics[width=0.32\linewidth]{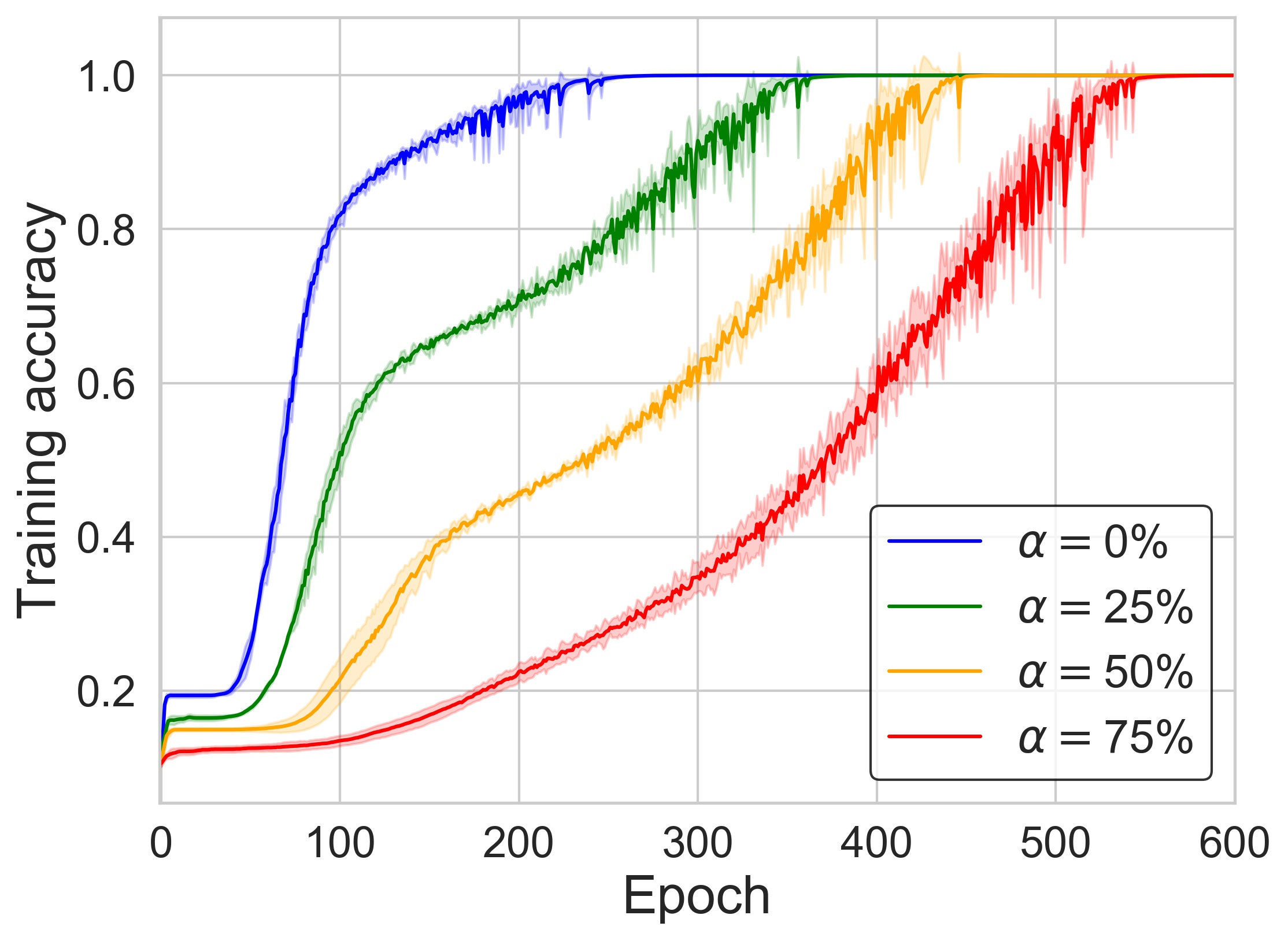}
\includegraphics[width=0.32\linewidth]{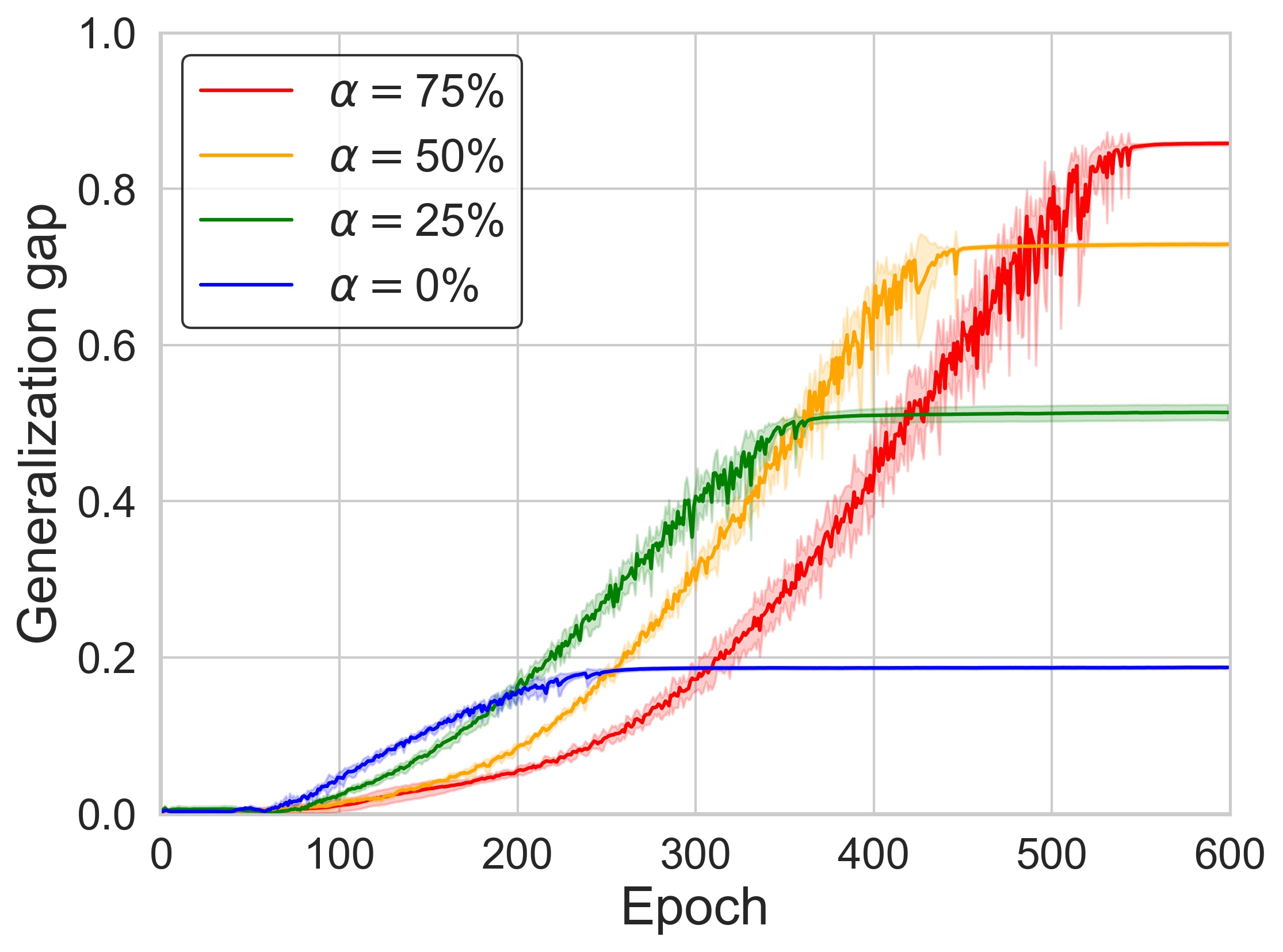}
\includegraphics[width=0.32\linewidth]{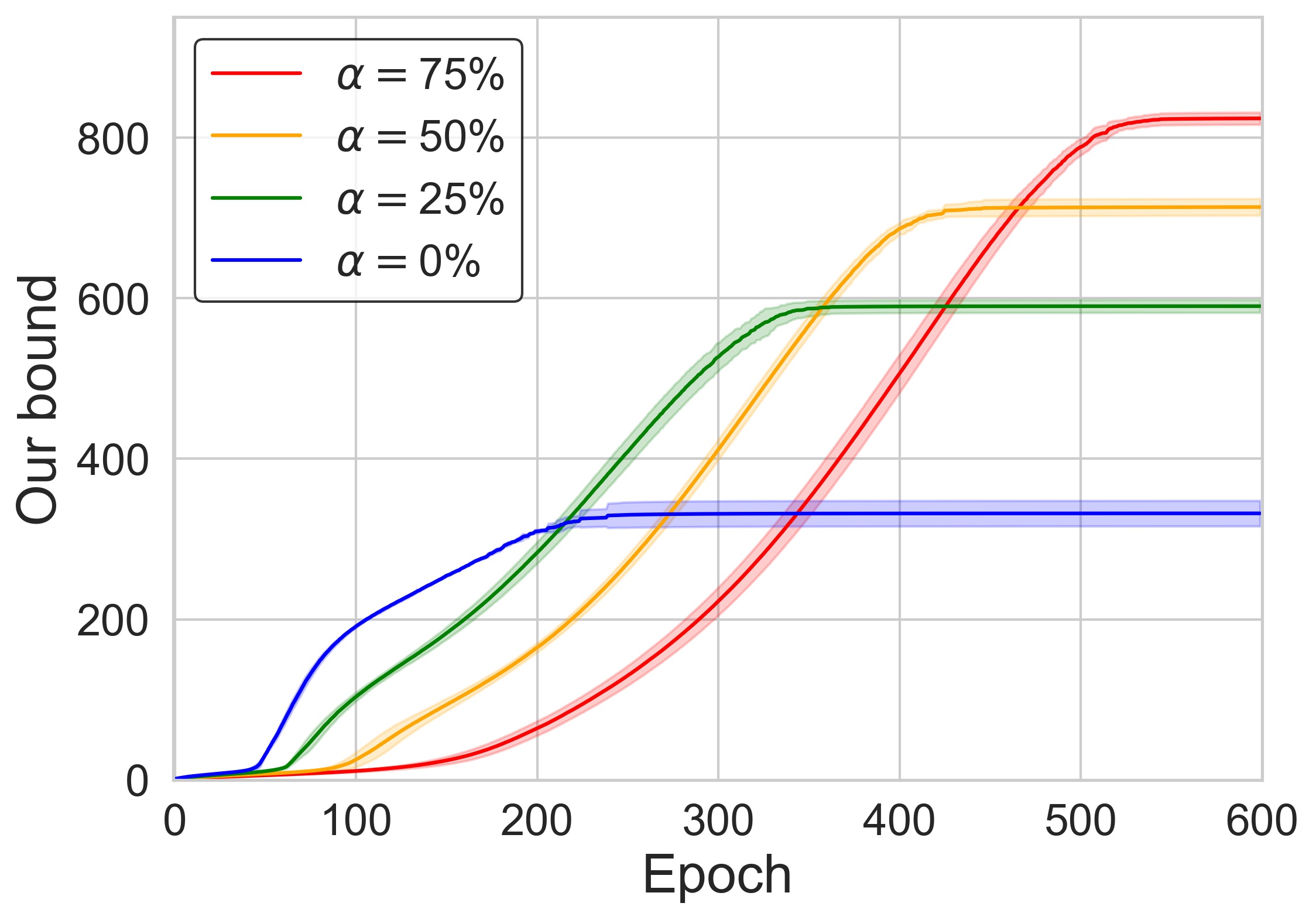}
\caption{\small{Illustration of our generalization bound in Proposition~\ref{prop::gen_bound_SGLD}. We use the SGLD algorithm to train 3-layer neural networks on MNIST (top row) and convolutional neural networks on CIFAR-10 (middle row) and SVHN (bottom row) when the training data have different label corruption level $\alpha\in\{0\%,25\%,50\%,75\%\}$. Left column: training accuracy. Middle column: empirical generalization gap. Right column: empirical generalization bound.}
}
\label{Fig::label_corruption}
\end{figure*}

As observed in \citet{zhang2016understanding}, DNNs have the potential to memorize the entire training data set even when a large portion of the labels are corrupted. For networks with identical architecture, those trained using true labels have better generalization capability than those ones trained using corrupted labels, although both of them achieve perfect training accuracy. Unfortunately, distribution-independent bounds, such as the ones using VC-dimension, may not be able to capture this phenomenon because they are invariant for both true data and corrupted data. In contrast, our bound quantifies this empirical observation, exhibiting a lower value on networks trained on true labels compared to ones trained on corrupted labels (Figure~\ref{Fig::label_corruption}).

In our experiment, we randomly select 5000 samples as our training data set and change the label of $\alpha\in\{0\%,25\%,50\%,75\%\}$ of the training samples. Then we use the SGLD algorithm to train a neural network under different corruption level. 
The training process continues until the training accuracy is $1.0$ (see Figure~\ref{Fig::label_corruption} Left). 
We compare our generalization bound with the generalization gap in Figure~\ref{Fig::label_corruption} Middle and Right. 
As shown, both our bound and the generalization gap are increasing w.r.t. the corruption level in the last epoch. 
Furthermore, the curve of our bound has very similar shape with the generalization gap. 
Finally, we observe that the generalization gap tends to be stable since the algorithm converges (Figure~\ref{Fig::label_corruption} Middle). Our generalization bound captures this phenomenon  (Figure~\ref{Fig::label_corruption} Right) as the variance of gradients becomes negligible when the algorithm starts converging. The intuition is that the variance of gradients reflects the flatness of the loss landscape and as the algorithm converges, the loss landscape becomes flatter.

\begin{figure}[h]
\centering
\includegraphics[width=0.32\linewidth]{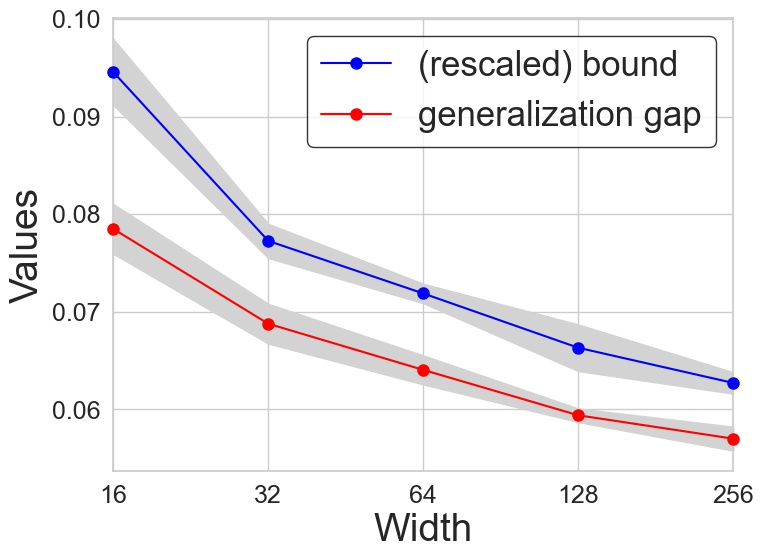}
\includegraphics[width=0.32\linewidth]{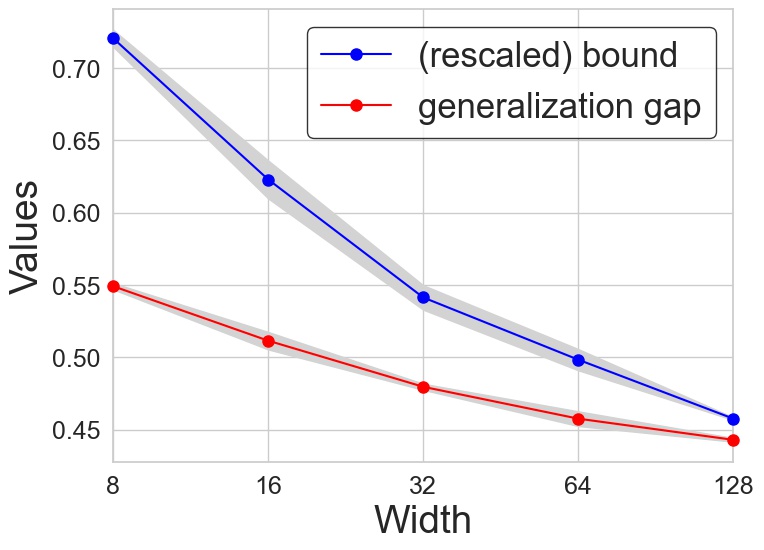}
\includegraphics[width=0.32\linewidth]{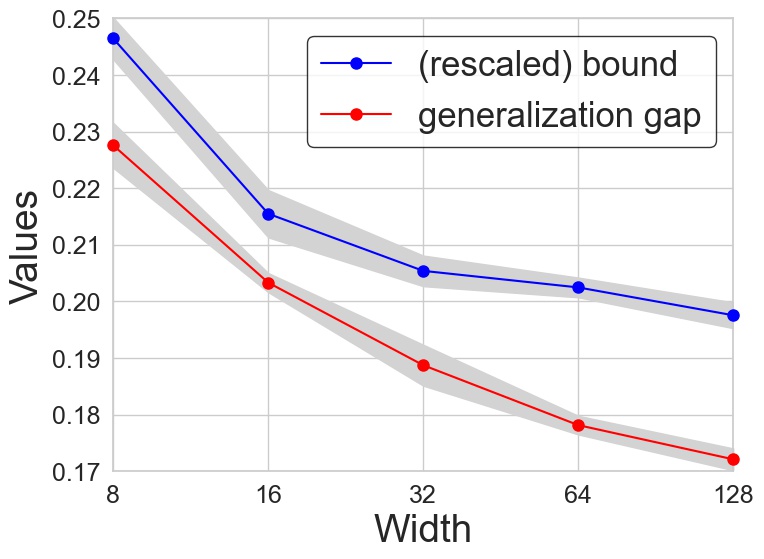}
\caption{\small{The generalization gap of neural networks with varying widths trained on MNIST (left), CIFAR-10 (middle), and SVHN (right) using the SGLD algorithm. To compare with, we plot the generalization bound (Proposition~\ref{prop::gen_bound_SGLD}) rescaled by multiplying $7\text{e-}4$ for MNIST, $9\text{e-}4$ for CIFAR-10, and $7\text{e-}4$ for SVHN.
}
}
\label{Fig::Hidden_Units}
\end{figure}

\subsection{Network Width} 
As observed by several recent studies \citep[see e.g.,][]{neyshabur2014search,jiang2019fantastic}, wider networks can lead to a smaller generalization gap. This may seem contradictory to the traditional wisdom as one may expect that a class of wider networks has a higher VC-dimension and, hence, would have a higher generalize gap. In our experiment, we use the SGLD algorithm to train neural networks with different widths. The training process runs for $400$ epochs until the training accuracy is $1.0$. We compare our generalization bound with the generalization gap in Figure~\ref{Fig::Hidden_Units}. As shown, both the generalization gap and our bound are decreasing with respect to the network width.

\begin{table}
\small\centering
\resizebox{0.85\textwidth}{!}{
\renewcommand{\arraystretch}{1.5}
\begin{tabular}{llcccccc}
\toprule
\texttt{data set} & \texttt{method} & \texttt{lr} & \texttt{width} & \texttt{depth} & \texttt{$\tau$} & \texttt{$\Psi$} & \texttt{MI} \\
\toprule
\multirow{2}{*}{\textsc{MNIST}} & \textsc{Ours (Proposition~\ref{prop::gen_bound_SGLD})} & \textbf{0.70} & \textbf{1.00} & \textbf{0.56} & \textbf{0.50} & \textbf{0.75} & \textbf{0.34}\\
\cmidrule{2-8}
& \textsc{\citet{negrea2019information}} & 0.26 & 0.26 & 0.48 & 0.25 & 0.33 & 0.12\\
\midrule
\multirow{2}{*}{\textsc{CIFAR-10}}&\textsc{Ours (Proposition~\ref{prop::gen_bound_SGLD})} & \textbf{0.41} & \textbf{0.93} & \textbf{1.00} & \textbf{0.45} & \textbf{0.78} & \textbf{0.25}\\
\cmidrule{2-8}
&\textsc{\citet{negrea2019information}} & 0.33 & 0.41 &0.85 & 0.38 & 0.53 & 0.16\\
\bottomrule 
\end{tabular}
}
\caption{\small{We adopt the three evaluation criteria proposed in \citet{jiang2019fantastic} for comparing our generalization bound with the benchmark method \citep[Theorem~3.1 of][]{negrea2019information}:  (i) Kendall's rank-correlation coefficient ($\tau$), (ii) Granulated Kendall's coefficient ($\Psi$), and (iii) conditional independent test (MI). 
All scores, except MI, are within $[-1,1]$ and the score of MI is normalized to $[0,1]$. 
We also report the correlations when a single hyper-parameter (e.g., learning rate (lr)) is varying.
}
}
\label{table:comparison}
\end{table}

\subsection{Comparison with Benchmarks} 
To evaluate our bound, we adopt the three criteria proposed in \citet{jiang2019fantastic}: (i) Kendall's rank-correlation coefficient ($\tau$) \citep{kendall1938new}, (ii) Granulated Kendall's coefficient ($\Psi$), and (iii) conditional independent test via mutual information (MI) \citep{verma1991equivalence}. 
In our experiments, we select $3$ commonly used hyper-parameters (i.e., learning rate (lr), width, depth), which are believed to influence the generalization gap, and let each hyper-parameter choose three different values. 
We train $27$ neural networks under all combinations of hyper-parameters and assess the correlations between the generalization bound and the generalization gap.

We compare our generalization bound with the gradient-prediction-residual bound in Theorem~3.1 of \citet{negrea2019information} under the above three evaluation criteria. As shown in Table~\ref{table:comparison}, our generalization bound is highly correlated with the true generalization gap and outperforms the benchmark under all the criteria suggested in \citet{jiang2019fantastic}.

\section{Conclusion}

In this paper, we investigate the generalization of noisy iterative algorithms and derive three generalization bounds based on different $f$-divergence. 
We establish a unified framework and leverage fundamental tools from information theory (e.g., strong data processing inequalities and properties of additive noise channels) for proving these bounds. 
We demonstrate our generalization bounds through applications, including DP-SGD, FL, and SGLD, in which our bounds own a simple form and can be estimated from data. Numerical experiments suggest that our bounds can predict the behavior of the true generalization gap.

\section*{Acknowledgments}
The authors would like to thank the anonymous reviewers and the action editor for their careful reading of our paper and their valuable suggestions. 
The work of H. Wang and F. P. Calmon is supported in part by the National Science Foundation under grants CAREER 1845852, IIS 1926925, and FAI 2040880 and F. P. Calmon also acknowledges a Google Faculty Research Award and an Amazon Research Award.

\clearpage
\appendix
\section{Proofs for Section~\ref{sec::Prelim}}

\subsection{Proof of Lemma~\ref{lem::Bu20}}
\label{append::proof_Bu20}

Inequalities~\eqref{eq::T_inf_bound} and \eqref{eq::chi_inf_bound} can be proved by combining the proof technique introduced in \citet{bu2020tightening} and variational representations of $f$-divergence \citep{nguyen2010estimating}. Note that these two bounds can also be obtained from Corollary~1 in \citet{rodriguez2021tighter} and applying Jensen's inequality to Eq.~(15) in the same paper, respectively. We provide the proof here for the sake of completeness. 
\begin{proof}
Recall that \citep[see Example~6.1~and~6.4 in][]{wu2017lecture} for any two probability distributions $P$ and $Q$ over a set $\mathcal{X}\subseteq \Reals^d$ and a constant $A>0$,
\begin{align}
    \TV(P\|Q)
    &= \sup_{\substack{h:\mathcal{X}\to \Reals \\ 0\leq \|h\|_{\infty} \leq A}} \left|\frac{\EEE{P}{h(\rX)} - \EEE{Q}{h(\rX)}}{A} \right|, \label{eq::TV_vr}\\
    \chisquare(P\|Q)
    &= \sup_{h:\mathcal{X}\to \Reals} \frac{(\EEE{P}{h(\rX)} - \EEE{Q}{h(\rX)})^2}{\Varr{Q}{h(\rX)}}. \label{eq::chi_vr}
\end{align}
On the other hand, the expected generalization gap can be written as
\begin{align*}
    \EE{L_\mu(\rW)-L_{\rS}(\rW)}
    = \frac{1}{n} \sum_{i=1}^n \left(\EE{\ell(\rW,\bar{\rZ}_i)} - \EE{\ell(\rW,\rZ_i)} \right),
\end{align*}
where $(\rW,\bar{\rZ}_i) \sim P_{\rW}\otimes \mu$. Consequently,
\begin{align*}
    \left|\EE{L_\mu(\rW)-L_{\rS}(\rW)}\right|
    \leq \frac{1}{n} \sum_{i=1}^n \left|\EE{\ell(\rW,\bar{\rZ}_i)} - \EE{\ell(\rW,\rZ_i)} \right|.
\end{align*}
If the loss function is upper bounded by $A>0$, taking $P=P_{\rW,\rZ_i}$ and $Q=P_{\rW} \otimes \mu$ in \eqref{eq::TV_vr} yields 
\begin{align*}
    \left|\EE{L_\mu(\rW)-L_{\rS}(\rW)}\right|
    \leq \frac{A}{n} \sum_{i=1}^n \TV(P_{\rW,\rZ_i}\| P_{\rW} \otimes \mu)
    = \frac{A}{n} \sum_{i=1}^n {\textnormal T}(\rW;\rZ_i).
\end{align*}
Similarly, taking $P=P_{\rW,\rZ_i}$ and $Q=P_{\rW} \otimes \mu$ in \eqref{eq::chi_vr} leads to
\begin{align*}
    \left|\EE{L_\mu(\rW)-L_{\rS}(\rW)}\right|
    \leq \frac{1}{n} \sum_{i=1}^n \sqrt{\Var{\ell(\rW;\rZ)} \cdot \chi^2(\rW;\rZ_i)}
\end{align*}
where $(\rW,\rZ_i) \sim P_{\rW,\rZ_i}$ and $(\rW,\rZ)\sim P_{\rW}\otimes \mu$.
\end{proof}

\section{Proofs for Section~\ref{sec::prop_add_channel}}

\subsection{Proof of Lemma~\ref{lem::gen_HWI}}
\label{append::gen_HWI}

Lemma~\ref{lem::gen_HWI} follows from a slight tweak of the proof of Theorem~4 in \citet{polyanskiy2016dissipation}.
\begin{proof}
First, we choose a coupling $P_{\rX,\rX'}$, which has marginals $P_{\rX}$ and $P_{\rX'}$. The probability distribution of $\rX+m\rN$ can be written as the convolution of $P_{\rX}$ and $P_{m\rN}$. Specifically, for any measurable set $\mathcal{A}\subseteq \mathcal{X}$,
\begin{align*}
    P_{\rX+m\rN}(\mathcal{A})
    = \int_{\by\in\mathcal{A}} \int_{\bx \in \mathcal{X}} \dif P_{m\rN}(\by - \bx) \dif P_{\rX}(\bx)
    = \int_{\by\in\mathcal{A}} \int_{\bx \in \mathcal{X}} \int_{\bx' \in \mathcal{X}} \dif P_{\bx+m\rN}(\by) \dif P_{\rX,\rX'}(\bx,\bx').
\end{align*}
Similarly, we have
\begin{align*}
    P_{\rX'+m\rN}(\mathcal{A})
    &= \int_{\by\in\mathcal{A}} \int_{\bx \in \mathcal{X}} \int_{\bx' \in \mathcal{X}} \dif P_{\bx'+m\rN}(\by) \dif P_{\rX,\rX'}(\bx,\bx').
\end{align*}
Since the mapping $(P,Q) \to \textnormal{D}_{f}(P\|Q)$ is convex \citep[see Theorem~6.1 in][for a proof]{polyanskiy2014lecture}, Jensen's inequality yields
\begin{align}
    \textnormal{D}_{f}\left(P_{\rX+m\rN} \| P_{\rX'+m\rN} \right)
    &\leq \int_{\bx \in \mathcal{X}} \int_{\bx' \in \mathcal{X}} \textnormal{D}_{f}(P_{\bx+m\rN}\|P_{\bx'+m\rN}) \dif P_{\rX,\rX'}(\bx,\bx') \nonumber\\
    &= \EE{\mathsf{C}_f(\rX,\rX';m)},\label{eq::KL_ub_Exp_Ct}
\end{align}
where the last step follows from the definition. The left-hand side of \eqref{eq::KL_ub_Exp_Ct} only relies on the marginal distributions of $\rX$ and $\rX'$, so taking the infimum on both sides of \eqref{eq::KL_ub_Exp_Ct} over all couplings $P_{\rX,\rX'}$ leads to the desired conclusion.
\end{proof}

\subsection{A Useful Property}

We derive a useful property of $\mathsf{C}_f(\bx,\by;m)$ which will be used in our proofs.
\begin{lemma}
\label{lem::prop_Ct}
For any $\bz\in \Reals^d$ and $a>0$, the function $\mathsf{C}_f(\bx,\bx';m)$ in \eqref{eq::cost_func} satisfies
\begin{align*}
    &\mathsf{C}_f(a\bx + \bz, a\bx' + \bz; m) 
    = \mathsf{C}_f\left(\bx, \bx'; \frac{m}{a}\right), \quad \mathsf{C}_f(\bx, \bx'; m) = \mathsf{C}_f(-\bx', -\bx; m).
\end{align*}
\end{lemma}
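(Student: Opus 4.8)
The plan is to unwind the definition of $\mathsf{C}_f$ in \eqref{eq::cost_func} as an $f$-divergence between two shifted/scaled copies of the noise distribution $P_{m\rN}$, and then to recognize that an affine reparametrization of the underlying measure space leaves the $f$-divergence unchanged. Concretely, $\mathsf{C}_f(\bx,\bx';m) = \textnormal{D}_f(P_{\bx + m\rN} \| P_{\bx' + m\rN})$, where $P_{\bx + m\rN}$ is the law of the random vector $\bx + m\rN$. So the first identity amounts to comparing the law of $a\bx + \bz + m\rN$ with the law of $a\bx' + \bz + m\rN$, and the second amounts to comparing the law of $-\bx' - m\rN$ (note $-m\rN$, not $m\rN$, is what appears after negating $\bx' + m\rN$; I should double-check whether the intended statement uses $P_{-\bx'+m\rN}$ or $P_{-\bx' - m\rN}$, but as long as the noise law is symmetric, or the notation $P_{\bx+m\rN}$ is interpreted as ``$\bx$ plus an independent scaled noise'' allowing either sign, these coincide).

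\textbf{Key steps.} First I would recall the invariance of $f$-divergence under a common injective (bi-)measurable transformation $\phi$ of the sample space: if $\phi_\# P$ and $\phi_\# Q$ denote pushforwards, then $\textnormal{D}_f(\phi_\# P \| \phi_\# Q) = \textnormal{D}_f(P \| Q)$, which is immediate from the change-of-variables formula applied to $\tfrac{\dif(\phi_\# P)}{\dif(\phi_\# Q)}(\phi(\bv)) = \tfrac{\dif P}{\dif Q}(\bv)$ (this is essentially the data-processing inequality with equality for invertible maps, or just a direct computation). Second, for the first identity, I take $\phi(\bv) = a\bv + \bz$: observe that $a\bx + \bz + m\rN = a\big(\bx + \tfrac{m}{a}\rN\big) + \bz = \phi(\bx + \tfrac{m}{a}\rN)$, hence $P_{a\bx+\bz+m\rN} = \phi_\# P_{\bx + (m/a)\rN}$, and similarly for $\bx'$; applying the invariance with the same $\phi$ on both arguments gives $\mathsf{C}_f(a\bx+\bz, a\bx'+\bz; m) = \mathsf{C}_f(\bx, \bx'; m/a)$. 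Third, for the second identity, I take $\phi(\bv) = -\bv$, which is its own inverse; then $P_{-(\bx+m\rN)} = \phi_\# P_{\bx+m\rN}$ and $P_{-(\bx'+m\rN)} = \phi_\# P_{\bx'+m\rN}$, so applying invariance swaps nothing but negates, and the claim $\mathsf{C}_f(\bx,\bx';m) = \mathsf{C}_f(-\bx',-\bx;m)$ follows once one notes that the law of $-(\bx'+m\rN)$ is the law of $(-\bx') + m\rN'$ where $\rN' \defined -\rN$; for this last bookkeeping step I either invoke symmetry of $P_\rN$ (which holds for all the noise types in Table~\ref{table:C_delta_exp}) or reinterpret the subscript notation accordingly.

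\textbf{Main obstacle.} The mathematical content is light — everything reduces to the invariance of $f$-divergence under invertible affine maps — so the only real subtlety is the sign convention in the second identity and making sure the notation $P_{\bx+m\rN}$ is handled consistently when the noise gets negated. If the paper's convention is that $\rN$ is symmetric (or that only $|m|$ and the ``spread'' matter, as the closed forms in Table~\ref{table:C_delta_exp} suggest since they depend on $\bx,\bx'$ only through $\|\bx-\bx'\|$), this is a non-issue; otherwise one must state the symmetry assumption explicitly. A secondary, entirely routine point is checking that the pushforward/change-of-variables manipulation is valid when $P \not\ll Q$ (in which case $\textnormal{D}_f = +\infty$ on both sides, e.g.\ the uniform-noise row of the table), but since invertible maps preserve absolute continuity and its failure, this causes no trouble.
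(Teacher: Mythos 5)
Your first identity is fine: the pushforward-invariance argument with $\phi(\bv)=a\bv+\bz$ is just an abstract restatement of the paper's explicit change of variables $\bv=(\bw-\bz)/a$ in the density integral, and it is correct as written.

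The second identity has a genuine gap. The reflection $\phi(\bv)=-\bv$ sends $P_{\bx+m\rN}\mapsto P_{-\bx+m(-\rN)}$ and $P_{\bx'+m\rN}\mapsto P_{-\bx'+m(-\rN)}$ \emph{without exchanging which measure sits in which slot of the divergence}, so even after you invoke symmetry of $P_{\rN}$ to replace $-\rN$ by $\rN$, what you obtain is $\mathsf{C}_f(\bx,\bx';m)=\mathsf{C}_f(-\bx,-\bx';m)$ — negated but not swapped. The claimed identity is $\mathsf{C}_f(\bx,\bx';m)=\mathsf{C}_f(-\bx',-\bx;m)$, and passing from $(-\bx,-\bx')$ to $(-\bx',-\bx)$ requires exchanging the two arguments of $\textnormal{D}_{f}$, which is not legitimate for a general $f$-divergence (KL, the case actually used downstream, is asymmetric). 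No common translation can repair this either, since $-\bx+\bz=-\bx'$ and $-\bx'+\bz=-\bx$ force $\bx=\bx'$. This matters for the application: in the proof of Lemma~\ref{lem::ub_MI_WT_Zi_general} the identity is used precisely in its negate-\emph{and}-swap form, $\mathsf{C}_f(-\ba,-\bb;m)=\mathsf{C}_f(\bb,\ba;m)$. You are also importing a symmetry assumption on $P_{\rN}$ that the lemma does not make. The paper sidesteps both issues by deriving the second identity as a special case of the first: taking $a=1$ and $\bz=-\bx-\bx'$ in $\mathsf{C}_f(\bx+\bz,\bx'+\bz;m)=\mathsf{C}_f(\bx,\bx';m)$ sends the ordered pair $(\bx,\bx')$ to $(-\bx',-\bx)$ by a pure translation, so the swap-and-negate happens inside the arguments while each distribution stays in its own slot, with no reflection and no symmetry hypothesis on the noise. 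I'd recommend replacing your reflection step with that translation trick.
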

\begin{proof}
For simplicity, we assume that $\rN$ is a continuous random variable in $\Reals^d$ with probability density function (PDF) $p(\bw)$. Then the PDFs of $a\bx + \bz + m\rN$ and $a\bx' + \bz + m\rN$ are 
\begin{align*}
    \frac{1}{m^d}\cdot p\left(\frac{\bw - a\bx - \bz}{m}\right) \quad \text{and}\quad \frac{1}{m^d}\cdot p\left(\frac{\bw - a\bx' - \bz}{m}\right).
\end{align*}
By definition,
\begin{align}
    \mathsf{C}_f(a\bx + \bz, a\bx' + \bz; m)
    &= \textnormal{D}_{f}(P_{a\bx + \bz + m\rN}\|P_{a\bx' + \bz + m\rN}) \nonumber\\
    &= \frac{1}{m^d} \int_{\Reals^d} p\left(\frac{\bw - a\bx' - \bz}{m}\right) f\left(\frac{p\left(\frac{\bw - a\bx - \bz}{m}\right)}{p\left(\frac{\bw - a\bx' - \bz}{m}\right)}\right) \dif \bw. \label{eq::KL_int_density}
\end{align}
Let $\bv = (\bw - \bz)/a$. Then \eqref{eq::KL_int_density} is equal to
\begin{align*}
    \frac{a^d}{m^d} \int_{\Reals^d} p\left(\frac{\bv - \bx'}{m/a}\right) f\left(\frac{p\left(\frac{\bv - \bx}{m/a}\right)}{p\left(\frac{\bv - \bx'}{m/a}\right)}\right) \dif \bv
    = \mathsf{C}_f\left(\bx, \bx'; \frac{m}{a}\right).
\end{align*}
Therefore, $\mathsf{C}_f(a\bx + \bz, a\bx' + \bz; m) = \mathsf{C}_f\left(\bx, \bx'; \frac{m}{a}\right)$. By choosing $a=1$ and $\bz = -\bx - \bx'$, we have $\mathsf{C}_f(-\bx', -\bx; m) = \mathsf{C}(\bx, \bx'; m)$. 
\end{proof}

\subsection{Proof of Table~\ref{table:C_delta_exp}}
\label{append::table_C_delta}

We first derive closed-form expressions (or upper bounds) for the function $\delta(A,m)$. The closed-form expression of $\delta(A,m)$ for uniform distribution can be naturally obtained from its definition so we skip the proof. The closed-form expressions for standard multivariate Gaussian distribution and standard univariate Laplace distribution can be found at \citet{polyanskiy2016dissipation} and \citet{asoodeh2020privacy}. 
In what follows, we provide an upper bound for $\delta(A,m)$ when $\rN$ follows a standard multivariate Laplace distribution. 
\begin{proof}
For a given positive number $A$ and a random variable $\rN$ which follows a standard multivariate Laplace distribution, consider the following optimization problem:
\begin{align*}
    \sup_{\|\bv\|_1 \leq A} \TV\left(P_{\rN}\| P_{\bv + \rN}\right)
    = \sup_{\|\bv\|_1 \leq A} \EE{\left(1-\frac{\exp(-\|\rN-\bv\|_1)}{\exp(-\|\rN\|_1)} \right)\indicator{\|\rN-\bv\|_1 \geq \|\rN\|_1}}.
\end{align*}
By exchanging the supremum and the expectation, we have 
\begin{align}
\label{eq::sup_TV_ub_exp_sup}
    \sup_{\|\bv\|_1 \leq A} \TV\left(P_{\rN}\| P_{\bv + \rN}\right)
    \leq \EE{\sup_{\|\bv\|_1 \leq A} \left\{ \left(1-\frac{\exp(-\|\rN-\bv\|_1)}{\exp(-\|\rN\|_1)} \right)\indicator{\|\rN-\bv\|_1 \geq \|\rN\|_1} \right\}}.
\end{align}
Note that 
\begin{align*}
    &\sup_{\|\bv\|_1 \leq A} \left\{ \left(1-\frac{\exp(-\|\rN-\bv\|_1)}{\exp(-\|\rN\|_1)} \right)\indicator{\|\rN-\bv\|_1 \geq \|\rN\|_1} \right\}\\
    &= 1 - \exp\left(-\sup_{\substack{\|\bv\|_1 \leq A}} \left\{\|\rN-\bv\|_1 - \|\rN\|_1\right\} \right)
    = 1-\exp(-A).
\end{align*}
Substituting this equality into \eqref{eq::sup_TV_ub_exp_sup} gives
\begin{align*}
    \sup_{\|\bx-\bx'\|_1 \leq A} \TV\left(P_{\bx+\rN}\| P_{\bx' + \rN}\right)
    =\sup_{\|\bv\|_1 \leq A} \TV\left(P_{\rN}\| P_{\bv + \rN}\right)
    \leq 1-\exp(-A),
\end{align*}
which leads to $\delta(A,1) \leq 1-\exp(-A)$. Finally, we have 
\begin{align*}
    \delta(A, m) 
    = \delta\left(\frac{A}{m}, 1\right) 
    \leq 1-\exp\left(-\frac{A}{m}\right).
\end{align*}
\end{proof}

Now we consider the function $\mathsf{C}_{f}(\bx, \bx'; m)$.
\begin{proof}
By Lemma~\ref{lem::prop_Ct}, we have
\begin{align*}
    \mathsf{C}_{\scalebox{.6}{\textnormal KL}}(\bx, \bx'; m)
    = \mathsf{C}_{\scalebox{.6}{\textnormal KL}}\left(0, \frac{\bx' - \bx}{m}; 1\right).
\end{align*}
We denote $(\bx' - \bx)/m$ by $\bv$. Since all the coordinates of $\rN = (\rN_1\cdots,\rN_d)$ are mutually independent, $P_{\rN} = P_{\rN_1} \cdots P_{\rN_d}$ and $P_{\bv+\rN} = P_{v_1 + \rN_1} \cdots P_{v_d+\rN_d}$. By the chain rule of KL-divergence, we have
\begin{align}
\label{eq::exp_C_sum_KL}
    \mathsf{C}_{\scalebox{.6}{\textnormal KL}}(\bx, \bx'; m) 
    = \KL(P_{\rN} \| P_{\bv+\rN})
    = \sum_{i=1}^d \KL(P_{\rN_i} \| P_{v_i+\rN_i}).
\end{align}
Hence, we only need to calculate $\KL(P_{\rN} \| P_{v+\rN})$ for a constant $v\in\Reals$ and a random variable $\rN \in \Reals$. \\
(1) If $\rN$ follows a standard Gaussian distribution, then
\begin{align*}
    \KL(P_{\rN} \| P_{v+\rN}) 
    &= \EE{\log \frac{\exp(-\rN^2/2)}{\exp(-(\rN-v)^2/2)}}\\
    &= \frac{1}{2}\EE{(\rN-v)^2 - \rN^2}
    = \frac{v^2}{2}.
\end{align*}
Substituting this equality into \eqref{eq::exp_C_sum_KL} gives
\begin{align*}
    \mathsf{C}_{\scalebox{.6}{\textnormal KL}}(\bx, \bx'; m)
    = \frac{\|\bv\|_2^2}{2}
    = \frac{\|\bx-\bx'\|_2^2}{2m^2},
\end{align*}
where the last step is due to the definition of $\bv$.\\
(2) If $\rN$ follows a standard Laplace distribution, then
\begin{align*}
    \KL(P_{\rN} \| P_{v+\rN}) 
    = \EE{\log \frac{\exp(-|\rN|)}{\exp(-|\rN-v|)}}
    = |v| + \exp(-|v|) - 1.
\end{align*}
Substituting this equality into \eqref{eq::exp_C_sum_KL} gives
\begin{align*}
    \mathsf{C}_{\scalebox{.6}{\textnormal KL}}(\bx, \bx'; m)
    &= \sum_{i=1}^d |v_i| + \exp(-|v_i|) - 1
    = \frac{\|\bx-\bx'\|_1}{m} + \sum_{i=1}^d \left(\exp\left(-\frac{|x_i-x'_i|}{m}\right) - 1 \right)\\
    &\leq \frac{\|\bx-\bx'\|_1}{m}.
\end{align*}
Similarly, by Lemma~\ref{lem::prop_Ct}, we have
\begin{align*}
    \mathsf{C}_{\chi^2}(\bx, \bx'; m)
    = \mathsf{C}_{\chi^2}\left(0, \frac{\bx' - \bx}{m}; 1\right).
\end{align*}
We denote $(\bx' - \bx)/m$ by $\bv$. By the property of $\chi^2$-divergence \citep[see Section~2.4 in][]{tsybakov2009introduction}, we have
\begin{align}
\label{eq::chi_tensor}
    \mathsf{C}_{\chi^2}(\bx, \bx'; m)
    = \chisquare(P_{\rN}\|P_{\bv+\rN})
    = \prod_{i=1}^d \left(1 + \chisquare(P_{\rN_i}\| P_{v_i+\rN_i})\right) - 1.
\end{align}
Hence, we only need to calculate $\chisquare(P_{\rN} \| P_{v+\rN})$ for $v\in\Reals$ and $\rN \in \Reals$.\\
(1) If $\rN$ follows a standard Gaussian distribution, then
\begin{align*}
    \chisquare(P_{\rN}\|P_{v+\rN})
    &= \EE{\frac{\exp(-\rN^2/2)}{\exp(-(\rN-v)^2/2)}} -1\\
    &= \exp(v^2/2) \EE{\exp(-v N)} - 1
    = \exp(v^2) - 1.
\end{align*}
Substituting this equality into \eqref{eq::chi_tensor} gives
\begin{align*}
    \mathsf{C}_{\chi^2}(\bx, \bx'; m)
    = \exp(\|\bv\|_2^2) - 1
    = \exp\left( \frac{\|\bx-\bx'\|_2^2}{m^2}\right) - 1.
\end{align*}
(2) If $\rN$ follows a standard Laplace distribution, then \begin{align*}
    \chisquare(P_{\rN}\|P_{v+\rN})
    &= \EE{\frac{\exp(-|\rN|)}{\exp(-|\rN-v|)}} -1\\
    &= \frac{2}{3}\exp(|v|) + \frac{1}{3}\exp(-2|v|) -1.
\end{align*}
Substituting this equality into \eqref{eq::chi_tensor} gives
\begin{align*}
    \mathsf{C}_{\chi^2}(\bx, \bx'; m)
    &= \prod_{i=1}^d \left(\frac{2}{3}\exp\left(\frac{|x_i - x'_i|}{m}\right) + \frac{1}{3}\exp\left(\frac{-2|x_i - x'_i|}{m}\right)\right) -1\\
    &\leq \exp\left(\frac{\|\bx - \bx'\|_1}{m}\right) - 1.
\end{align*}
Finally, we use Pinsker's inequality \citep[see Theorem 4.5 in][for a proof]{wu2017lecture} for proving an upper bound of $\mathsf{C}_{\scalebox{.6}{\textnormal TV}}(\bx, \bx'; m)$:
\begin{align*}
    \mathsf{C}_{\scalebox{.6}{\textnormal TV}}(\bx, \bx'; m)
    &= \TV\left(P_{\bx + m\rN} \| P_{\bx' + m\rN}\right)\\
    &\leq \sqrt{\frac{\KL\left(P_{\bx + m\rN} \| P_{\bx' + m\rN}\right)}{2}}\\
    &= \sqrt{\frac{\mathsf{C}_{\scalebox{.6}{\textnormal KL}}(\bx, \bx'; m)}{2}}.
\end{align*}
Hence, any upper bound of $\mathsf{C}_{\scalebox{.6}{\textnormal KL}}(\bx, \bx'; m)$ can be naturally translated into an upper bound for $\mathsf{C}_{\scalebox{.6}{\textnormal TV}}(\bx, \bx'; m)$. This is how we obtain the upper bounds of $\mathsf{C}_{\scalebox{.6}{\textnormal TV}}(\bx, \bx'; m)$ under Gaussian or Laplace distribution in Table~\ref{table:C_delta_exp}. On the other hand, if $\rN$ follows a uniform distribution on $[-1,1]\subseteq \Reals$, by Lemma~\ref{lem::prop_Ct} we have
\begin{align*}
     \mathsf{C}_{\scalebox{.6}{\textnormal TV}}(x, x'; m)
     =\mathsf{C}_{\scalebox{.6}{\textnormal TV}}\left(0, \frac{x' - x}{m}; 1\right)
     = \min\left\{1,\left|\frac{x-x'}{2m}\right|\right\}.
\end{align*}
Note that in this case $\bx,\bx' \in \Reals$ so we write them as $x,x'$.
\end{proof}
\begin{remark}
We used Pinsker's inequality for deriving an upper bound of $\mathsf{C}_{\scalebox{.6}{\textnormal TV}}(\bx, \bx'; m)$ in the above proof. One can potentially tighten this bound by exploring other $f$-divergence inequalities \citep[see e.g., Eq.~4 in][]{sason2016f}.
\end{remark}

\section{Proofs for Section~\ref{sec::Gen_Bound}}
\subsection{Proof of Theorem~\ref{thm::gen_bound_general_noisy_alg}}
\label{append::gen_bound_general_noisy_alg}

\begin{proof}
Combining Lemma~\ref{lem::Noisy_ite_alg_SDPI} and \ref{lem::ub_MI_WT_Zi_general} together leads to an upper bound of $I_f(\rW_T; \rZ_i)$ for any data point $\rZ_i$ used at the $t$-th iteration:
\begin{align}
\label{eq::f_inf_ub_proof}
    I_f(\rW_T; \rZ_i)
    \leq \EE{\mathsf{C}_f\left(g(\rW_{t-1}, \rZ), g(\rW_{t-1},\bar{\rZ}); \frac{m_tb_t}{\eta_t}\right)} \cdot \prod_{t'=t+1}^T \delta(D+2\eta_{t'}K, m_{t'}).
\end{align}
Additionally, if the loss $\ell(\bw,\rZ)$ is $\sigma$-sub-Gaussian for all $\bw \in \mathcal{W}$, Lemma~\ref{lem::Bu20} and Assumption~\ref{assump::index_w/o_rep} (Sampling w/o Replacement) altogether yield
\begin{align}
    \left|\EE{L_\mu(\rW_T)-L_{\rS}(\rW_T)}\right|
    &\leq \frac{1}{n}\sum_{i=1}^n \sqrt{2\sigma^2 I(\rW_T; \rZ_i)} \nonumber\\
    &= \frac{1}{n}\sum_{t=1}^T \sum_{i\in \mathcal{B}_t} \sqrt{2\sigma^2 I(\rW_T; \rZ_i)}. \label{eq::gen_ub_cond_U_1_a}
\end{align}
Substituting \eqref{eq::f_inf_ub_proof} into \eqref{eq::gen_ub_cond_U_1_a} yields the following upper bound of the expected generalization gap:
\begin{align*}
    &\frac{\sqrt{2}\sigma}{n} \sum_{t=1}^T \sum_{i\in \mathcal{B}_t} \sqrt{\EE{\mathsf{C}_{\scalebox{.6}{\textnormal KL}}\left(g(\rW_{t-1}, \rZ), g(\rW_{t-1},\bar{\rZ}); \frac{m_tb_t}{\eta_t}\right)} \cdot \prod_{t'=t+1}^T \delta(D+2\eta_{t'}K, m_{t'})}\\
    &=\frac{\sqrt{2}\sigma}{n} \sum_{t=1}^T b_t \sqrt{\EE{\mathsf{C}_{\scalebox{.6}{\textnormal KL}}\left(g(\rW_{t-1}, \rZ), g(\rW_{t-1},\bar{\rZ}); \frac{m_tb_t}{\eta_t}\right)} \cdot \prod_{t'=t+1}^T \delta(D+2\eta_{t'}K, m_{t'})}.
\end{align*}
Similarly, we can obtain another two generalization bounds using Lemma~\ref{lem::Bu20} and the upper bound in \eqref{eq::f_inf_ub_proof}.
\end{proof}

\section{Proofs for Section~\ref{sec::application}}

\subsection{Proof of Proposition \ref{prop::DP_SGD_Lap} and \ref{prop::DP_SGD_Gaussian}}
\label{append::gen_bound_DP_SGD}

In the setting of DP-SGD, the three generalization bounds in Theorem~\ref{thm::gen_bound_general_noisy_alg} become
\begin{align}
\label{eq::KL_bound_DP_SGD}
    \frac{\sqrt{2}\sigma}{n} \sum_{t=1}^T b_t \sqrt{ \EE{\mathsf{C}_{\scalebox{.6}{\textnormal KL}}\left(g(\rW_{t-1}, \rZ), g(\rW_{t-1},\bar{\rZ}); b_t \right)} \cdot \left(\delta(D+2\eta K, \eta) \right)^{T-t}}
\end{align}
where $\sigma$ is the sub-Gaussian constant; 
\begin{align}
\label{eq::TV_bound_DP_SGD}
    \frac{A}{n} \sum_{t=1}^T b_t \EE{\mathsf{C}_{\scalebox{.6}{\textnormal TV}}\left(g(\rW_{t-1}, \rZ), g(\rW_{t-1},\bar{\rZ}); b_t\right)} \cdot \left(\delta(D+2\eta K, \eta)\right)^{T-t}
\end{align}
where $A$ is an upper bound of the loss function; and
\begin{align}
\label{eq::chi_bound_DP_SGD}
    \frac{\sigma}{n} \sum_{t=1}^T b_t \sqrt{\EE{\mathsf{C}_{\chi^2}\left(g(\rW_{t-1}, \rZ), g(\rW_{t-1},\bar{\rZ}); b_t\right)} \cdot \left(\delta(D+2\eta K, \eta) \right)^{T-t}}
\end{align}
where $\sigma \defined \sqrt{\Var{\ell(\rW_T;\rZ)}}$. 
\begin{proof}
We prove Proposition~\ref{prop::DP_SGD_Gaussian} first.
\begin{itemize}[leftmargin=*]
\item If the additive noise follows a standard multivariate Gaussian distribution, Table~\ref{table:C_delta_exp} shows that 
\begin{align}
    \delta(D+2\eta K, \eta) 
    &= 1-2\bar{\Phi}\left(\frac{D+2\eta K}{2\eta}\right) \label{eq::delta_gaussian_prf_1},\\
    \EE{\mathsf{C}_{\scalebox{.6}{\textnormal KL}}\left(g(\rW_{t-1}, \rZ), g(\rW_{t-1}, \bar{\rZ}); b_t\right)}
    &= \frac{1}{2 b_t^2} \EE{\|g(\rW_{t-1}, \rZ) - g(\rW_{t-1}, \bar{\rZ})\|_2^2}. \label{eq::ub_C_dif_l2_g}
\end{align}
We introduce a constant vector $\be$ whose value will be specified later. Since $\|\ba-\bb\|_2^2\leq 2\|\ba\|_2^2 + 2\|\bb\|_2^2$, we have
\begin{align}
    &\frac{1}{2b_t^2}\EE{\|g(\rW_{t-1}, \rZ) - g(\rW_{t-1}, \bar{\rZ})\|_2^2} \nonumber\\
    &\leq \frac{1}{b_t^2} \left(\EE{\|g(\rW_{t-1}, \rZ) - \be\|_2^2} + \EE{\|g(\rW_{t-1}, \bar{\rZ}) - \be\|_2^2}\right) \nonumber\\
    &= \frac{2}{b_t^2} \EE{\|g(\rW_{t-1}, \rZ) - \be\|_2^2}, \label{eq::ub_l2_dif_g}
\end{align}
where the last step is because $\rW_{t-1}$ is independent of $(\rZ, \bar{\rZ})$ and $\rZ, \bar{\rZ}$ follow the same distribution. By choosing the constant vector $\be = \EE{g(\rW_{t-1}, \rZ)}$, we have
\begin{align}
\label{eq::ub_l2_dif_g_Var_prf}
    \EE{\|g(\rW_{t-1}, \rZ) - \be\|_2^2}
    = \Var{g(\rW_{t-1}, \rZ)}.
\end{align}
Combining (\ref{eq::ub_C_dif_l2_g}--\ref{eq::ub_l2_dif_g_Var_prf}) gives
\begin{align}
\label{eq::exp_C_ub_var}
    \EE{\mathsf{C}_{\scalebox{.6}{\textnormal KL}}\left(g(\rW_{t-1}, \rZ), g(\rW_{t-1}, \bar{\rZ}); b_t\right)}
    \leq \frac{2}{b_t^2} \Var{g(\rW_{t-1}, \rZ)}.
\end{align}
Substituting \eqref{eq::delta_gaussian_prf_1}, \eqref{eq::exp_C_ub_var} into \eqref{eq::KL_bound_DP_SGD} leads to the generalization bound in \eqref{eq::gen_bound_KL_Gauss}. \\

\item Similarly, Table~\ref{table:C_delta_exp} shows for Gaussian noise
\begin{align}
    \EE{\mathsf{C}_{\scalebox{.6}{\textnormal TV}}\left(g(\rW_{t-1}, \rZ), g(\rW_{t-1}, \bar{\rZ}); b_t\right)}
    &\leq \frac{1}{2 b_t} \EE{\|g(\rW_{t-1}, \rZ) - g(\rW_{t-1}, \bar{\rZ})\|_2}. \label{eq::C_TV_Tab_DP}
\end{align}
Furthermore, by the triangle inequality, 
\begin{align}
    &\frac{1}{2b_t} \EE{\|g(\rW_{t-1}, \rZ) - g(\rW_{t-1}, \bar{\rZ})\|_2} \nonumber\\
    &\leq \frac{1}{2b_t} \left(\EE{\|g(\rW_{t-1}, \rZ) - \be\|_2} + \EE{\|g(\rW_{t-1}, \bar{\rZ}) - \be\|_2}\right) \nonumber\\
    &= \frac{1}{b_t} \EE{\|g(\rW_{t-1}, \rZ) - \be\|_2}. \label{eq::C_TV_Tab_DP_tri}
\end{align}
By choosing the constant vector $\be = \EE{g(\rW_{t-1}, \rZ)}$ and combining \eqref{eq::C_TV_Tab_DP} with \eqref{eq::C_TV_Tab_DP_tri}, we have 
\begin{align}
\label{eq::exp_C_ub_L2}
    \EE{\mathsf{C}_{\scalebox{.6}{\textnormal TV}}\left(g(\rW_{t-1}, \rZ), g(\rW_{t-1}, \bar{\rZ}); b_t\right)}
    &\leq \frac{1}{b_t}\EE{\|g(\rW_{t-1}, \rZ) - \be\|_2}.
\end{align}
Substituting \eqref{eq::delta_gaussian_prf_1}, \eqref{eq::exp_C_ub_L2} into \eqref{eq::TV_bound_DP_SGD} leads to the generalization bound in \eqref{eq::gen_bound_TV_Gauss}.

\item Finally, Table~\ref{table:C_delta_exp} shows for Gaussian noise
\begin{align}
    \EE{\mathsf{C}_{\chi^2}\left(g(\rW_{t-1}, \rZ), g(\rW_{t-1}, \bar{\rZ}); b_t\right)}
    &= \EE{\exp\left(\frac{\|g(\rW_{t-1}, \rZ) - g(\rW_{t-1}, \bar{\rZ})\|_2^2}{b_t^2} \right)} - 1. \label{eq::C_chi_Tab_DP} 
\end{align}
The Cauchy-Schwarz inequality implies that
\begin{align}
    &\EE{\exp\left(\frac{\|g(\rW_{t-1}, \rZ) - g(\rW_{t-1}, \bar{\rZ})\|_2^2}{b_t^2} \right)} \nonumber \\
    &\leq \EE{\exp\left(\frac{2\|g(\rW_{t-1}, \rZ) - \be\|_2^2}{b_t^2}\right) \exp\left(\frac{2\|g(\rW_{t-1}, \bar{\rZ}) - \be\|_2^2}{b_t^2}\right)} \nonumber\\
    &\leq \sqrt{\EE{\exp\left(\frac{4\|g(\rW_{t-1}, \rZ) - \be\|_2^2}{b_t^2}\right)} \EE{\exp\left(\frac{4\|g(\rW_{t-1}, \bar{\rZ}) - \be\|_2^2}{b_t^2}\right)}} \nonumber\\
    &= \EE{\exp\left(\frac{4\|g(\rW_{t-1}, \rZ) - \be\|_2^2}{b_t^2}\right)}. \label{eq::C_chi_ub_DP} 
\end{align}
By choosing the constant vector $\be = \EE{g(\rW_{t-1}, \rZ)}$ and combining \eqref{eq::C_chi_Tab_DP} with \eqref{eq::C_chi_ub_DP}, we have 
\begin{align}
\label{eq::exp_C_ub_L22}
    \EE{\mathsf{C}_{\chi^2}\left(g(\rW_{t-1}, \rZ), g(\rW_{t-1}, \bar{\rZ}); b_t\right)}
    &\leq \EE{\exp\left(\frac{4\|g(\rW_{t-1}, \rZ) - \be\|_2^2}{b_t^2}\right)} - 1.
\end{align}
Since for any $x\geq 0$ and $b\geq 1$,
\begin{align*}
    \exp\left(\frac{x}{b}\right) - 1
    \leq \frac{\exp(x) - 1}{b},
\end{align*}
the inequality in \eqref{eq::exp_C_ub_L22} can be further upper bounded as
\begin{align}
\label{eq::exp_C_ub_L22_fur_boud}
    \EE{\mathsf{C}_{\chi^2}\left(g(\rW_{t-1}, \rZ), g(\rW_{t-1}, \bar{\rZ}); b_t\right)}
    &\leq \frac{1}{b_t^2} \left(\EE{\exp\left(4\|g(\rW_{t-1}, \rZ) - \be\|_2^2\right)} - 1 \right).
\end{align}
Substituting \eqref{eq::delta_gaussian_prf_1}, \eqref{eq::exp_C_ub_L22_fur_boud} into \eqref{eq::chi_bound_DP_SGD} leads to the generalization bound in \eqref{eq::gen_bound_chi_Gauss}. 
\end{itemize}
\end{proof}
By a similar analysis, we can prove the generalization bounds in Proposition~\ref{prop::DP_SGD_Lap} for the Laplace mechanism.

\subsection{Proof of Proposition~\ref{prop::gen_bound_FL}}
\label{append::gen_bound_FL}

\begin{proof}
Within the $t$-th global update, we can rewrite the local updates conducted by the client $l\in \mathcal{S}_t$ as follows. The parameter is initialized by $\rW_{t,0}^l = \rW_{t-1}$ and for $j\in [M]$,
\begin{subequations}
\begin{align}
    &\rU_{t,j}^{l} = \rW_{t,j-1}^{l} - \eta \cdot g\left(\rW_{t,j-1}^{l}, \{\rZ_i^l\}_{i\in [b]}\right)\\
    &\rV_{t,j}^{l} = \rU_{t,j}^{l} + \eta \cdot \rN_{t,j}^{l} \\
    &\rW_{t,j}^{l} = \mathsf{Proj}_{\mathcal{W}} \left(\rV_{t,j}^{l} \right)
\end{align}
\end{subequations}
where $\{\rZ_i^l\}_{i\in [b]}$ are drawn independently from the data distribution $\mu_l$ and $\rN_{t,j}^{l} \sim N(0, \mathbf{I}_d)$. If a data point $\rZ_{i}^k$ is used at the $t$-th global update, $j$-th local update, by the client $k\in\mathcal{S}$, then the following Markov chain holds:
\begin{align*}
    \underbrace{\rZ_i^k 
    \to \{\rU_{t,j}^{l}\}_{l\in \mathcal{S}_t}
    \to \{\rV_{t,j}^{l}\}_{l\in \mathcal{S}_t} 
    \to \{\rW_{t,j}^{l}\}_{l\in \mathcal{S}_t} 
    \to \cdots 
    \to \{\rW_{t,M}^{l}\}_{l\in \mathcal{S}_t}}_{\text{local}} 
    \underbrace{\to \rW_{t}}_{\text{global}}
    \to \cdots
    \to \rW_T
\end{align*}
Let $\mathcal{U}_{t,j}^l$ be the range of $\rU_{t,j}^l$. Note that $\mathsf{diam}\left(\mathcal{U}_{t,j}^l\right) \leq \mathsf{diam}(\mathcal{W}) + 2\eta K= D+2\eta K$. Since $|\mathcal{S}_t| = C$, then 
\begin{align*}
    \mathsf{diam}\left(\prod_{l\in\mathcal{S}_t} \mathcal{U}_{t,j}^l\right)
    \leq \sqrt{\sum_{l\in\mathcal{S}_t}\mathsf{diam}\left(\mathcal{U}_{t,j}^l\right)^2}
    \leq \sqrt{C} (D+2\eta K).
\end{align*}
Following a similar analysis in the proof of Lemma~\ref{lem::Noisy_ite_alg_SDPI}, we have
\begin{align}
    {\textnormal T}(\rW_T;\rZ_i^k)  \nonumber
    &\leq q^{M (T-t)} \cdot {\textnormal T}(\rW_{t}; \rZ_i^k) \nonumber\\
    &\leq q^{(M-j) + M (T-t)} \cdot {\textnormal T}(\{\rW_{t,j}^{l}\}_{l\in \mathcal{S}_t}; \rZ_i^k), \label{eq::FL_MI_ub_TD}
\end{align}
where the constant $q$ is defined as
\begin{align*}
    q \defined 1-2\bar{\Phi}\left(\frac{\sqrt{C}(D+2\eta K)}{2\eta} \right).
\end{align*}
Analogous to the proof of Lemma~\ref{lem::ub_MI_WT_Zi_general}, we have
\begin{align}
\label{eq::FL_MI_var_HWI}
    {\textnormal T}(\{\rW_{t,j}^{l}\}_{l\in \mathcal{S}_t}; \rZ_i^k)
    \leq \frac{1}{b} \EE{\|g(\rW_{t,j-1}^{k}, \rZ^k) - \be\|_2}
\end{align}
where $\be \defined \EE{g(\rW_{t,j-1}^{k}, \rZ^k)}$. Since the data point $\rZ_i^k$ is only used by the client $k\in \mathcal{S}_t$, the right-hand side of \eqref{eq::FL_MI_var_HWI} does not involve $\{\rW_{t,{j-1}}^{l}\}_{l\in \mathcal{S}_t\backslash \{k\}}$. Combining \eqref{eq::FL_MI_ub_TD}, \eqref{eq::FL_MI_var_HWI} with the ${\textnormal T}$-information bound in Lemma~\ref{lem::Bu20} yields the desired generalization bound for the $k$-th client.
\end{proof}

\subsection{Proof of Proposition~\ref{prop::gen_bound_SGLD}}
\label{append::gen_bound_SGLD}

We first present the following lemma whose proof follows by using the technique in Section~{\rm II}.~E of \citet{guo2005mutual}. 
\begin{lemma}
\label{lem::ext_MI_GC_ub}
Let $\rX$ be a random variable which is independent of $\rN \sim N(0, \mathbf{I}_d)$. Then for any $m>0$ and deterministic function $f$
\begin{align}
    I(f(\rX) + m\rN; \rX) 
    \leq \frac{1}{2m^2} \Var{f(\rX)}.
\end{align}
More generally, if $\rZ$ is another random variable which is independent of $\rN$, then for any fixed $\bz$
\begin{align}
\label{eq::cond_MI_ub_2moment}
    I(f(\rX) + m\rN; \rX\mid\rZ=\bz) 
    \leq \frac{1}{2m^2} \Var{f(\rX)\mid\rZ=\bz}.
\end{align}
\end{lemma}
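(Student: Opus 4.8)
The plan is to follow the I-MMSE technique of \citet{guo2005mutual}. \textbf{Step 1 (reduction).} Conditioned on $f(\rX)=\bu$, the output $f(\rX)+m\rN$ is distributed as $N(\bu,m^2\mathbf{I}_d)$ irrespective of the value of $\rX$, so the chain $\rX \to f(\rX) \to f(\rX)+m\rN$ is Markov; the data processing inequality then gives $I(f(\rX)+m\rN;\rX) \le I(f(\rX)+m\rN; f(\rX))$. Writing $\rU \defined f(\rX)$, it therefore suffices to prove $I(\rU;\rU+m\rN)\le \tfrac{1}{2m^2}\Var{\rU}$.

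\textbf{Step 2 (rescale and apply I-MMSE).} Multiplication by $1/m$ is a bijection, hence $I(\rU;\rU+m\rN)=I\!\left(\rU;\tfrac{1}{m}\rU+\rN\right)$, which is the mutual information of the standard Gaussian channel $\sqrt{\gamma}\,\rU+\rN$ at signal-to-noise ratio $\gamma_0\defined 1/m^2$. By the (vector) I-MMSE identity, $\gamma \mapsto I(\rU;\sqrt{\gamma}\,\rU+\rN)$ is differentiable with derivative $\tfrac12\,\mmse(\gamma)$, where $\mmse(\gamma)\defined \EE{\big\|\rU - \EE{\rU \mid \sqrt{\gamma}\,\rU+\rN}\big\|_2^2}$; integrating from $\gamma=0$ yields $I(\rU;\rU+m\rN)=\tfrac12\int_0^{\gamma_0}\mmse(\gamma)\,\dif\gamma$. \textbf{Step 3 (bound the MMSE by the variance).} The conditional mean is the $L^2$-optimal estimator, so it does no worse than the constant estimator $\EE{\rU}$; hence $\mmse(\gamma)\le \EE{\|\rU-\EE{\rU}\|_2^2}=\Var{\rU}$ for every $\gamma\ge 0$. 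Plugging this in gives $I(\rU;\rU+m\rN)\le \tfrac12\gamma_0\Var{\rU}=\tfrac{1}{2m^2}\Var{\rU}$, the first inequality. The conditional statement \eqref{eq::cond_MI_ub_2moment} follows by rerunning Steps 1--3 with every distribution replaced by its conditional given $\rZ=\bz$: since $\rN$ is independent of $(\rX,\rZ)$, conditionally on $\rZ=\bz$ the noise is still $N(0,\mathbf{I}_d)$ and independent of $\rX$, so each step carries over verbatim.

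The main thing to be careful about is invoking the I-MMSE identity in the vector-valued setting and checking the (mild) regularity it needs---finiteness of $\EE{\|\rU\|_2^2}$, which is guaranteed once $\Var{f(\rX)}<\infty$---together with confirming that the Step~1 reduction is legitimate; the remaining estimates are each one line. As an alternative that avoids I-MMSE entirely, one can argue directly via maximum entropy: $I(\rU;\rU+m\rN)=h(\rU+m\rN)-h(m\rN)\le \tfrac12\log\det\!\big(\mathbf{I}_d+m^{-2}\mathrm{Cov}(\rU)\big)\le \tfrac{1}{2m^2}\,\mathrm{tr}\big(\mathrm{Cov}(\rU)\big)=\tfrac{1}{2m^2}\Var{\rU}$, using the Gaussian maximum-entropy bound for $h(\rU+m\rN)$ and $\log\det(\mathbf{I}+A)\le \mathrm{tr}(A)$ for positive semidefinite $A$.
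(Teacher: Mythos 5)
Your proof is correct, but it goes by a different route than the paper's. The paper's argument (Appendix, proof of Lemma~\ref{lem::ext_MI_GC_ub}) first recenters, writing $g(\bx)=(f(\bx)-\EE{f(\rX)})/m$, and then applies the golden formula $I(\rX;\rV)=\KL(P_{\rV|\rX}\|Q\mid P_{\rX})-\KL(P_{\rV}\|Q)\le \KL(P_{\rV|\rX}\|Q\mid P_{\rX})$ with the reference measure $Q=P_{\rN}$; the conditional term is just the closed-form KL divergence between two isotropic Gaussians, $\|g(\bx)\|_2^2/2$, whose expectation is $\Var{f(\rX)}/(2m^2)$. You instead reduce to $\rU=f(\rX)$ by the data processing inequality (a legitimate step, and in fact an equality here since $f(\rX)$ is a function of $\rX$), invoke the I-MMSE integral representation $I(\rU;\rU+m\rN)=\tfrac12\int_0^{1/m^2}\mmse(\gamma)\,\dif\gamma$, and bound the integrand by the variance of $\rU$. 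Amusingly, your route is the one the paper nominally attributes the lemma to (Section~II.E of Guo et al., 2005), even though the written proof uses the golden formula instead. What each buys: the paper's argument is elementary and self-contained (one variational identity plus a Gaussian KL computation), while yours imports the I-MMSE theorem as a black box with its mild regularity condition $\EE{\|\rU\|_2^2}<\infty$ (harmless, since otherwise the bound is vacuous), but in exchange exposes exactly where the slack is --- the bound is the low-SNR linearization, and since $\mmse(\gamma)$ is nonincreasing the integral form is strictly tighter whenever $\rU$ is non-Gaussian or the SNR is not small. Your maximum-entropy alternative is also valid and is essentially as short as the paper's proof; its intermediate quantity $\tfrac12\log\det(\mathbf{I}_d+m^{-2}\mathrm{Cov}(\rU))$ is likewise tighter than the final trace bound. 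The conditional version \eqref{eq::cond_MI_ub_2moment} carries over in all three arguments exactly as you say, under the same implicit reading as the paper that $\rN$ is independent of the pair $(\rX,\rZ)$.
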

\begin{proof}
By the property of mutual information \citep[see Theorem~2.3 in][]{polyanskiy2014lecture}, 
\begin{align}
\label{eq::MI_scale}
    I(f(\rX) + m\rN; \rX) 
    = I\left(\frac{f(\rX) - \be}{m} + \rN; \rX\right)
\end{align}
where $\be \defined \EE{f(\rX)}$. We denote 
\begin{align}
\label{eq::defn_g_fun}
    g(\bx) \defined \frac{f(\bx)-\be}{m}.
\end{align}
The golden formula \citep[see Theorem~3.3 in][for a proof]{polyanskiy2014lecture} yields
\begin{align}
    I\left(g(\rX) + \rN; \rX\right)
    &= \KL\left(P_{g(\rX)+\rN|\rX}\| P_{\rN}|P_{\rX}\right) - \KL\left(P_{g(\rX)+\rN}\|P_{\rN}\right) \nonumber\\
    &\leq \KL\left(P_{g(\rX)+\rN|\rX}\| P_{\rN}|P_{\rX}\right). \label{eq::proof_Gaussian_channel_MI_KL}
\end{align}
Furthermore, since $\rX$ and $\rN$ are independent, we have
\begin{align*}
    \KL\left(P_{g(\rX)+\rN|\rX=\bx}\| P_{\rN}\right)
    =\KL\left(P_{g(\bx)+\rN}\| P_{\rN}\right)
    = \frac{\|g(\bx)\|_2^2}{2},
\end{align*}
where the last step is due to the closed-form expression of the KL-divergence between two Gaussian distributions. Finally, by the definition of conditional divergence, we have
\begin{align}
\label{eq::proof_Gaussian_channel_KL_var}
    \KL\left(P_{g(\rX)+\rN|\rX}\| P_{\rN}|P_{\rX}\right)
    = \frac{1}{2}\EE{\|g(\rX)\|_2^2}
    = \frac{1}{2m^2}\Var{f(\rX)},
\end{align}
where the last step is due to the definition of $g$ in \eqref{eq::defn_g_fun}. Combining (\ref{eq::MI_scale}--\ref{eq::proof_Gaussian_channel_KL_var}) leads to the desired conclusion. Finally, it is straightforward to obtain \eqref{eq::cond_MI_ub_2moment} by conditioning on $\rZ=\bz$ and repeating our above derivations.
\end{proof}

Next, we present the second lemma which will be used for proving Proposition~\ref{prop::gen_bound_SGLD}.
\begin{lemma}
\label{lem::SGLD_general}
If the loss function $\ell(\bw, \rZ)$ is $\sigma$-sub-Gaussian under $\rZ\sim \mu$ for all $\bw \in \mathcal{W}$, the expected generalization gap of the SGLD algorithm can be upper bounded by
\begin{align*}
    \frac{\sqrt{2}\sigma}{2n} \sum_{j=1}^m \sqrt{\sum_{t\in \mathcal{T}_j} \beta_t \eta_t \cdot \Var{\nabla_{\bw} \hat{\ell}(\rW_{t-1}, \bar{\rZ}_j)}},
\end{align*}
where the set $\mathcal{T}_j$ contains the indices of iterations in which the mini-batch $\rS_j$ is used and the variance is over the randomness of $(\rW_{t-1},\bar{\rZ}_j)\sim P_{\rW_{t-1},\bar{\rZ}_j}$ with $\bar{\rZ}_j$ being any data point in the mini-batch $\rS_j$.
\end{lemma}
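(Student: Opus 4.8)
The plan is to bypass Theorem~\ref{thm::gen_bound_general_noisy_alg}, whose proof rests on the Markov chain~\eqref{eq::Markov_chain}; that chain collapses here, because once the mini-batch $\rS_j$ containing $\rZ_i$ is revisited, the sample $\rZ_i$ re-enters the recursion at every iteration of $\mathcal{T}_j$ and there is no single bottleneck iterate through which all information about $\rZ_i$ must pass. Instead I would combine the individual-sample bound of Lemma~\ref{lem::Bu20} with a conditioning argument and the chain rule for mutual information, reducing the problem to one Gaussian-channel estimate (Lemma~\ref{lem::ext_MI_GC_ub}) per iteration in $\mathcal{T}_j$.

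By Lemma~\ref{lem::Bu20} the expected generalization gap is at most $\frac{\sqrt{2}\sigma}{n}\sum_{i=1}^{n}\sqrt{I(\rW;\rZ_i)}$, and I would regroup this sum over the $m$ disjoint mini-batches. It then suffices to prove, for each $\rZ_i$ in a fixed mini-batch $\rS_j$, that $I(\rW;\rZ_i)\le \frac{1}{4b^{2}}\sum_{t\in\mathcal{T}_j}\beta_t\eta_t\,\Var{\nabla_{\bw}\hat{\ell}(\rW_{t-1},\rZ_i)}$: by exchangeability within $\rS_j$ the right-hand side does not depend on $i$, summing the square roots of the $b$ identical terms inside $\rS_j$ produces a factor $\tfrac{1}{2}$, and $\frac{\sqrt2}{2}$ is exactly the constant in the statement. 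To obtain this per-sample bound, set $\rS_{-i}\defined\rS\setminus\{\rZ_i\}$. Since the samples are i.i.d.\ we have $\rZ_i\indep\rS_{-i}$, so conditioning on $\rS_{-i}$ cannot decrease mutual information with $\rZ_i$, giving $I(\rW;\rZ_i)\le I(\rW;\rZ_i\mid\rS_{-i})$; and because $\rW$ is a function of $(\rW_1,\dots,\rW_T)$, the data-processing inequality and the chain rule give $I(\rW;\rZ_i\mid\rS_{-i})\le\sum_{t=1}^{T}I(\rW_t;\rZ_i\mid\rW_1,\dots,\rW_{t-1},\rS_{-i})$. For $t\notin\mathcal{T}_j$ the mini-batch used at iteration $t$ lies inside $\rS_{-i}$ (as $B_t\neq j$), so, conditionally on $(\rW_1,\dots,\rW_{t-1},\rS_{-i})$, the iterate $\rW_t$ is a deterministic function of the fixed $\rW_{t-1}$, fixed data, and fresh noise $\rN_t$ that is independent of $\rZ_i$; every such term therefore vanishes, and only $t\in\mathcal{T}_j$ survives.

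For a surviving iteration $t\in\mathcal{T}_j$, conditioning on $\rW_1,\dots,\rW_{t-1}$ and $\rS_{-i}$ freezes $\rW_{t-1}$ together with the other $b-1$ points of $\rS_j$, so the SGLD update~\eqref{eq::SGLD_para_up_rule} reads $\rW_t=c-\tfrac{\eta_t}{b}\nabla_{\bw}\hat{\ell}(\rW_{t-1},\rZ_i)+\sqrt{2\eta_t/\beta_t}\,\rN$ for a fixed vector $c$, i.e.\ an additive-Gaussian-noise channel with input $\rZ_i$ and noise scale $\sqrt{2\eta_t/\beta_t}$. The conditional Gaussian-channel estimate~\eqref{eq::cond_MI_ub_2moment} of Lemma~\ref{lem::ext_MI_GC_ub} then bounds the $t$-th term by $\tfrac{\beta_t\eta_t}{4b^2}\,\EE{\Var{\nabla_{\bw}\hat{\ell}(\rW_{t-1},\rZ_i)\mid\rW_1,\dots,\rW_{t-1},\rS_{-i}}}$, and the law of total variance replaces the conditional variance by the unconditional $\Var{\nabla_{\bw}\hat{\ell}(\rW_{t-1},\rZ_i)}$. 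Summing over $t\in\mathcal{T}_j$ gives the per-sample bound above, and reassembling the pieces (writing $\bar{\rZ}_j$ for a generic member of $\rS_j$) yields the claim; if the schedule $(B_1,\dots,B_T)$ is itself random but independent of the data, one conditions on it throughout and the argument is unchanged.

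The delicate step is the second paragraph: the with-replacement bookkeeping. One must check both that conditioning on $\rS_{-i}$ only increases $I(\cdot\,;\rZ_i)$ and that, after this conditioning, every iteration outside $\mathcal{T}_j$ really drops out of the chain rule; both facts hinge on the observation that, once the schedule is fixed and all data other than $\rZ_i$ has been conditioned on, the only randomness correlated with $\rZ_i$ at an iteration $t\in\mathcal{T}_j$ is $\rZ_i$ itself. Conditioning on the whole of $\rS_{-i}$---rather than merely on the other mini-batches---is precisely what makes each surviving term a single-input Gaussian channel so that~\eqref{eq::cond_MI_ub_2moment} applies verbatim; conditioning on less would leave the remaining members of $\rS_j$ random and, after a revisit, entangled with $\rZ_i$, and the clean one-dimensional variance bound would be lost.
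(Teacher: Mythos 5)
Your proposal is correct and follows essentially the same route as the paper's own proof: both reduce $I(\rW;\rZ_i)$ via conditioning on all other data and the chain rule over iterations, kill the terms for $t\notin\mathcal{T}_j$, bound each surviving term with the conditional Gaussian-channel estimate of Lemma~\ref{lem::ext_MI_GC_ub} followed by the law of total variance, and reassemble by symmetry within each mini-batch to get the factor $\tfrac{1}{2}$. The only difference is cosmetic: the paper writes the conditioning step as $I(\rW^{(T)};\rZ_n)\le I(\rW^{(T)},\rZ^{(n-1)};\rZ_n)$ and peels off iterations recursively, which is the same inequality as your $I(\rW;\rZ_i)\le I(\rW;\rZ_i\mid\rS_{-i})$ plus chain rule.
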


\begin{proof}
We denote $\rZ^{(k)} \defined (\rZ_1,\cdots,\rZ_{k})$ for $k\in [n]$ and $\rW^{(t)} \defined (\rW_1,\cdots,\rW_t)$ for $t\in [T]$. For simplicity, in what follows we only provide an upper bound for $I(\rW; \rZ_n)$. Since $\rW$ is a function of $\rW^{(T)} = (\rW_1,\cdots,\rW_T)$, the data processing inequality yields
\begin{align}
\label{eq::I_W_Zn_ub_WZn1Zn}
    I(\rW; \rZ_n) 
    \leq I(\rW^{(T)}; \rZ_n)
    \leq I(\rW^{(T)}, \rZ^{(n-1)}; \rZ_n).
\end{align}
By the chain rule,
\begin{align}
\label{eq::MI_chain_WT_Zn1_Zn}
    I(\rW^{(T)}, \rZ^{(n-1)}; \rZ_n)
    = I(\rW_T; \rZ_n \mid \rW^{(T-1)},\rZ^{(n-1)}) + I(\rW^{(T-1)},\rZ^{(n-1)}; \rZ_n).
\end{align}
Let $\bw=(\bw_1,\cdots,\bw_{T-1})$ and $\bz = (\bz_1,\cdots,\bz_{n-1})$ be any two vectors. If $\rZ_n$ is not used at the $T$-th iteration, without loss of generality we assume that the data points $\rZ_1,\cdots,\rZ_b$ are used in this iteration. Then
\begin{align}
    &I(\rW_T; \rZ_n \mid \rW^{(T-1)} = \bw, \rZ^{(n-1)}=\bz) \nonumber\\
    &= I\left(\bw_{T-1} - \frac{\eta_T}{b} \sum_{i = 1}^b \nabla_{\bw} \hat{\ell}(\bw_{T-1}, \bz_{i}) + \sqrt{\frac{2\eta_T}{\beta_T}} \rN_T; \rZ_n \mid \rW^{(T-1)} = \bw, \rZ^{(n-1)}=\bz\right) \nonumber\\
    &= I\left(\rN_T; \rZ_n \mid \rW^{(T-1)} = \bw, \rZ^{(n-1)}=\bz\right) \nonumber\\
    &=0.\label{eq::MI_0_Zn_not_use}
\end{align}
On the other hand, if $\rZ_n$ is used at the $T$-th iteration, without loss of generality we assume that the other $b-1$ data points which are also used in this iteration are $\rZ_1,\cdots,\rZ_{b-1}$. Then
\begin{align}
    &I(\rW_T; \rZ_n \mid \rW^{(T-1)} = \bw, \rZ^{(n-1)}=\bz) \nonumber\\
    &= I\left(\bw_{T-1} - \frac{\eta_T}{b} \left(\sum_{i=1}^{b-1} \nabla_{\bw} \hat{\ell}(\bw_{T-1}, \bz_{i}) + \nabla_{\bw} \hat{\ell}(\bw_{T-1}, \rZ_{n})\right) + \sqrt{\frac{2\eta_T}{\beta_T}} \rN_T; \rZ_n \mid \rW^{(T-1)} = \bw, \rZ^{(n-1)}=\bz\right) \nonumber\\
    &= I\left(- \frac{\eta_T}{b} \nabla_{\bw} \hat{\ell}(\bw_{T-1}, \rZ_{n}) + \sqrt{\frac{2\eta_T}{\beta_T}} \rN_T; \rZ_n \mid \rW^{(T-1)} = \bw, \rZ^{(n-1)}=\bz\right). \label{eq::MI_eq_exp_Zn_is_used}
\end{align}
By Lemma~\ref{lem::ext_MI_GC_ub}, we have
\begin{align}
    &I\left(- \frac{\eta_T}{b} \nabla_{\bw} \hat{\ell}(\bw_{T-1}, \rZ_{n}) + \sqrt{\frac{2\eta_T}{\beta_T}} \rN_T; \rZ_n \mid \rW^{(T-1)} = \bw, \rZ^{(n-1)}=\bz\right) \nonumber\\
    &\leq \frac{\beta_T \eta_T}{4b^2} \Var{\nabla_{\bw} \hat{\ell}(\bw_{T-1}, \rZ_{n})\mid \rW^{(T-1)} = \bw, \rZ^{(n-1)}=\bz}. \label{eq::MI_ub_var_Zn_is_used}
\end{align}
Substituting \eqref{eq::MI_ub_var_Zn_is_used} into \eqref{eq::MI_eq_exp_Zn_is_used} gives
\begin{align*}
    I(\rW_T; \rZ_n \mid \rW^{(T-1)} = \bw, \rZ^{(n-1)}=\bz)
    \leq \frac{\beta_T \eta_T}{4b^2} \Var{\nabla_{\bw} \hat{\ell}(\bw_{T-1}, \rZ_{n})\mid \rW^{(T-1)} = \bw, \rZ^{(n-1)}=\bz}.
\end{align*}
Taking expectation w.r.t. $(\rW^{(T-1)}, \rZ^{(n-1)})$ on both sides of the above inequality and using the law of total variance lead to 
\begin{align}
\label{eq::MI_ub_var_useZn}
    I(\rW_T; \rZ_n \mid \rW^{(T-1)}, \rZ^{(n-1)})
    \leq \frac{\beta_T \eta_T}{4b^2} \Var{\nabla_{\bw} \hat{\ell}(\rW_{T-1}, \rZ_{n})}.
\end{align}
To summarize, \eqref{eq::MI_0_Zn_not_use} and \eqref{eq::MI_ub_var_useZn} can be rewritten as
\begin{equation}
\label{eq::MI_WT_Zn_ub_var_0}
\begin{aligned}
    &I(\rW_T; \rZ_n \mid \rW^{(T-1)}, \rZ^{(n-1)})\\
    &\leq 
    \begin{cases}
    \frac{\beta_T \eta_T}{4b^2} \Var{\nabla_{\bw} \hat{\ell}(\rW_{T-1}, \rZ_{n})}\quad &\text{if } \rZ_n \text{ is used at the }T\text{-th iteration},\\
    0 &\text{otherwise.}
    \end{cases}
\end{aligned}
\end{equation}
Assume that the data point $\rZ_n$ belongs to the $j$-th mini-batch $\rS_j$. Now substituting \eqref{eq::MI_WT_Zn_ub_var_0} into \eqref{eq::MI_chain_WT_Zn1_Zn} and doing this procedure recursively lead to
\begin{align*}
    I(\rW^{(T)}, \rZ^{(n-1)}; \rZ_n)
    \leq \sum_{t\in \mathcal{T}_j} \frac{\beta_t \eta_t}{4b^2} \Var{\nabla_{\bw} \hat{\ell}(\rW_{t-1}, \rZ_{n})},
\end{align*}
where the set $\mathcal{T}_j$ contains the indices of iterations in which the mini-batch $\rS_j$ is used. Hence, this upper bound along with \eqref{eq::I_W_Zn_ub_WZn1Zn} naturally gives
\begin{align}
\label{eq::MI_ub_W_Zn_sum_Var}
    I(\rW;\rZ_n) \leq \sum_{t\in \mathcal{T}_j} \frac{\beta_t \eta_t}{4b^2} \Var{\nabla_{\bw} \hat{\ell}(\rW_{t-1}, \rZ_{n})}.
\end{align}
By symmetry, for any data point in $\rS_j$ besides $\rZ_n$, the mutual information between $\rW$ and this data point can be upper bound by the right-hand side of \eqref{eq::MI_ub_W_Zn_sum_Var} as well. Finally, recall that Lemma~\ref{lem::Bu20} provides an upper bound for the expected generalization gap:
\begin{align}
    \frac{\sqrt{2}\sigma}{n}\sum_{i=1}^n \sqrt{I(\rW_T; \rZ_i)}
    = \frac{\sqrt{2}\sigma}{n} \sum_{j=1}^m \sum_{\rZ \in \rS_j} \sqrt{I(\rW_T; \rZ)}.
\end{align}
By substituting \eqref{eq::MI_ub_W_Zn_sum_Var} into the above expression, we know the expected generalization gap can be further upper bounded by
\begin{align*}
    \frac{\sqrt{2}\sigma}{2n} \sum_{j=1}^m \sqrt{\sum_{t\in \mathcal{T}_j} \beta_t \eta_t \cdot \Var{\nabla_{\bw} \hat{\ell}(\rW_{t-1}, \bar{\rZ}_j)}},
\end{align*}
where $\bar{\rZ}_j$ is any data point in the mini-batch $\rS_j$.
\end{proof}

Finally, we are in a position to prove Proposition~\ref{prop::gen_bound_SGLD}. 
\begin{proof}
Consider a new loss function and the gradient of a new surrogate loss:
\begin{align*}
    \ell(\bw, \rS_j) 
    \defined \frac{1}{b} \sum_{\rZ \in \rS_j} \ell(\bw, \rZ),\quad
    \nabla_{\bw} \hat{\ell}(\bw, \rS_{j}) 
    \defined \frac{1}{b} \sum_{\rZ\in\rS_j} \nabla_{\bw} \hat{\ell}(\bw, \rZ).
\end{align*}
Then $\ell(\bw, \rS_j)$ is $\sigma/\sqrt{b}$-sub-Gaussian under $\rS_j\sim \mu^{\otimes b}$ for all $\bw\in\mathcal{W}$. We view each mini-batch $\rS_j$ as a data point and view $\ell(\bw, \rS_j)$ as a new loss function. By using Lemma~\ref{lem::SGLD_general}, we obtain:
\begin{align}
\label{eq::SGLD_minibatch_M}
    \left|\EE{L_\mu(\rW)-L_{\rS}(\rW)}\right|
    \leq \frac{\sqrt{2}\sigma}{2m\sqrt{b}} \sum_{j=1}^m \sqrt{\sum_{t\in \mathcal{T}_j} \beta_t \eta_t \cdot \Var{\nabla_{\bw} \hat{\ell}(\rW_{t-1}, \rS_j)}}.
\end{align}
Since the data set contains $n$ data points and is divided into $m$ disjoint mini-batches with size $b$, we have $n = mb$. Substituting this into \eqref{eq::SGLD_minibatch_M} leads to the desired conclusion.
\end{proof}

\subsection{Proof of Corollary~\ref{cor::SGLD_gen_bound_trajectory}}
\label{append::SGLD_gen_bound_trajectory}

\begin{proof}
The Minkowski inequality implies that for any non-negative $x_i$, the inequality $\sqrt{\sum_i x_i}\leq \sum_i \sqrt{x_i}$ holds. Therefore, we can further upper bound the generalization bound in Lemma~\ref{lem::SGLD_general} by
\begin{align*}
    \frac{\sqrt{2}\sigma}{2n} \sum_{j=1}^m \sum_{t\in \mathcal{T}_j} \sqrt{\beta_t \eta_t \cdot \Var{\nabla_{\bw} \hat{\ell}(\rW_{t-1}, \bar{\rZ}_j)}}
    = \frac{\sqrt{2}\sigma}{2n} \sum_{t=1}^T \sqrt{\beta_t \eta_t \cdot \Var{\nabla_{\bw} \hat{\ell}(\rW_{t-1}, \rZ^{\dagger}_{t})}}.
\end{align*}
Alternatively, by Jensen's inequality and $n=m b$, we can further upper bound the generalization bound in Lemma~\ref{lem::SGLD_general} by
\begin{align*}
    \frac{\sqrt{2}\sigma}{2} \sqrt{\frac{1}{bn}\sum_{t=1}^T\beta_t \eta_t \cdot \Var{\nabla_{\bw} \hat{\ell}(\rW_{t-1}, \rZ^{\dagger}_{t})}}.
\end{align*}
\end{proof}

\section{Supporting Experimental Results}
\label{append::experiments}

Recall that our generalization bound in Proposition~\ref{prop::gen_bound_SGLD} involves the variance of gradients. To estimate this quantity from data, we repeat our experiments 4 times and record the batch gradient at each iteration. This batch gradient is the one used for updating the parameters in the SGLD algorithm so it does not require any additional computations. Then we estimate the variance of gradients by using the population variance of the recorded batch gradients. Finally, we repeat the above procedure 4 times for computing the standard deviation, leading to e.g., the shaded areas in Figure~\ref{Fig::label_corruption}.
We provide experimental details in Table~\ref{table:label_corruption_MNIST_detail}~and~\ref{table:cnn_details} and code in \hyperlink{https://github.com/yih117/Analyzing-the-Generalization-Capability-of-SGLD-Using-Properties-of-Gaussian-Channels}{https://github.com/yih117/Analyzing-the-Generalization-Capability-of-SGLD-Using-Properties-of-Gaussian-Channels} for reproducing our experiments.

\begin{table}
\small\centering
\resizebox{0.8\textwidth}{!}{
\renewcommand{\arraystretch}{1.5}
\begin{tabular}{ll}
\toprule
Parameter & Details \\  
\toprule
Data set & MNIST\\
\midrule
Number of training data & 5000\\
\midrule
Batch size & 500\\
\midrule
Learning rate & Initialization = 0.03, decay rate = 0.96, decay steps=2000 \\
\midrule
Inverse temperature & $\beta_t = 10^{6} / (2 \eta_t)$ \\
\midrule
Architecture & MLP with ReLU activation\\
\midrule
Depth & 3 layers\\
\midrule
Width & 64 hidden units\\
\midrule
Objective function & Cross-entropy loss\\
\midrule
Loss function & 0-1 loss\\
\bottomrule 
\end{tabular}
}
\caption{\small{Experiment details of Figure~\ref{Fig::label_corruption} and \ref{Fig::Hidden_Units} on the MNIST data set. The network width is varying among $\{16,32,64,128,256\}$ hidden units for Figure~\ref{Fig::Hidden_Units}.}}
\label{table:label_corruption_MNIST_detail}
\end{table}

\begin{table}
\small\centering
\resizebox{0.8\textwidth}{!}{
\renewcommand{\arraystretch}{1.5}
\begin{tabular}{ll}
\toprule
Parameter & Details \\  
\toprule
Data set & CIFAR-10 and SVHN\\
\midrule
Number of training data & 5000\\
\midrule
Batch size & 500\\
\midrule
Learning rate  & Initialization = 0.03, decay rate = 0.96, decay steps = 2000\\
\midrule
Inverse temperature & $\beta_t = 10^6 / (2\eta_t)$ \\
\midrule
\
Architecture 
& $\mathsf{conv}(5, 32)$ $\mathsf{pool}(2)$ $\mathsf{conv}(5, 32)$ $\mathsf{pool}(2)$ $\mathsf{fc}(120)$ $\mathsf{fc}(84)$ $\mathsf{fc}(10)$\\
\midrule
Objective function & Cross-entropy loss\\
\midrule
Loss function & 0-1 loss\\
\bottomrule 
\end{tabular}
}
\caption{\small{Experiment details of Figure~\ref{Fig::label_corruption} and \ref{Fig::Hidden_Units} on the CIFAR-10 and SVHN data sets. Here $\mathsf{conv}(k, w)$ is a $k \times k$ convolutional layer with $w$ filters; $\mathsf{pool}(k)$ is a $k \times k$ max pooling layer; and $\mathsf{fc}(k)$ is a fully connected layer with $k$ units. The convolutional layers and the fully connected layers all use ReLU activation function. The network width (i.e., number of filters in CNN) is varying among $\{8,16,32,64,128\}$ for Figure~\ref{Fig::Hidden_Units}.}}
\label{table:cnn_details}
\end{table}

\clearpage
\bibliography{references}

\end{document}